\newcommand{\xmark}{\ding{55}}%
\theoremstyle{plain}
\newtheorem{theorem}{Theorem}[section]
\newtheorem{lemma}{Lemma}[section]
\newtheorem{definition}{Definition}[section]
\newtheorem{example}{Example}[section]
\newtheorem{model}{Model}[section]
\newtheorem{remark}{Remark}[section]
\begin{document}

\title{Learning stochastic differential equations using RNN with log signature features}

\author{
Shujian Liao\\
Department of Mathematics,\\
University College London
\\
{\tt\small ucahiao@ucl.ac.uk}

\and
Terry Lyons\\
Mathematical Institute,\\
University of Oxford\\
{\tt\small terry.lyons@maths.ox.ac.uk} \thanks{TL is supported by the EPSRC under the program grant EP/S026347/1 and by the Alan Turing Institute under the EPSRC grant EP/N510129/1.}
\and
Weixin Yang\\
Mathematical Institute,\\
University of Oxford\\
{\tt\small weixin.yang@maths.ox.ac.uk}\thanks{WY is supported by Royal Society Newton International Fellowship.}
\and
Hao Ni\\
Department of Mathematics,\\
University College London\thanks{HN is supported by the EPSRC under the program grant EP/S026347/1 and by the Alan Turing Institute under the EPSRC grant EP/N510129/1.}
\\
{\tt\small h.ni@ucl.ac.uk}

}

\maketitle


\begin{abstract}

This paper contributes to the challenge of learning a function on streamed multimodal data through evaluation. The core of the result of our paper is the combination of two quite different approaches to this problem. One comes from the mathematically principled technology of signatures and log-signatures as representations for streamed data, while the other draws on the techniques of recurrent neural networks (RNN). The ability of the former to manage high sample rate streams and the latter to manage large scale nonlinear interactions allows hybrid algorithms that are easy to code, quicker to train, and of lower complexity for a given accuracy. 

We illustrate the approach by approximating the unknown functional as a controlled differential equation. Linear functionals on solutions of controlled differential equations are the natural universal class of functions on data streams. Following this approach, we propose a hybrid Logsig-RNN algorithm (Figure \ref{RNN_SDE}) that learns functionals on streamed data . By testing on various datasets, i.e. synthetic data, NTU RGB+D 120 skeletal action data, and Chalearn2013 gesture data, our algorithm achieves the outstanding accuracy with superior efficiency and robustness.
\end{abstract}
\section{Introduction}
The relationship between neural networks and differential equations is an active area of research (\cite{weinan2017proposal}, \cite{lu2017beyond}, \cite{chen2018neural}). For example, Funahashi et al. introduced the continuous recurrent neural network(RNN) \cite{funahashi1993approximation}; He et al. connect residual networks and discretized ODEs \cite{weinan2017proposal}. A typical continuous RNN has the form 
\begin{eqnarray}\label{eqn_ODE}
\dot{Y}_{t} = - \frac{Y_{t}}{\tau} + A \sigma( B Y_{t}) + I_{t}, 
\end{eqnarray}
where $I_{t}$ and $Y_{t}$ are an input and output at time $t$ respectively\footnote{$\tau$ is a constant, $A$ and $B$ are matrices and $\sigma$ is an activation function.}.
Rough Path Theory teaches us that is more robust to consider the differential equation of the type 
\begin{eqnarray}\label{Eqn_Control}
dY_{t} = V(Y_{t})dX_{t},
\end{eqnarray}
and replace $I$ as an input with its integral. We can rewrite \eqref{eqn_ODE} in this form by setting $X_{t} =(t, \int_{s=t_{0}}^{t}I_{s}ds) $ and $V(y, (t, x)) = -\frac{y}{\tau}t + A\sigma(By)t + x$. 
This allows the input to be of a broader type, and $X$ need not even be differentiable for the equation to be well defined. $Y$ inherits its regularity from $X$; equations in this form admit uniform estimates when $X$ is a rough path (highly oscillatory and potentially non-differentiable). This reformulation provides a much broader class of mathematical models for functionals on streamed data, of which the continuous RNN is a special case. 

In \cite{lyons1998differential} Lyons gives a deterministic pathwise definition to Equation \eqref{Eqn_Control} driven by rough signals. This analysis applies to almost all paths of e.g. vectored valued Brownian motion, diffusion processes, and also to many processes outside the SDE case, paths rougher than semi-martingales. \cite{lyons1998differential} articulates that in order to control the solution to Equation \eqref{Eqn_Control}, it suffices to control the $p$-variation and the iterated integrals of $X$ (the \emph{signature} of $X$) up to degree $\lfloor p \rfloor$. Crucially these estimates allow $p>>1$ and allow accurate descriptions of $Y_t$ to emerge from the coarse global descriptions of $X$ and its oscillations given by the signature. The \emph{log-signature} carries the exactly the same information as the signature but is considerably more parsimonious; it is a second mathematically principled transformation, and like the signature, it is able to summarize and vectorize complex un-parameterized streams of multi-modal data effectively over a \emph{coarse} time scale with a \emph{low dimensional} representation.

One area where this has been worked out in detail is with the numerical analysis of stochastic differential equations (SDEs). The most effective high order numerical approximation schemes for SDEs show that describing a path through the log-signature enables one to effectively approximate the solution to the equation and any linear functional of that solution globally over interval the path is defined on, without further dependence on the fine details of the \emph{recurrent structure} of the streamed data. It leads (in what is known as the log-ode method) to produce a state-of-the-art discretization method of of Inhomogenous Geometric Brownian Motion (IGBM)\cite{foster2019optimal}.
 We exploit this understanding to propose a simply but surprisingly effective neural network module (Logsig-RNN) by blending the Log-signature (Sequence) Layer with the RNN layer (see Figure \ref{RNN_SDE}) as an universal model for functionals on un-parameterized (and potentially complex) streamed data.

\begin{figure}[!ht]
		 \centering
		\includegraphics[width= 0.5 \textwidth]{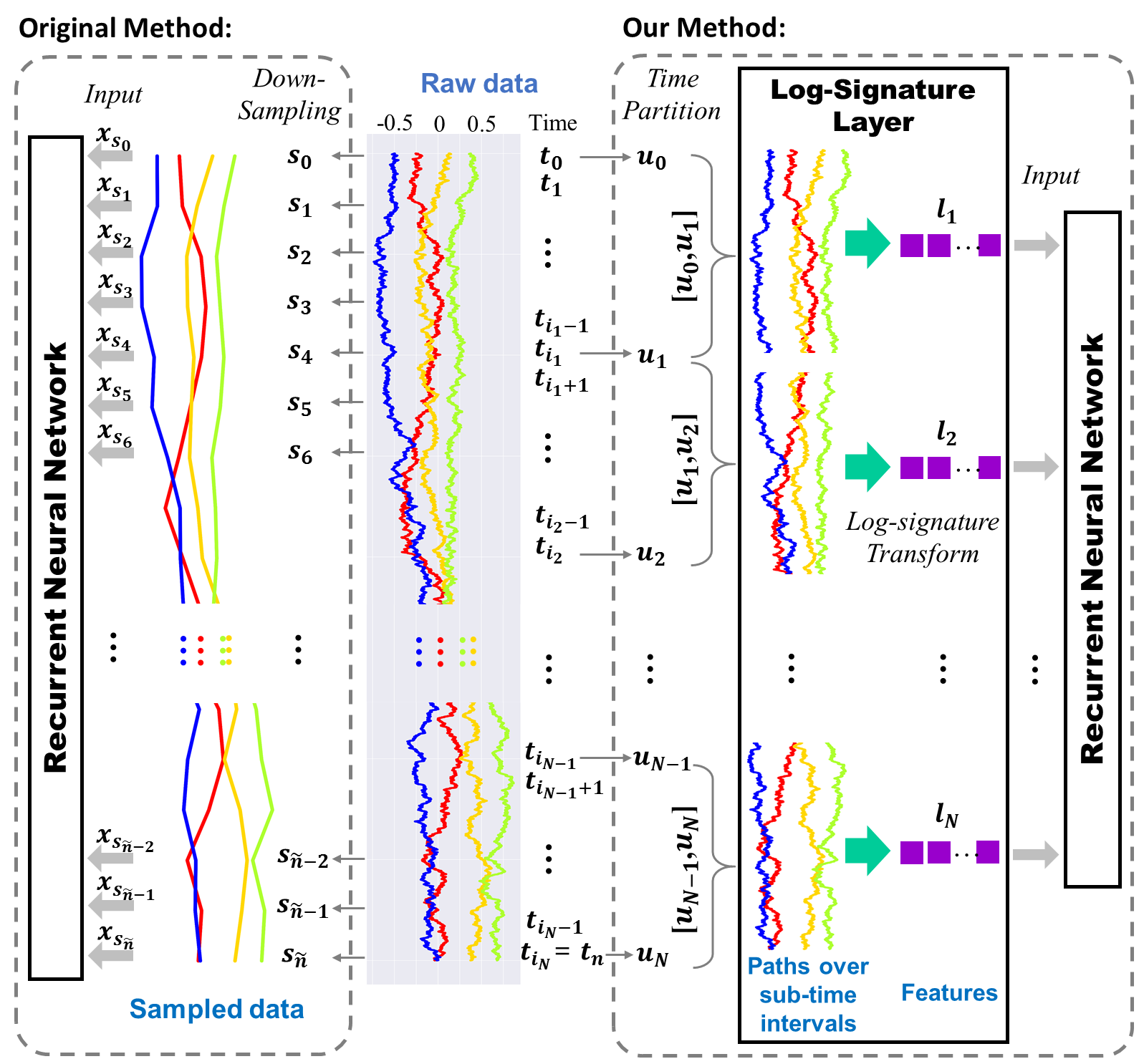}
	\caption{Comparison of Logsig-RNN and RNN.}\label{RNN_SDE}
\end{figure}
The Logsig-RNN network has the following advantages:
\begin{enumerate}
    \item \textbf{Time dimension reduction}: The Log-Signature Layer transforms a high frequency sampled time series to a sequence of the log-signatures over a potentially much coarser time partition. It reduces the time dimension of RNN significantly and thus speed up training time. 
    \item \textbf{High frequency and continuous data}: For the high frequency data case, the RNN type approach suffers from severe limitations, when applied directly \cite{donoho2000high}. In this case, one has to down-sample the stream data to a coarser time grid to feed it into the RNN-type algorithm (Figure \ref{RNN_SDE} (Left)). It may miss the microscopic characteristic of the streamed data and render lower accuracy. The Logsig-RNN model can handle  such case or even continuous data streams very well (Figure \ref{RNN_SDE}). 
    \item \textbf{Robustness to missing data}. Compared with the signature feature set, the log-signature is a parsimonious representation of the signature \cite{reutenauer2003free}, and empirically proved more robust to missing data. We validate the robustness of the Logsig-RNN model on various datasets, which outperforms that of the RNN model significantly.
    \item \textbf{Highly Oscillatory Stream data}: There is a fundamental issue when one accesses a highly oscillatory stream through sampling. It is quite possible for two streams to have very different effects and yet have near identical values when sampled at very fine levels \cite{friz2015physical}. 
    Therefore, to model a functional on a general highly oscillatory stream, the RNN on the sampled stream data would be challenged, requiring huge amounts of augmentation, and very fine sampling to be effective. In contrast, the rough path theory shows that if one postulates the (log)-signature of streamed data up in advance, the Logsig-RNN model can be much effective.   
\end{enumerate}

In summary, the main contributions of the paper are listed as follows:
\begin{enumerate}
\item to introduce the Log-signature (Sequence) Layer as a transformation of sequential data, and outline its back-propagation through time algorithm. It is highlighted that the Log-signature Layer can be inserted between other neural network layers conveniently, not limited to the pre-defined feature extraction.  
\item to design the novel neural network model (Logsig-RNN model) by blending the log-signature layer with RNN (Section \ref{SectionInference}) and prove the universality of the Logsig-RNN model for the approximation any solution map to the SDEs (Theorem \ref{UniversalityLogsigRNN});
\item to propose the PT-Logsig-RNN model by adding the linear project layer in front of the Logsig-RNN architecture to tackle the case for the high dimensional input path (Section \ref{Ch_LP_Logsig_RNN}). 
\item to apply the Logsig-RNN algorithm to both synthetic data and empirical data to demonstrate its superior accuracy, effectiveness and robustness (Section \ref{SectionNumerics}). We achieve the state-of-the-art classification accuracy $93.27\%$ on ChaLearn2013 gesture data by the PT-Logsig-RNN model (Section \ref{chalearn2013}).
\end{enumerate}

\subsection{Related Work}
\subsubsection{Learning SDEs}
SDEs of the form \eqref{Eqn_Control} are useful tools for modelling random phenomena and provide a general class of functionals on the path space. SDEs not only are commonly used as models for the time-evolving process of many physical, chemical and biological systems of interacting particles \cite{gardiner1985handbook}, but also are the foundational building blocks in the derivatives pricing theory, an area of huge financial impact (\cite{black1973pricing}, \cite{merton1973theory}, \cite{cox1976valuation}). Statistical inference for SDEs has rich literature due to the importance of research outcomes and applications (see \cite{bishwal2007parameter} for the survey and overview). Most of the research focuses on the parameter estimation of (model-specific) stochastic processes; in particular \cite{papavasiliou2011parameter} is the pioneering work for the parameter estimation for a general stochastic process, which goes beyond diffusion processes by matching expected signature of the solution process. However, in contrast to these work, our approach is non-parametric and is used to learn the solution map without any assumption on the distribution of the stochastic process.  

\subsubsection{Rough paths theory in machine learning} Recently the application of the rough path theory in machine learning has been an emerging and active research area. The empirical applications of the rough paths theory primarily focused on the \emph{signature} feature, which serves as an effective feature extraction, e.g. online handwritten Chinese character/text recognition(\cite{graham2013sparse}, \cite{xie2018learning}), action classification in videos \cite{yang2017leveraging}, and financial data analysis (\cite{gyurko2013extracting}, \cite{lyons2014feature}). In addition, those previous work mainly combine the signature with the convolutional neural network or fully connected neural network. To our best knowledge, the proposed method is the first of the kind to integrates the sequence of log signature with the RNN. The log-signature brings many benefits (Section 2.3). The log-signature has been used as a local feature descriptor for gesture \cite{li2017lpsnet} and action recognition \cite{yang2017leveraging}. These used cases are bespoke; in contrast, the proposed Logsig-RNN is a general method for sequential data with outstanding performance in various datasets (See Section \ref{SectionNumerics}). Moreover, we extend the work on the back-propagation algorithm of the log-signature transformation in \cite{reizenstein2018iisignature} to the \emph{sequence} of the log-signature. 

\subsubsection{Time series modelling}
In \cite{levin2013learning} Levin et al. firstly proposed the signature of a path as the basis functions for a functional on the un-parameterized path space and suggested the first non-parametric model for time series modelling by combining signature feature and the linear model (Sig-OLR). However, Sig-OLR has the limitation of inefficient global approximation due to the instability of the polynomial extrapolation. Despite the successful empirical applications of the signature feature sets (\cite{graham2013sparse}, \cite{xie2018learning}, \cite{yang2017leveraging}), the theoretical question on which learning algorithms are most appropriate to be combined with the (log)-signature feature remains open. Our work is devoted to answering this question with both theoretical justification and promising numerical evidence. 

\subsubsection{Functional Data Analysis}
Learning a functional on streamed data falls under the category of the functional data analysis (FDA) \cite{muller2011functional}, which models data using functions or functional parameters and analyse data providing information about curves, surfaces or anything else varying over a continuum. The representation theory of the functional on functions plays an important role in FDA study. Functional principal components analysis (\cite{silverman1996smoothed}) is one of the main techniques of FDA to represent the function data, which express the function data as the linear coefficients of the basis functions(usually without taking into account the response variable corresponding to the function input). In contrast to it,  albeit taking the functional view of sequential data, our approach focuses on the representation of the path in terms of its effect (functional on the path, i.e . the solution of the controlled differential equation driven by the path).

\section{The Log-Signature of a Path}\label{SectionPreliminaries}
Consider a continuous time series $x$ over the time interval $J:=[S, T]$ built at some very fine scale out of time stamped
values $x^{\hat{\mathcal{D}}} = [x_{t_{1}}, x_{t_{2}}, \cdots, x_{t_{n}}]$, where $\hat{\mathcal{D}} = (t_{1}, \cdots, t_{n})$.  When $x$ is highly oscillatory, to well capture effects of $x$, classical approaches requires sampling $x$ at high frequency, or even collect all the ticks. In this section, we takes the functional view on $x^{\hat{\mathcal{D}}}$ by embedding it to a continuous path by interpolation for a unified treatment (See detailed discussion in Section 4 of \cite{levin2013learning}). We start with the introduction to $p$-variation to measure the roughness of a continuous path. Then we introduce a graded feature, so-called log-signature feature as an effective and high order summary of a path of finite $p$-variation over time intervals. It follows with the key properties of the log-signature in machine learning applications.

\subsection{A Path with finite $p$-variation}
Let $E:=\mathbb{R}^{d}$ and $X: J \rightarrow E$ be a continuous path endowed with a norm denoted by $\vert \cdot \vert$. To make precise about the class of paths we discuss throughout the paper, we introduce the $p$-variation as a measure of the roughness of the path.
\begin{definition} [$p$-Variation]\label{p_variation}
Let $p\geq 1$ be a real number. Let $X: J \rightarrow E$ be a continuous path. The $p$-variation of $X$ on the interval $J$ is defined by 
\begin{equation}
    \vert\vert X\vert\vert_{p,J}=\left[ \sup_{\mathcal{D}\subset J}\sum_{j=0}^{r-1}\left\vert X_{t_{j+1}}-X_{t_j}\right\vert^p\right]^{\frac{1}{p}},
\end{equation}
where the supremum is taken over any time partition of $J$, i.e. $\mathcal{D} = (t_{1}, t_{2}, \cdots, t_{r})$.  \footnote{
Let $J = [S, T]$ be a closed bounded interval. A time partition of $J$ is an increasing sequence of real numbers $\mathcal{D} = (t_{0}, t_{1}, \cdots, t_{r})$ such that $S = t_{0} < t_{1}< \cdots < t_{r} = T$. Let $\vert \mathcal{D} \vert$ denote the number of time points in $\mathcal{D}$, i.e. $\vert \mathcal{D} \vert = r+1$. $\Delta \mathcal{D}$ denotes the time mesh of $\mathcal{D}$, i.e. $\Delta  \mathcal{D}:=\overset{r-1}{\underset{i = 0}{\max}} (t_{i+1} - t_{i})$.}
\end{definition}
Let $V_{p}(J,E)$ denote the range of any continuous path mapping from $J$ to $E$ of finite $p$-variation. The larger $p$-variation is, the rougher a path is. The compactness of the time interval $J$ can't ensure the finite $1$-variation of a continuous path in general (See Example \ref{ex2_infinite_length}).

\begin{example}\label{ex2_infinite_length}
	A fractional Brownian motion (fBM) with Hurst parameter $H$ has sample paths of finite $p$-variation a.s. for $p > \frac{1}{H}$. The larger $H$ is, the rougher fBM sample path is. For example, Brownian motion is a fBM with $H = 0.5$. It has finite $(2+\varepsilon)$-variation a.s $\forall \varepsilon >0$, but it has infinite $p$-variation a.s.$\forall p \in [1, 2]$.  
\end{example}

For each $p \geq 1$, the $p$-variation norm of a path $X: J \rightarrow E$ of finite $p$-variation is denoted by $\vert\vert X \vert\vert_{p-var}$ and defined as follows:
\begin{eqnarray*}
	\vert \vert X \vert\vert_{p-var} = \vert\vert X \vert\vert_{p, J} + \sup_{t \in J} \vert\vert X_{t}\vert\vert.
\end{eqnarray*}

\subsection{The log-signature of a path}
We introduce the signature and the log signature of a path, which take value in the tensor algebra space denoted by $T((E))$ endowed with the tensor multiplication and componentwise addition\cite{RoughPaths}.

\begin{definition}[The Signature of a Path]
Let $J$ denote a compact interval and $X : J \rightarrow E$ be a continuous path with finite $p$-variation such that the following integration makes sense. Let $I=(i_{1}, i_{2}, \cdots, i_{n})$ be a multi-index of length $n$ where $i_{j} \in \{1, \cdots, d\}, \forall j \in \{1, \cdots, n\}$. Define the coordinate signature of the path $X_{J}$ associate with the index $I$ as follows:
\begin{equation*}
    X^{I}_{J} = \underset{\underset{u_{1}, \dots, u_{k} \in J}{u_{1} < \dots < u_{k}}} { \int \dots \int} dX_{u_{1}}^{(i_{1})} \otimes \dots \otimes dX_{u_{n}}^{(i_{n})}.
\end{equation*}
The signature of $X$ is defined as follows:
\begin{equation}
    S(X)_J = (1, \mathbf{X}_J^1, \dots, \mathbf{X}_J^k, \dots).
\end{equation}
where $\mathbf{X}_J^k  = (X^{I}_{J})_{I = (i_{1}, \cdots, i_{k})}, \forall k \geq 1$. Let $S_{k}(X)_{J}$ denote the truncated signature of $X$ of degree $k$, i.e.
\begin{equation}
    S_{k}(X)_J = (1, \mathbf{X}_J^1, \dots, \mathbf{X}_J^k).
\end{equation}
\end{definition}
\begin{example}
When $X$ is a multi-dimensional Brownian motion, the above integration can be defined in both Stratonovich and It\^o sense. It is because that the Brownian motion has samples of infinite $p$-variation for  $p \in [ 1, 2]$ a.s (\cite{RoughPaths}).
\end{example}

\begin{remark}[The Signature of Discrete Time Series] The discrete version of a path $x^{\mathcal{D}}$ is of finite $1$-variation. Thus the signature of $x^{\mathcal{D}}$ is well defined. \textbf{It is highlighted that $S(x^{\mathcal{D}})$ is NOT the collection of all the monomials of discrete time series!} The dimension of all monomials of $x^{\mathcal{D}}$ grow with $\vert \mathcal{D} \vert$, while the dimension of $S_{k}(x^{\mathcal{D}})$ is invariant to $\vert \mathcal{D} \vert$.
\end{remark}
The signature of a path has many good properties, which makes the signature an efficient representation of an un-parameterized path. We refer readers to \cite{levin2013learning} for an introduction to the signature feature in machine learning.

The logarithm of the element in $T((E))$ is defined similar to the power series of the logarithm of a real value except for the multiplication is understood in the tensor product sense. 
\begin{definition}[Logarithm map]\label{eqn_log}
	Let $a = (a_{0}, a_{1}, \cdots) \in T((E))$ be such that $a_{0} = 1$ and $t = a - 1$. Then the logarithm map denoted by $\log$ is defined as follows:
	\begin{eqnarray}
	\log(a) = \log(1+t) = \sum_{n = 1}^{\infty} \frac{(-1)^{n-1}}{n} t^{\otimes n}, \forall a \in T((E)).
	\end{eqnarray}
\end{definition}

\begin{lemma}
	The logarithm map is bijective on the domain $\{ a \in T((E))\vert a_{0} = 1\}$. 
\end{lemma}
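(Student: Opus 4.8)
The plan is to exhibit an explicit two-sided inverse, namely the exponential map. Write $G := \{a \in T((E)) : a_0 = 1\}$ and $H := \{b \in T((E)) : b_0 = 0\}$, and for $b \in H$ define $\exp(b) = \sum_{n=0}^{\infty} \frac{1}{n!} b^{\otimes n}$. The first step is to check that $\log$ and $\exp$ are well-defined maps $G \to H$ and $H \to G$ respectively. The crucial point is the grading of $T((E)) = \prod_{k \geq 0} E^{\otimes k}$: if $t \in H$, then $t^{\otimes n}$ has vanishing components in every tensor degree strictly less than $n$, so in the defining series of $\log(1+t)$ (and likewise of $\exp(b)$) only finitely many terms contribute to each fixed degree $k$, namely those with $n \leq k$. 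Hence the sums converge componentwise, $\log(1+t)$ has zero constant term (so lands in $H$), and $\exp(b)$ has constant term $1$ (so lands in $G$).

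The second and main step is to prove $\exp \circ \log = \mathrm{id}_{G}$ and $\log \circ \exp = \mathrm{id}_{H}$. Here I would reduce to the classical one-variable identities between formal power series, $\exp(\log(1+X)) = 1+X$ and $\log(\exp(X)) = X$ in $\mathbb{Q}[[X]]$. The observation is that every product occurring in the expansion of $\exp(\log(a))$ is a tensor power of the single element $t = a-1$, and all such powers lie in the commutative subalgebra of $T((E))$ generated by $t$, on which the tensor (concatenation) product behaves exactly like multiplication of power series in one variable $X \mapsto t$. Substituting $t$ for $X$ in $\exp(\log(1+X)) = 1+X$ then yields $\exp(\log(a)) = a$, and the substitution is legitimate precisely because, by the grading argument of the first step, the rearrangement of the double series implicit in composing the two power series involves only finite sums in each degree. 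Applying the same reasoning with $\exp$ taken first gives $\log(\exp(b)) = b$ for all $b \in H$.

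Finally, the existence of a two-sided inverse shows that $\log : G \to H$ is a bijection, and since the stated domain $\{a \in T((E)) : a_0 = 1\}$ is exactly $G$, this is the claim. I expect the only genuine subtlety to be the bookkeeping in the second step: one must fix precisely the notion of convergence on $T((E))$ (degreewise, i.e. the product topology) and verify that composition of the two formal series is continuous for it, so that the transfer of identities from $\mathbb{Q}[[X]]$ to the subalgebra generated by $t$ is valid; the remaining verifications are routine consequences of the grading.
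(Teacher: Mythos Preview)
Your approach is correct and matches the paper's treatment: the paper does not give a detailed proof of this lemma in the main text, but in the appendix it states (again without proof) that the inverse of the logarithm is the exponential map, which is precisely the two-sided inverse you construct. Your proposal supplies the missing details---the degreewise convergence from the grading and the reduction to the one-variable identities in $\mathbb{Q}[[X]]$---that the paper omits, so your argument is a fleshed-out version of exactly the route the paper gestures at.
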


\begin{definition}[The Log Signature of a Path] The log signature of path $X$ by $\log(S(X))$ is the logarithm of the signature of the path $X$, denoted by $lS(X)$. Let $lS_{k}(X)$ denote the truncated log signature of a path $X$ of degree $k$.
\end{definition}
\subsection{Properties of the log-signature}
We summarize the key properties of the log-signature, which make the log-signature as a principled and effective summary of streamed data over intervals. In addition, we highlight the comparison of properties of the log-signature and the signature. More details on properties of (log)-signature can be found in the appendix, and the illustrative examples of pendigit data is provieded by pendigit$\_$demo.ipdb.\footnote{ pendigit$\_$demo.ipdb can be found via the github link: \url{https://github.com/logsigRNN/learn_sde/blob/master/Pen-digit_learning/pendigit_demo.ipynb}.}
\subsubsection{Bijection between the signature and the log-signature}
As the logarithm map is bijective, there is one-to-one correspondence between the signature and the log-signature \cite{RoughPaths}. The statement is also true for the truncated signature and log-signature up to the same degree. For example, by projecting both sides of Eqn \eqref{eqn_log} to $E$, the first level of signature and log-signature are both increments of the path $X_{T} - X_{S}$. For the second level of the signature $(X^{(i,j)_{S,T}})_{i, j \in =1}^{d}$, it can be decomposed to its symmetric and anti-symmetric parts
\begin{equation}
    X^{(i, j)}_{S, T} = \frac{1}{2}X^{(i)}_{S, T} X^{(j)}_{S, T} + A^{(i,j)}_{S, T},
\end{equation}
where 
$A_{S,T}^{(i,j)} = \frac{1}{2}\left(X^{(1, 2)} - X^{(2, 1)}\right).$
It is noted that $A^{(i, i)} = 0$ and $lS_{2} = A^{(1,2)}[e_{1}, e_{2}]$. This is an example to show that the signature and log signature(lower dimension) is a bijection up to degree $2$. 
\subsubsection{Dimension Reduction}
The log-signature is a parsimonious representation for the signature feature, which is of lower dimension compared with the signature feature in general. It be used for significant dimension reduction. Let us consider the linear subspace of $T((E))$ equipped with the Lie bracket operation $[., .]$, defined as follows:
\begin{eqnarray*}
[a, b] = a \otimes b - b \otimes a.
\end{eqnarray*}

\begin{theorem}\label{theorem_log_sig_Lie_Series}(Theorem 2.3, \cite{RoughPaths})
For any path $X$ of finite $1$-variation , there exist $\lambda_{i_{1}, \cdots, i_{n}}$ such that the log-signature of $X$ can be expressed in the following form\footnote{The equivalent statement is that the log signature is a Lie series, which is defined in Definition \ref{def_lie_seires}.}
\begin{eqnarray}
lS(X) = \sum_{i = 1}^{d} \lambda_{i}e_{i}+  \sum_{\substack{n \geq 2 \\e_{i_1}, \cdots, e_{i_n}\\ \in \{1, \cdots, d\}}} \lambda_{i_{1}, , \cdots, i_{n}}[e_{i_{1}}, [\cdots, [e_{n-1}, e_{n}]]].\nonumber
\end{eqnarray}
\end{theorem}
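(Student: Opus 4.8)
The plan is to reduce the claim to the case of piecewise linear paths, handle that case with the Baker--Campbell--Hausdorff (BCH) formula, and then pass to the limit, working level by level so that every algebraic object in sight is finite dimensional. Fix a degree $N \geq 1$ and work in the truncated tensor algebra $T^{(N)}(E) = \bigoplus_{k=0}^{N} E^{\otimes k}$. There the exponential and the logarithm of Definition \ref{eqn_log} are mutually inverse polynomial (hence continuous) bijections between $\{a_0 = 1\}$ and $\{a_0 = 0\}$, and the truncated free Lie algebra $\mathcal{L}^{(N)}(E)$ --- the smallest linear subspace containing $E = E^{\otimes 1}$ and stable under $[\cdot,\cdot]$ --- is a finite-dimensional, therefore closed, subspace. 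It suffices to prove that the degree-$\leq N$ truncation of $lS(X)$ lies in $\mathcal{L}^{(N)}(E)$ for every $N$, since $\bigcup_N \mathcal{L}^{(N)}(E)$ is exactly the set of Lie series, and a basis of each graded piece $\mathcal{L}_n(E)$ may be extracted from the right-normed brackets $[e_{i_1},[\cdots,[e_{i_{n-1}},e_{i_n}]\cdots]]$ (Dynkin/Specht--Wever), which is precisely the expansion asserted in the statement.

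Next I would approximate $X$ by its piecewise linear interpolations $X^{\mathcal{D}_m}$ along partitions $\mathcal{D}_m$ of $J$ with mesh $\Delta\mathcal{D}_m \to 0$. On a single linear segment with increment $\delta_j := X_{t_{j+1}} - X_{t_j} \in E$, the truncated signature equals $\exp(\delta_j)$, and $\delta_j$ is trivially a Lie element. By Chen's identity, $S(X^{\mathcal{D}_m}) = \exp(\delta_1) \otimes \cdots \otimes \exp(\delta_r)$ in $T^{(N)}(E)$. Applying the BCH formula repeatedly, a finite product of exponentials of Lie elements is the exponential of a single Lie element $z_m \in \mathcal{L}^{(N)}(E)$; crucially, in the truncated algebra the BCH series terminates, so there is no convergence question at this stage. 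Hence $\log S(X^{\mathcal{D}_m}) = z_m \in \mathcal{L}^{(N)}(E)$.

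Finally I would pass to the limit. Because $X$ has finite $1$-variation, the truncated signatures of the interpolations converge, $S(X^{\mathcal{D}_m}) \to S(X)$ in $T^{(N)}(E)$ (the standard factorial-decay estimates on iterated integrals give continuity of the truncated signature under refinement of the interpolation; see \cite{RoughPaths}). Continuity of $\log$ on $\{a_0 = 1\}$ then yields $z_m = \log S(X^{\mathcal{D}_m}) \to lS(X)$, and since $\mathcal{L}^{(N)}(E)$ is closed we conclude $lS(X) \in \mathcal{L}^{(N)}(E)$. Letting $N \to \infty$ and reading off coordinates in the bracket basis produces the coefficients $\lambda_{i_1,\dots,i_n}$.

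The main obstacle, though a mild one once the level-by-level viewpoint is adopted, is keeping the BCH step and the limiting step simultaneously rigorous: one must truncate first so the BCH expansion is a finite sum and all spaces are finite dimensional (making continuity of $\log$ and closedness of $\mathcal{L}^{(N)}(E)$ automatic), and one must separately invoke continuity of the truncated signature under piecewise linear approximation of a finite-$1$-variation path. A more algebraic alternative avoids BCH altogether: show directly that $S(X)$ satisfies the shuffle product identity, so that $S(X)$ is group-like, deduce that $lS(X)$ is primitive, i.e. $\Delta\, lS(X) = lS(X) \otimes 1 + 1 \otimes lS(X)$, and then invoke Friedrichs' criterion that the primitive elements of $T((E))$ are exactly the Lie series. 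I would present the BCH-and-limit argument as the main line, since it uses only Chen's identity and the logarithm map already introduced, and relegate the Hopf-algebraic route to a remark.
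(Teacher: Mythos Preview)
The paper does not supply its own proof of this theorem; it is quoted as Theorem~2.3 of \cite{RoughPaths} (and restated in the appendix as Theorem~2.23 of the same reference) without any argument. There is therefore nothing in the paper itself to compare your proposal against.

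Your argument is correct and is one of the two standard routes. Truncating to $T^{(N)}(E)$ first, so that the BCH expansion is a finite sum and $\mathcal{L}^{(N)}(E)$ is a closed (finite-dimensional) subspace, is exactly the right device to make the limiting step rigorous; the convergence $S_{N}(X^{\mathcal{D}_m}) \to S_{N}(X)$ for a finite-$1$-variation path is available from the factorial estimates in \cite{RoughPaths}, and continuity of the truncated logarithm then closes the argument. For context, the proof in the cited reference follows the alternative you relegate to a remark: one shows that the signature is group-like via the shuffle identity (which the present paper also records), deduces that $lS(X)$ is primitive, and invokes Friedrichs' criterion identifying primitive elements of $T((E))$ with Lie series. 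That route avoids any approximation or limit at the cost of importing a Hopf-algebra fact; your BCH-and-limit line is more self-contained relative to the tools already introduced in this paper (Chen's identity, the exponential and logarithm maps), so your choice of emphasis is defensible.
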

The above theorem shows that the dimension of the truncated log-signature is no greater than that of the truncated signature due to the linear dependence of $[e_{i_{1}}, [e_{i_{2}}\cdots, [e_{n-1}, e_{n}]]]$. For example, $[e_{i}, e_{j}] = -[e_{j}, e_{i}]$. The analytic formula for the dimension of the truncated log signature can be found in Theorem \ref{dim_hall}. 

\subsubsection{Invariance under time parameterization}
We say that a path $\tilde{X}: J \rightarrow E$ is the time re-parameterization of $X:J \rightarrow E$ if and only if there exists a non-decreasing surjection $\lambda: J \rightarrow J$ such that $\tilde{X}_{t} = X_{\lambda(t)}$, $\forall t \in J$. 
\begin{lemma}\label{LogSigTimeInvariance}
	Let $X \in V_{1}(J, E)$  and a path $\tilde{X}: J \rightarrow E$ is the time re-parameterization of $X$. Then 
	\begin{eqnarray}\label{sig_time_inv}
	lS(X)_{J} = lS(\tilde{X})_{J}.
	\end{eqnarray} 
\end{lemma}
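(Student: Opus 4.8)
The plan is to reduce the claim to the corresponding invariance statement for the signature, and then transport it through the logarithm map. Recall from Definition \ref{eqn_log} that $\log$ is a fixed map on $\{a \in T((E)) : a_0 = 1\}$, independent of any path or its parameterization, and that $lS(\cdot) = \log(S(\cdot))$. Hence, once we show $S(X)_J = S(\tilde X)_J$, we immediately obtain $lS(X)_J = \log(S(X)_J) = \log(S(\tilde X)_J) = lS(\tilde X)_J$, which is \eqref{sig_time_inv}. So the whole content of the lemma is the invariance of the signature itself under time re-parameterization.

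To establish $S(X)_J = S(\tilde X)_J$, I would first record two elementary facts about a non-decreasing surjection $\lambda : J \to J$: it is necessarily continuous (a jump would leave an open sub-interval of values unattained, contradicting surjectivity) and it satisfies $\lambda(S) = S$, $\lambda(T) = T$. Then I would use the ODE characterization of the signature. Write $Y_t := S(X)_{[S,t]}$, the unique solution in $T((E))$ of the linear equation $dY_t = Y_t \otimes dX_t$ with $Y_S = 1$; likewise let $\tilde Y_t := S(\tilde X)_{[S,t]}$ solve $d\tilde Y_t = \tilde Y_t \otimes d\tilde X_t$, $\tilde Y_S = 1$. Since $X$ has finite $1$-variation, these are classical Riemann--Stieltjes (Young) integral equations, and the change-of-variable formula for Stieltjes integrals under the continuous monotone map $\lambda$ gives $d\bigl(Y_{\lambda(t)}\bigr) = Y_{\lambda(t)} \otimes dX_{\lambda(t)} = Y_{\lambda(t)} \otimes d\tilde X_t$, with $Y_{\lambda(S)} = Y_S = 1$. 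Thus $t \mapsto Y_{\lambda(t)}$ solves exactly the equation characterizing $\tilde Y$; by uniqueness (the system is triangular level by level, so existence and uniqueness are immediate) we conclude $\tilde Y_t = Y_{\lambda(t)}$ for all $t \in J$, and evaluating at $t = T$ yields $S(\tilde X)_J = \tilde Y_T = Y_{\lambda(T)} = Y_T = S(X)_J$.

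An equivalent route, if one prefers a direct argument, is induction on the tensor degree $n$: substituting $u_k = \lambda(v_k)$ in the iterated integral $X^I_J$, peeling off the integrals from the innermost outward and applying the Stieltjes change of variables at each stage, one matches $X^I_J$ with $\tilde X^I_J$ for every multi-index $I$. Either way, the one point needing genuine care — the main obstacle — is the behaviour of $\lambda$ on its flat segments: on any interval where $\lambda$ is constant, $\tilde X$ is constant, so its Stieltjes differential and all its increments vanish there, and one must verify that these flat pieces neither create spurious contributions nor invalidate the change-of-variables identity. For paths of finite $1$-variation this is standard — it amounts to the fact that the pushforward under $\lambda$ of the Lebesgue--Stieltjes measure of $\tilde X$ is the Lebesgue--Stieltjes measure of $X$ — but it is precisely the step where the hypotheses $X \in V_1(J,E)$ and $\lambda$ a non-decreasing surjection are actually used, so it is worth spelling out. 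Everything else is routine.
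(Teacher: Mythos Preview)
Your proposal is correct and follows exactly the paper's route: the paper states that the lemma ``is an immediate consequence of the bijection between the signature and log-signature, and the invariance of the signature (Lemma~\ref{SigTimeInvariance})'', which is precisely your reduction $lS(X)_J = \log(S(X)_J) = \log(S(\tilde X)_J) = lS(\tilde X)_J$. The only difference is that you go further and actually supply a proof of the signature invariance itself (via the ODE characterization and Stieltjes change of variables), whereas the paper simply cites that fact from \cite{RoughPaths}; your added detail is sound but not required here.
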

It is an immediate consequence of the bijection between the signature and log-signature, and the invariance of the signature (Lemma \ref{SigTimeInvariance}).
Re-parameterizing a path does not change its (log)-signature. In Figure \ref{TimeParameterizationInvariance}, speed changes result in different time series representation but the same (log)signature feature. The (log)-signature feature can remove the redundancy caused by the speed of traversing the path, which brings massive dimension reduction benefit. 

\begin{figure}
	\centering
	\begin{minipage}[c]{0.22\textwidth}
		\includegraphics[width= \textwidth]{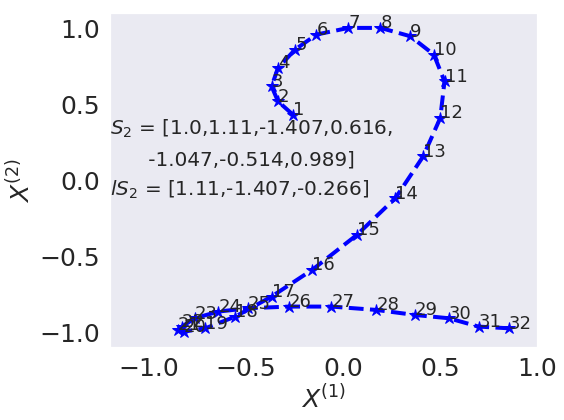}    
	\end{minipage}
	\quad
	\begin{minipage}[c]{0.22\textwidth}
		\includegraphics[width= \textwidth]{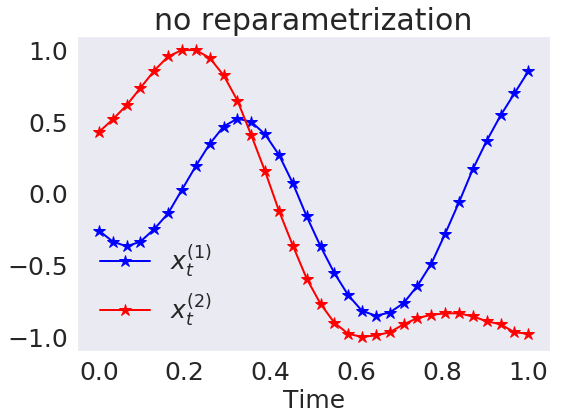}
	\end{minipage}
	\quad
	\begin{minipage}[c]{0.22\textwidth}
			\includegraphics[width= \textwidth]{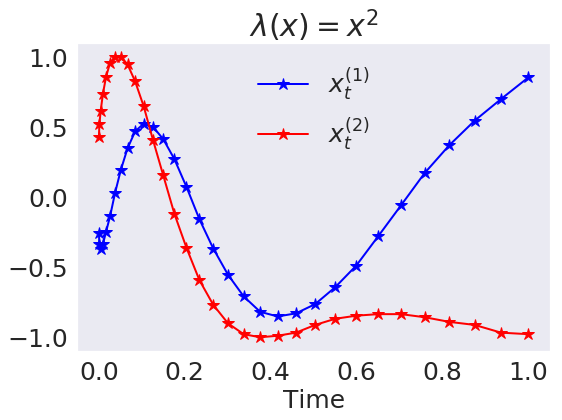}
	\end{minipage}
	\quad
	\begin{minipage}[c]{0.22\textwidth}
			\includegraphics[width= \textwidth]{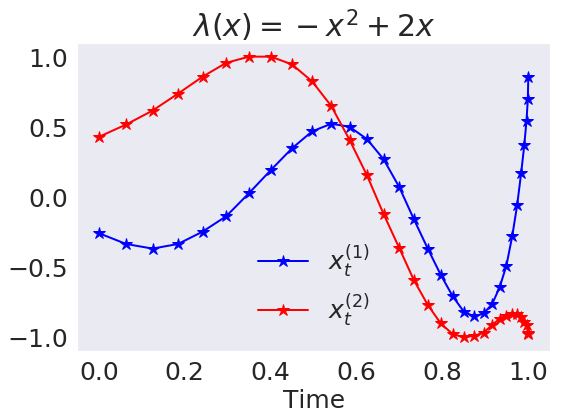}
	\end{minipage}
	\caption{The top-left figure represents the trajectory of the digit $2$, and the rest of figures plot the coordinates of the pen location via different speed respectively, which share the same signature and log signature given in the first subplot. }\label{TimeParameterizationInvariance}
\end{figure}

\subsubsection{Missing Data, variable length and unequally time spacing}
The (log)-signature feature set can both deal with time series of variable length and unequal time spacing. No matter the length of the time series is and how the time spacing is, the (log)-signature feature transformation provides a fixed dimension descriptor for any $d$-dimensional time series. Compared with the signature, the log-signature is empirically more robust to missing data (Figure \ref{sig_logsig_comparison_missing_data}). 
\begin{table}[H]
\centering
 \scalebox{0.85}{
\begin{tabular}{l|cc}
\hline
                                    & signature & log-signature \\
\hline
uniqueness of a path                  &    \checkmark        &    \checkmark             \\
invariance of time parameterization &      \checkmark       &     \checkmark            \\
the universality                     &    \checkmark         &       \xmark         \\
no redundancy                       &     \xmark      &       \checkmark         \\
\hline
\end{tabular}}
\caption{Comparison of Signature and Log-signature}\label{Tab_sig_logsig}
\end{table}

In contrast to signature, the log-signature does not have universality, and thus it needs be combined with non-linear models for learning. We summarize the comparison of the signature and log-signature in Table \ref{Tab_sig_logsig}. 
\section{Logsig-RNN Network}\label{section_logsig_RNN}
Consider a discrete $d$-dimensional time series $x^{\hat{\mathcal{D}}}  = (x_{t_{i}})_{i = 1}^{n}$  over time interval $J$. The lifted path associated with $x^{\hat{\mathcal{D}} }$ is the piecewise linear interpolation of $x^{\hat{\mathcal{D}}}$. Let $\mathcal{D}$ be a coarser time partition of $J$ such that $\mathcal{D}:= (u_{k})_{k = 0}^{N} \subset \hat{\mathcal{D}}$. 

\subsection{Log-Signature Layer}\label{BP_LogsignatureLayer}
We propose the Log-Signature (Sequence) Layer, which transforms an input $x^{\hat{\mathcal{D}}}$ to a sequence of the log signature of $x^{\hat{\mathcal{D}}}$ over a coarser time partition $\mathcal{D}$ .
\begin{definition}[Log-Signature (Sequence) Layer]\label{def_logsig_seq}
Given $\mathcal{D}$ and $\hat{\mathcal{D}}$, a Log-Signature Layer of degree $M$ is a mapping from $\mathbb{R}^{d \times n}$ to $\mathbb{R}^{d_{ls} \times N}$, which computes $(l_{k})_{k =0}^{N-1}$ as an output for any $x^{\mathcal{D}}$, where 
$l_{k}$ is the truncated log signature of $x$ over the time interval $[u_{k}, u_{k+1}]$ of degree $M$ as follows:
\begin{eqnarray}\label{def_ls}
l_{k}=lS_{M}(x)_{[u_{k}, u_{k+1}]},
\end{eqnarray}
where $k \in \{0, 1, \cdots N-1\}$ and $d_{ls}$ is the dimension of the truncated log-signature.
\end{definition}
It is noted that the Log-Signature Layer does not have any weights. In addition, the input dimension of Log-signature layer is $(d, n)$ and the output dimension is $(N, d_{ls})$ where $N \leq n$ and $d_{ls} \geq d$. The Log-Signature Layer potentially shrinks the time dimension effectively by using the more informative spatial features of a higher dimension.

\textbf{Backpropogation} Let us consider the derivative of the scalar function $F$ on $(l_{k})_{k = 1}^{N}$ with respect to path $x^{\hat{\mathcal{D}}}$, given the derivatives of $F$ with respect to $(l_{k})_{k = 1}^{N}$. By the Chain rule, it holds that
\begin{eqnarray}\label{BP_1}
\frac{\partial F((l_{1}, \cdots, l_{N}))}{\partial x_{t_{i}}} = \sum_{k = 1}^{N}\frac{\partial F(l_{1}, \cdots, l_{N})}{\partial l_{k}}\frac{\partial l_{k}}{\partial x_{t_{i}}}.
\end{eqnarray}
where $k \in \{1, \cdots, N\}$ and $i \in \{0, 1, \cdots, n\}$.

If $t_{i} \notin [u_{k-1}, u_{k}]$, $\frac{\partial l_{k}}{\partial x_{t_{i}}} = 0;$ otherwise $\frac{\partial l_{k}}{\partial x_{t_{i}}}$ is the derivative of the single log-signature $l_{k}$ with respect to path $x_{u_{k-1}, u_{k}}$  where $t_{i} \in \mathcal{D} \cap [u_{k-1}, u_{k}]$. The log signature $lS(x^{\hat{\mathcal{D}}})$ with respect to $x_{t_{i}}$ is proved differentiable and the algorithm of computing the derivatives is given in \cite{ReizensteinIhesis2018}, denoted by $\triangledown_{x_{t_{i}}} LS(x^{\hat{\mathcal{D}}})$. This is a special case for our log-signature layer when $N = 1$. In general, for any $N \in \mathbb{Z}^{+}$, it holds that $\forall  i \in \{0, 1, \cdots, n\}$ and $k \in \{1, \cdots, N\}$,
\begin{eqnarray}\label{logsigseq_eqn}
\frac{\partial l_{k}}{\partial x_{t_{i}}} = \mathbf{1}_{t_{i} \in [u_{k-1}, u_{k}]} \triangledown_{x_{t_{i}}} LS(x_{u_{k-1}, u_{k}}),
\end{eqnarray}
Thus the backpropogation algorithm of the Log-Signature Layer can be implemented using Equation \eqref{BP_1} and \eqref{logsigseq_eqn}.\footnote{In iisignature python package \cite{ReizensteinIisignature2018}, logsigbackprop(deriv, path, s, Method = None) returns the derivatives of some scalar function $F$ with respect to path, given the derivatives of $F$ with respect to logsig(path, s, methods). Our implementation of the back-propogation algorithm of the log-signature layer uses logigbackprop() provided in iisignature. }
\subsection{Logsig-RNN Network}\label{SectionInference}
Before proceeding to the Logsig-RNN network, we introduce the conventional recurrent neural network(RNN). RNN is composed with three types of layers, i.e. the input layer $(x_{t})_t$, the hidden layer $(h_{t})_t$ and the output layer $(o_{t})_{t}$. RNN takes the sequence of $d$-dimensional vectors $(x_{1}, x_{2}, \cdots, x_{T})$ as an input and compute the output $(o_{t})_{t = 1}^{T} \in \mathbb{R}^{e \times T}$ using Equation \eqref{rnn_eqn}:
\begin{eqnarray}\label{rnn_eqn}
h_t &=& \sigma(Ux_t + Wh_{t-1}), o_{t}= q(Vh_t),
\end{eqnarray}
where $U$, $W$ and $V$ are model parameters, and $\sigma$ and $q$ are two activation functions in the hidden layer and output layer respectively. Let $\mathcal{R}_{\sigma}((x_{t})_{t} \vert \Theta)$ denote the RNN model with $(x_{t})_{t}$ as the input, $\sigma$ and linear function $q$ as the activation functions of the hidden layer and output layer respectively and $\Theta:=\{U, W, V\}$ is the parameter set of the RNN model.

We propose the Logsig-RNN model by incorporating the log-signature layer to the RNN. We defer the motivation of the Logsig-RNN Network to next section. 

\begin{model}[Logsig-RNN Network]\label{logsigrnn}
Given $\mathcal{D}:= (u_{k})_{k = 0}^{N}$, a Logsig-RNN network computes a mapping from an input path $x^{\mathcal{D}}$ to an output defined as follows:
\begin{itemize}
    \item Compute $(l_{k})_{k =0}^{N-1}$ as the output of the Log-Signature Layer of degree $M$ for an input $(x^{\hat{\mathcal{D}}})$ by Definition \ref{def_logsig_seq}.
 \item The output layer is computed by $\mathcal{R}_{\sigma}((l_{k})_{k = 0}^{N-1} \vert \Theta)$, where $\mathcal{R}_{\sigma}$ is a RNN network with certain activation function $\sigma$.
\end{itemize}
\end{model}
\begin{remark}[Link between RNN model and Logsig-RNN model]
For $M = 1$, Logsig-RNN network is reduced to the RNN model with $(x_{u_{k+1}} - x_{u_{k}} )_{k = 1}^{N}$ as an input. When $\mathcal{D}$ coincides with $\hat{\mathcal{D}}$, the Logsig-RNN Model is the RNN model with increment of raw data input. 
\end{remark}
\begin{remark}
The sampling time partition of raw data $\hat{\mathcal{D}}$ can be potentially much higher than $\mathcal{D}$ used in Logsig-RNN model. The higher frequency of input data would not increase the dimension of the log-signature layer, but it makes the computation of $l_{k}$ more accurate. \end{remark}

The Logsig-RNN model (depicted in Figure~\ref{RNN_SDE}) can be served as an alternative to the RNN model and its variants of RNNs, e.g. LSTM, GRU. One main advantage of our method is to reduce the time dimension of the RNN model significantly while using the log-signature as an effective representation of data stream over sub-time interval. It leads to higher accuracy and efficiency compared with the standard RNN model. Compared with Sig-OLR (\cite{levin2013learning}) our approach achieves better accuracy via dimension reduction by using the log signature sequence of lower degree to represent the signature of high degree.

\subsection{Path Transformation Layers}\label{Ch_LP_Logsig_RNN}
To efficiently and effectively exploit the path, we further propose two transformation layers accompanied with Log-Signature Layer. The overall model, namely PT-Logsig-RNN, is shown in Figure \ref{LP_Logsig_RNN}.

\textbf{Embedding Layer} In many real-world applications, the input path dimension is large and the dimension of the truncated log-signature grows fast w.r.t. the path dimension. To reduce the high path dimension, we add a linear Embedding Layer before the Log-Signature Layer. The mapping $L$, implemented by the embedding layer, translates the input sequence $(X_{t_{i}})_{i = 1}^{n}$ into real vectors $(LX_{t_{i}})_{i = 1}^{n}$, where $LX_{t_{i}} \in \mathbb{R}^{d'}$ and $d' < d$. The weights in this layer are trainable and are learned from data. The embedding layer leads to significant spatial dimension reduction of rear layers.

\begin{figure}[t]
  \centering
  \includegraphics[width= 0.45\textwidth]{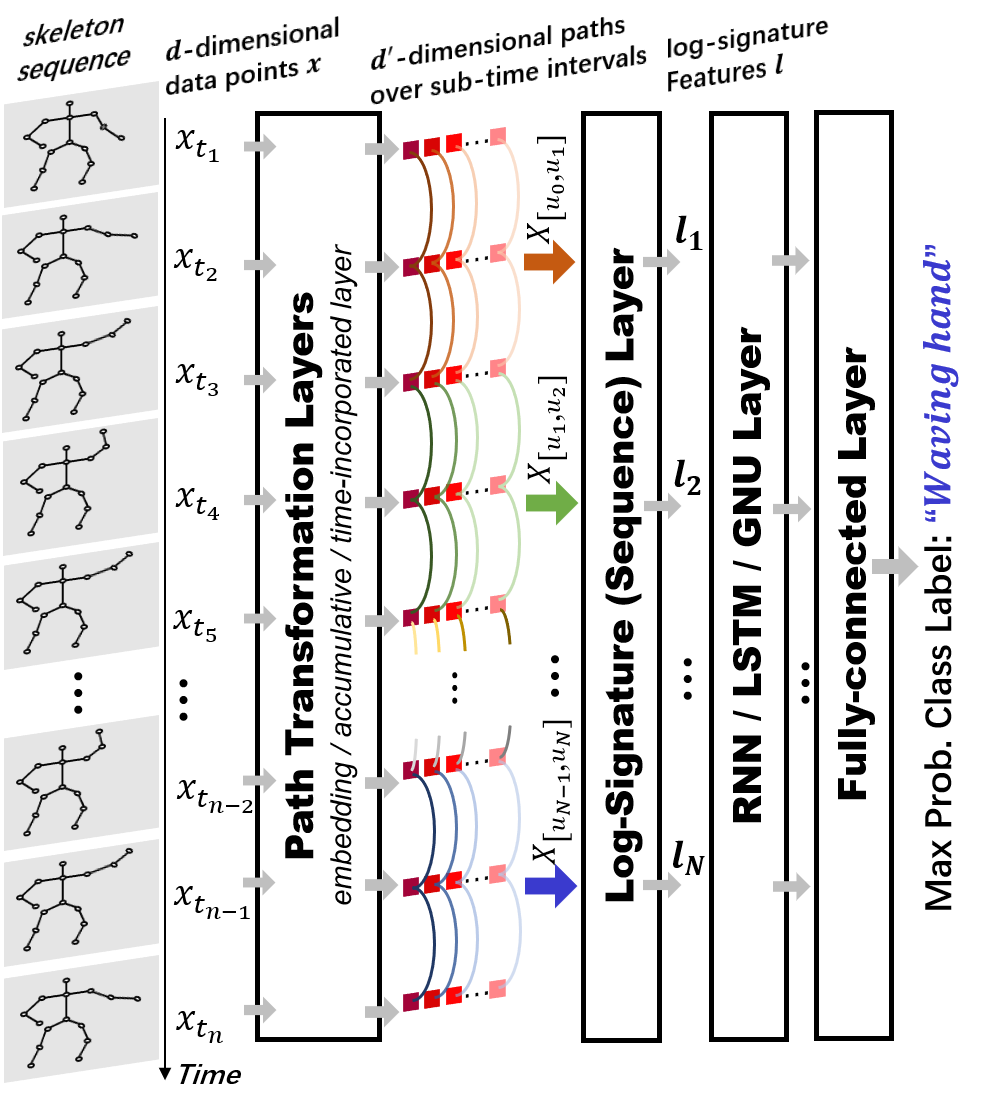}
  \caption{Architecture of PT-Logsig-RNN Model. It consists with the first Path Transformation Layers, the Log-Signature (Sequence) Layer, the RNN-type layer and the last fully connected layer. It is used for both action and gesture recognition in our experimental section.
}\label{LP_Logsig_RNN}
\end{figure}

\textbf{Accumulative Layer}
The Accumulative Layer maps the input sequence $(X_{t_{i}})_{i = 1}^{n}$ to its partial sum sequence $Y_{t_{i}}$, where $Y_{t_{i}} = \sum_{j = 1}^{i}X_{t_{j}}$, and $i=1, \cdots, n$. One advantage of using the Accumulative Layer along with Log-Signature Layer is to extract the quadratic variation and other higher order statistics of an input path $X$ effectively \cite{ni2015multi}.  

\textbf{Time-incorporated Layer}
The Time-incorporated Layer is to add the time dimension to the input sequence $(X_{t_{i}})_{i = 1}^{n}$; in formula, the output is $(t_{i}, X_{t_{i}})_{i = 1}^{n}$. The log-signature of the time-incorporated transformation of a path is proved to fully recover the path by Lemma \ref{Lemma_mono_logsig}. 

\section{Universality of Logsig-RNN Network}\label{sectionLearnSDEs}
In this section, we prove the universality of Logsig-RNN network to approximate a solution to any controlled differential equation under mild conditions. The motivation of the Logsig-RNN network comes naturally from the numerical approximation theory of the SDEs.

Let $(X_t)_{t \in [0, T]}$ and $(Y_t)_{t \in [0, T]}$ be two stochastic processes under the probability space $(\Omega, \mathcal{F}, P)$ such that $Y$ is the solution to Equation \eqref{controlEquation} driven by the path $X$, 
\begin{eqnarray}
dY_{t} = f(Y_{t}) dX_{t}, Y_{S} = \xi, \label{controlEquation}
\end{eqnarray}
where $X$ has finite $p$-variation a.s., and $f: \mathbb{R} \rightarrow L(E, \mathbb{R})$ is a smooth vector field satisfying certain regularity condition. Let $I_{f}$ denote the solution map which maps $X_{J}$ to $Y_{T}$. The goal is to learn the solution map $I_{f}$ from the input-output pairs $(X_{J}^{(i)}, Y^{(i)}_{T})_{i = 1}^{N_s}$, which are iid samples of $(X_{J}, Y_{T})$.
Classical numerical schemes of simulation of the solution to Equation \eqref{controlEquation} are mainly composed with two steps:
\begin{enumerate}
    \item Local Approximation of $Y_{t} - Y_{s}$ when $t$ is close to $s$. 
    \item Paste the local approximation together to get the global approximation for $Y_{T}$.
\end{enumerate}
Let us start with the local approximation of the solution to Equation \eqref{controlEquation} using step $M$-Taylor Expansion, i.e.
\begin{eqnarray}
Y_{t} - Y_{s} \approx\sum_{k = 1}^{M} f^{\circ k}(Y_{s}) \underset{s < s_{1} < \cdots < s_{k}<t}{\int}dX_{t_{1}}\otimes \cdots \otimes dX_{s_{k}},
\end{eqnarray}
where $f^{\circ m}: \mathbb{R} \rightarrow L(E^{\otimes m}, \mathbb{R})$ is defined recursively by 
\begin{eqnarray}
f^{\circ 1} = f; f^{\circ k+1} = D(f^{\circ k}) f, 
\end{eqnarray}
where $D(g)$ denotes the differential of the function $g$.

We paste the local Taylor approximation together to estimate for the solution on the whole time interval $J$. The strategy is outlined as follows. Fix a time partition $\mathcal{D} = (u_{k})_{k =0}^{N}$ of $J$. We define the estimator $(\hat{Y}_{u_{k}}^{\mathcal{D}, M})_{k}$ given by $M$-step Taylor expansion associated with $\mathcal{D}$ in the following recursive way: for $k \in \{0, \cdots, N-1\}$,
\begin{eqnarray}\label{RecursiveStructureY}
{\hat{Y}_{u_{0}} ^{\mathcal{D}, M}}&=& y_{0} \nonumber,\\
{\hat{Y}_{u_{k+1}}^{\mathcal{D}, M }}&=& \hat{Y}_{u_{k}} ^{\mathcal{D}, M}+ \sum_{j = 1}^{M} f^{\circ j}(\hat{Y}_{u_{k}}^{\mathcal{D}, M}) X_{u_{k}, u_{k+1}}^{j}\\
&:=& g_{\mathcal{D}, M}^{f}( l_{k}, {\hat{Y}_{u_{k}}^{\mathcal{D}, M}} )\nonumber,
\end{eqnarray}
where $M$ is the degree of log-signature and $l_{k}$ is defined in Equation \eqref{def_ls}. $\hat{Y}_{u_{N}}^{\mathcal{D}, M }$ converges to $Y_{T}$ when $\Delta \mathcal{D}$ tends to $0$ provided $f$ is smooth enough (see Theorem \ref{GlobalApproxThm2}).

Equation \eqref{RecursiveStructureY} exploits a remarkable similarity between the recursive structure of an RNN $\mathcal{R}_{\sigma}$ and the one defined by the numerical Taylor approximation to solutions of SDEs $\hat{Y}_{u_{N}}$. It is noted that in Equation \eqref{RecursiveStructureY} $\hat{Y}_{u_{k+1}}^{\mathcal{D}, M}$ depends on $\hat{Y}_{u_{k}}^{\mathcal{D}, M}$ and the log-signature $l_{k}$. In this way, $\hat{Y}_{u_{k}}^{\mathcal{D}, M}$ plays a role similar to the hidden neurons of the RNN model with $(l_{k})_{k=0}^{N-1}$ as an input (see Figure \ref{SDE_sol_vs_RNN}). It motivates us to propose the Logsig-RNN Network (Model \ref{logsigrnn}) to approximate the solution to any SDE under the regularity condition. This idea is natural if one thinks that $g_{\mathcal{D}, M}^{f}$ can be universally approximated by neural network. We establish the universality theorem of the Logsig-RNN network as follows. The proof can be found in the appendix.

\begin{theorem}[Universality of Logsig-RNN Model]\label{UniversalityLogsigRNN} Let $Y$ denote the solution of a SDE of form (\ref{controlEquation}) under the previous regularity condition of Theorem \ref{GlobalApproxThm}. Let $K$ be any a compact set $S(V_{p}(J, E))$. Assume $f \in \mathcal{C}_{b}^{\infty}$\footnote{$f \in \mathcal{C}_{b}^{\infty}$ means $f$ is infinitely differentiable and all the derivatives are bounded. This regularity condition can be weaken.}. For any $\varepsilon >0$, there exist the constants $C_{1}:=C_{1}(p, \gamma, f, K)$ and $C_{2}:=C_{2}(f, K)$ such that $M > \lfloor p\rfloor$ and $\Delta \mathcal{D} \leq \min(\varepsilon^{p/ M +1 -p }C_{1},\varepsilon C_{2} )$, $l_{k}$ is defined in Equation \eqref{def_ls}. Then there exists a RNN $\mathcal{R}_{\sigma}(\vert \Theta)$ with some $\Theta$, s.t.
\begin{eqnarray}
\sup_{S(X) \in K}\vert\vert Y_{T} - \mathcal{R}_{\sigma}((l_{k})_{k = 1}^{N} \vert \Theta) \vert\vert \leq \varepsilon.
\end{eqnarray}
\end{theorem}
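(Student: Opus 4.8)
The plan is to prove the bound by a two-term triangle inequality,
\[
\bigl\| Y_T - \mathcal{R}_\sigma\bigl((l_k)_k \mid \Theta\bigr)\bigr\|
\;\le\;
\bigl\| Y_T - \hat{Y}^{\mathcal{D},M}_{u_N}\bigr\|
\;+\;
\bigl\| \hat{Y}^{\mathcal{D},M}_{u_N} - \mathcal{R}_\sigma\bigl((l_k)_k \mid \Theta\bigr)\bigr\|,
\]
and to make each term at most $\varepsilon/2$. Term (I), the discretisation error of the step-$M$ Taylor (log-ODE) scheme \eqref{RecursiveStructureY}, will be handled by the global approximation Theorems \ref{GlobalApproxThm}--\ref{GlobalApproxThm2}; term (II), the error in approximating the resulting recursion by an RNN, will be handled by the classical universal approximation theorem applied to the one-step update map $g_{\mathcal{D},M}^f$, together with a discrete Gr\"onwall estimate that controls how the per-step errors accumulate along the $N$ steps of the recursion.

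For term (I): since $K$ is a compact subset of $S(V_p(J,E))$, the $p$-variation of $X$ over $J$ -- hence over every subinterval $[u_k,u_{k+1}]$ of $\mathcal{D}$ -- is uniformly bounded over all $X$ with $S(X)\in K$, so the control functions $\omega(u_k,u_{k+1})$ appearing in the remainder estimate for the step-$M$ Taylor expansion are uniformly bounded. By Theorem \ref{GlobalApproxThm} the local remainder is $O\!\bigl(\omega(u_k,u_{k+1})^{(M+1)/p}\bigr)$, and because $M>\lfloor p\rfloor$ the exponent $(M+1)/p$ exceeds $1$, so super-additivity of $\omega$ upgrades the local estimate to a global one, $\bigl\|Y_T-\hat{Y}^{\mathcal{D},M}_{u_N}\bigr\|\le C(\Delta\mathcal{D})^{(M+1-p)/p}$ with $C=C(p,\gamma,f,K)$. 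Requiring the right-hand side to be $\le\varepsilon/2$ produces the first constraint on $\Delta\mathcal{D}$ in the hypothesis; the second constraint $\Delta\mathcal{D}\le\varepsilon C_2$ is what guarantees, via the same estimates, that the approximants $\hat{Y}^{\mathcal{D},M}_{u_k}$ (the arguments passed to $f^{\circ 1},\dots,f^{\circ M}$) remain inside a fixed compact set $\mathcal{K}_Y$ -- uniformly in $k$, in $\mathcal{D}$, and in $X$ with $S(X)\in K$ -- on which $f$ and its derivatives are controlled.

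For term (II): by \eqref{RecursiveStructureY}, $\hat{Y}^{\mathcal{D},M}_{u_{k+1}}=g_{\mathcal{D},M}^f\!\bigl(l_k,\hat{Y}^{\mathcal{D},M}_{u_k}\bigr)$ where, crucially, $g_{\mathcal{D},M}^f$ does \emph{not} depend on $k$: it is polynomial in $l_k$ (through the fixed, degree-$M$ correspondence between log-signature and signature) and $\mathcal{C}^\infty_b$ in the state (through $f^{\circ 1},\dots,f^{\circ M}$), hence Lipschitz with a constant $L=L(f,M,K)$ on the compact set $\mathcal{K}_l\times\mathcal{K}_Y$, where $\mathcal{K}_l$ is a compact set containing every tuple of subinterval log-signatures $l_k$ arising from $S(X)\in K$ (compact, again by the uniform $p$-variation bound). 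Now fix $\mathcal{D}$ as chosen in (I), so that $N=|\mathcal{D}|-1$ is a \emph{fixed} integer. By the classical universal approximation theorem for one-hidden-layer networks with activation $\sigma$, for any $\delta>0$ one can choose the hidden width and weights $\Theta=\{U,W,V\}$ (widening the hidden state so that $\hat Y$ can be recovered by the linear output map $q$) so that the RNN cell together with its readout approximates $g_{\mathcal{D},M}^f$ to within $\delta$ uniformly on a neighbourhood of $\mathcal{K}_l\times\mathcal{K}_Y$. Iterating this cell defines $\mathcal{R}_\sigma(\cdot\mid\Theta)$, and a straightforward induction gives $\bigl\|\hat{Y}^{\mathcal{D},M}_{u_k}-h_k\bigr\|\le\delta\,(1+L+\cdots+L^{k-1})$, provided the iterates $h_k$ stay in the neighbourhood where the approximation is valid -- which can be arranged by taking $\delta$ small and the neighbourhood slightly larger than $\mathcal{K}_l\times\mathcal{K}_Y$. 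Choosing $\delta\le\tfrac{\varepsilon}{2}\big/\sum_{k=0}^{N-1}L^k$ then forces term (II) $\le\varepsilon/2$, and adding the two bounds finishes the proof.

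I expect the genuine obstacle to be term (II), specifically the interplay of three points: (a) turning compactness of $K$ in signature space into honest compactness of the ranges of both the inputs $(l_k)_k$ and the internal Taylor approximants $\hat{Y}^{\mathcal{D},M}_{u_k}$, uniformly over $X$ and over the partition, so the static universal approximation theorem applies with constants independent of $k$; (b) verifying that the vanilla update $h_t=\sigma(Ux_t+Wh_{t-1})$ with a linear output layer is expressive enough to approximate the smooth update $g_{\mathcal{D},M}^f$ -- this is where one invokes (and should cite) the universal approximation property of recurrent networks, typically realised by widening the hidden layer and/or treating the transition as a deeper sub-network, and where boundedness of $\sigma$ must be reconciled with the bounded-range state $\hat Y$; and (c) ordering the quantifiers correctly, namely first fixing $M>\lfloor p\rfloor$, then $\Delta\mathcal{D}$ (hence $N$) to control (I), and only then $\delta$ to control (II) -- since shrinking $\Delta\mathcal{D}$ inflates $N$ and therefore the amplification factor $\sum_{k<N}L^k$. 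Term (I) itself is the classical log-ode convergence rate and should be essentially routine given Theorems \ref{GlobalApproxThm}--\ref{GlobalApproxThm2}.
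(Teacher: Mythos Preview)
Your proposal is correct and follows essentially the same approach as the paper: the same triangle-inequality split into a discretisation error $E_1$ (handled by Theorem~\ref{GlobalApproxThm2}) and an RNN-approximation error $E_2$, with $E_2$ controlled by applying the static universal approximation theorem (the paper's Lemma~\ref{lemma1}, Funahashi) to the one-step map $g_{\mathcal{D},M}^f$ and then propagating the per-step error through the $N$ recursion steps via a discrete Gr\"onwall bound (the paper's Lemma~\ref{lemma2}, which gives exactly your geometric sum $\sum_{k<N}L^k$). Your discussion of the quantifier ordering and the compactness needed for the inputs $l_k$ and the approximants $\hat Y_{u_k}$ is in fact more careful than the paper's own writeup.
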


\section{Numerical Experiments}\label{SectionNumerics}
We demonstrate the performance of the Logsig-RNN algorithm on both synthetic SDE data and empirical data, including NTU RGB+D 120 action data and Chalearn 2013 gesture data in terms of the accuracy, efficiency and robustness.\footnote{We implement all algorithms in Tensorflow. It runs on a computer equipped with GeForce RTX 2080 Ti GPU.} 

\subsection{Synthetic Data Generated by a SDE}
As an example of high frequency data, we simulate the solution $Y_{T}$ to the SDE of Example~\ref{examp2} using Milstein's method with the time step $\frac{T}{50000}$ for $T = 10$. An input path is the discretized Brownian motion $W_{\hat{\mathcal{D}}}$, where $\hat{\mathcal{D}} = D_{50001}$\footnote{$D_{n}$ denotes an equally spaced  partition of $[0, T]$ of $n$ steps.}. We simulate $2000$ samples of $(X_{\hat{\mathcal{D}}}, Y_{T})$, which is split to $80\% $ for the training and the rest for the testing. Here we benchmark our approach with (1) RNN$_{0}$: the conventional RNN model, (2) Sig-OLR: the linear model on the signature, (3) Sig-RNN: the RNN model
 with the signature sequence. 
\begin{example}
Suppose $Y_t$ satisfies the following SDE:
\begin{eqnarray}
dY_t = (-\pi Y_t + \sin(\pi t))dX^{(1)}_t + Y_t dX^{(2)}_t, Y_{0} = 0,
\end{eqnarray}
where $X_t = (X^{(1)}_t, X^{(2)}_t) = (t, W_t)$, $W_{t}$ is a 1-d Brownian motion,  and the integral is in the Stratonovich sense.
\label{examp2}
\end{example}

\begin{table}[H]
\centering
\scalebox{0.85}{
\begin{tabular}{l|l|c|c|c}
\hline
\textbf{Data} & \textbf{Methods} & \textbf{Fea. dim.} & \textbf{Error($\times 10^{-6}$)} & \textbf{Train time(s)}\\
\hline
High & RNN$_0$ & (50k, 1) & $-$& $-$\\
Frequency& Sig-OLR & 62 & 2.25 & \textbf{178}\\
(50k steps)& Sig-RNN & (4,14) & 2.40 & 360 \\
& Logsig-RNN & \textbf{(4,8)} & \textbf{2.14} & 529\\
\hline
Down-& RNN$_0$ & (1k,1) & 7.79 & 50930\\
sampling& Sig-OLR & 62 & 3.69 & \textbf{9} \\
(1k steps)& Sig-RNN & (4,14) & 2.55 & 177 \\
& Logsig-RNN & \textbf{(4,8)} & \textbf{2.16} & 343\\
\hline
Missing & RNN$_0$ & (1k,1) & 16.40 & 47114\\
Data& Sig-OLR & 62 & 3.75 & \textbf{9} \\
(drop 5\% & Sig-RNN & (4,14) & 3.05 & 182 \\
from 1k)& Logsig-RNN & \textbf{(4,8)} & \textbf{2.91} & 372\\
\hline
\end{tabular}}
\caption{Comparison of methods on the SDEs data.}
\label{Tab:diffscenarios}
\end{table}

As shown in Table \ref{Tab:diffscenarios}, we apply the above four methods for three kinds of inputs (1)  $X_{\hat{\mathcal{D}}}$ (high frequency); (2) down-sampling $X_{\hat{\mathcal{D}}}$ to $1k$ time steps (downsampling); (3) randomly throw away $5\%$ points of $1k$ down sampled data (missing data). We compare the accuracy and training time of the Logsig-RNN algorithm. The training time is the first time of the loss function of the model to reach the error tolerance level $2*10^{-6}$ before $25$k epochs in the train set and the MSE is chosen as performance metric. First of all, Table \ref{Tab:diffscenarios} shows that the Logsig-RNN achieves the best accuracy for all three cases among all the methods. In particular, it is the most robust to missing data. Moreover, it reduces the time dimension of RNN from $50k/1k$ to $4$, and thus significantly save the training time from $50930s$ to $343s$.

\subsection{Action Recognition: NTU RGB+D 120 Data}
NTU RGB+D 120 \cite{Liu_2019_NTURGBD120} is a large-scale benchmark dataset for 3D action recognition, which consists of $114,480$ RGB+D video samples that are captured from 106 distinct human subjects for 120 action classes. 

We use the network constructed by the path transformation layers following by the Logsig-RNN model (i.e. PT-Logsig-RNN shown in  Figure \ref{LP_Logsig_RNN}) and apply it to the skeleton data of the NTU RGB+D 120 data (150 dimensional data streams) for action classification. The network configurations are given in appendix.

\begin{table}[H]
\centering
\scalebox{0.75}{
\begin{tabular}{l|c|c}
\hline
\textbf{Methods}                     & \textbf{X-Subject(\%)}&
\textbf{X-Setup(\%)}\\
\hline
Dynamic Skeleton\cite{7784788} & 50.8 & 54.7\\
ST LSTM\cite{10.1007/978-3-319-46487-9_50} & 55.7 & 57.9\\
FSNet\cite{liufsnet} & 59.9& 62.4\\
TS Attention LSTM\cite{Liu2018SkeletonBasedHA} & 61.2 & 63.3\\
MT-CNN + RotClips\cite{8306456} & 62.2 & 61.8\\
Pose Evolution Map\cite{liu2018recognizing} & 64.6 & 66.9\\
\hline
LSTM (baseline) & 61.6& 58.5\\
PT + LSTM & 62.0& 60.5\\
PT + Logsig +LSTM & \textbf{65.7}& 64.5\\
\hline
\end{tabular}}
\caption{Comparison of methods on the NTU RGB+D 120.}
\label{Accuracy_Action_NTU120}
\end{table}

As shown in Table \ref{Accuracy_Action_NTU120}, we subsequently add the Path Transformation Layers (PT) and the Log-signature layer (Logsig) to the baseline LSTM to validate the performance of each model. For X-Subject task, adding PT Layer results in a 0.4 percentage points (\textit{pp}) gain over the baseline and the Logsig layer further gives a 3.7 \textit{pp} gain. For X-Subject protocol, our method outperforms other methods. For X-Setup, our method is only beaten by ~\cite{liu2018recognizing}. The latter leverages the informative pose estimation maps as additional clues. Notice that our PT-Logsig-LSTM is flexible enough to allow incorporating other advanced techniques (e.g. data augmentation and attention module) or combining multimodal clues (e.g. pose confidence score) to achieve further improvement.

\subsection{Gesture Recognition: Chalearn 2013 data}\label{chalearn2013}
The Chalearn 2013 dataset \cite{Escalera2013MultimodalGR} is a public available dataset for gesture recognition, which contains 20 Italian gestures performed by 27 subjects. It provides Kinect data, which contains RGB, depth, foreground segmentation and skeletons. Here, we only use skeleton data (20 3D joints) for the gesture recognition (see appendix for more details). 

We compare our method (Figure \ref{LP_Logsig_RNN}) with several state-of-the-art methods \cite{li2019skeleton}. Table \ref{tableofchalearn} shows that the PT-Logsig-RNN algorithm with $M=2$ and $N=4$ outperforms other methods in terms of the accuracy. We present both the results with/without the data-augmentation. With augmentation, our results significantly outperform others to achieve the state-of-the-art result.

\begin{table}[H]
\centering
\scalebox{0.75}{
\begin{tabular}{l|c|c}
\hline
\textbf{Methods}                     & \textbf{Accuracy(\%)}   & \textbf{Data Aug.}   \\ \hline
Deep LSTM \cite{nturgb}               & 87.10   & $-$      \\
Two-stream LSTM \cite{Wang2017ModelingTD}           & 91.70      & $\surd$     \\
ST-LSTM + Trust Gate \cite{8101019}& 92.00 & $\surd$
   \\
3s\_net\_TTM \cite{li2019skeleton}              & 92.08  &  $\surd$      \\ \hline
RNN$_0$                   & 90.92    & $\bigtimes$       \\
RNN$_0$ (+data augmentation)                    &   91.18  & $\surd$       \\
\hline
\textbf{PT-Logsig-RNN} & \textbf{92.21} & $\bigtimes$ \\ 
\textbf{PT-Logsig-RNN}(+data augmentation)  & \textbf{93.27} & $\surd$ \\ \hline
\end{tabular}}
\caption{Comparison of methods on the Chalearn 2013 data.}
\label{tableofchalearn}
\end{table}

Regarding to the robustness to missing data, we randomly set a certain percentage of frames ($r$) by all-zeros for each sample in the validation set, and evaluate the trained models of our method and RNN$_0$ to the new validation data. Table~\ref{gesturemissingdata} shows that logsig-RNN model with $M=2$ consistently beats the baseline $\text{RNN}$ for different $r$, which validates the robustness of our method comparing with the benchmark.

\begin{table}[H]
  \centering
   \scalebox{0.90}{ \begin{tabular}{|c|c|c|c|c|}
   \hline
    \diagbox[width=5em]{$r$}{$M$} & 2     & 3     & 4     & RNN$_0$\\
    \hline
    0\%   & \cellcolor[rgb]{ .439,  .678,  .278}92.21 & \cellcolor[rgb]{ .663,  .816,  .557}89.66 & \cellcolor[rgb]{ 1,  .902,  .6}70.71 & \cellcolor[rgb]{ .573,  .816,  .314}90.92 \\
    \hline
    10\%  & \cellcolor[rgb]{ .573,  .816,  .314}91.32 & \cellcolor[rgb]{ .663,  .816,  .557}88.58 & \cellcolor[rgb]{ 1,  .851,  .4}69.11 & \cellcolor[rgb]{ .776,  .878,  .706}81.77 \\
    \hline
    20\%  & \cellcolor[rgb]{ .663,  .816,  .557}90.33 & \cellcolor[rgb]{ .776,  .878,  .706}86.60 & \cellcolor[rgb]{ 1,  .851,  .4}67.89 & \cellcolor[rgb]{ 1,  .851,  .4}68.22 \\
    \hline
    30\%  & \cellcolor[rgb]{ .776,  .878,  .706}87.68 & \cellcolor[rgb]{ .776,  .878,  .706}81.74\% & \cellcolor[rgb]{ 1,  .745,  .051}63.24 & \cellcolor[rgb]{ 1,  .392,  .42}50.35 \\
    \hline
    50\%  & \cellcolor[rgb]{ 1,  .902,  .6}74.40 & \cellcolor[rgb]{ 1,  .745,  .051}57.13 & \cellcolor[rgb]{ 1,  .392,  .42}41.07 & \cellcolor[rgb]{ 1,  .027,  0}21.78 \\
    \hline
    \end{tabular}}%
    \caption{The accuracy (\%) of the testing set with missing data with different dropping ratio ($r$). Here $N=4$.}
  \label{gesturemissingdata}%
\end{table}%

\section{Conclusion}\label{SectionFutureWork}
The Logsig-RNN model, inspired from the numerical approximation theory of SDEs, provides an accurate, efficient and robust algorithm to learn a functional on streamed data. Numerical results show that it improves the performance of LSTM significantly on both synthetic data and empirical data. In ChaLearn2013 gesture data, PT-Logsig-RNN achieves the state-of-the-art classification accuracy. It is noted that the gesture or action data is naturally one kind of enormous continuous data streams in real world. When devices make higher frequency sampled data available, the proposed algorithm can be very suited in related tasks, while conventional downsampling-based RNNs probably fail.  

\begin{appendices}\label{appendix}
\begin{center}
    \huge \textbf{Appendix}
\end{center}

\section{Preliminary of Rough Paths Theory}
In this section, we give a brief overview of the signature and log signature of a path, and provide the necessary preliminary of Rough Path Theory. Besides we provide the pendigit$\_$demo.ipynb\footnote{ pendigit$\_$demo.ipynb is provided in the gitub via \url{https://github.com/logsigRNN/learn_sde/blob/master/Pen-digit_learning/pendigit_demo.ipynb}} as illustrative examples to help readers have a better understanding of the properties of the (log)-signature. 

\subsection{The signature of a path}
Let us recall the definition of the signature of a path.
\begin{definition}[The Signature of a Path]
Let $J$ denote a compact interval and $X : J \rightarrow E$ be a continuous path with finite $p$-variation such that the following integration makes sense. Let $I=(i_{1}, i_{2}, \cdots, i_{n})$ be a multi-index of length $n$ where $i_{j} \in \{1, \cdots, d\}, \forall j \in \{1, \cdots, n\}$. Define the coordinate signature of the path $X_{J}$ associate with the index $I$ as follows:
\begin{equation*}
    X^{I}_{J} = \underset{\underset{u_{1}, \dots, u_{k} \in J}{u_{1} < \dots < u_{k}}} { \int \dots \int} dX_{u_{1}}^{(i_{1})} \otimes \dots \otimes dX_{u_{n}}^{(i_{n})}
\end{equation*}
The signature of $X$ is defined as follows:
\begin{equation}
    S(X)_J = (1, \mathbf{X}_J^1, \dots, \mathbf{X}_J^k, \dots)
\end{equation}
where $\displaystyle \mathbf{X}_J^k = \underset{\underset{u_{1}, \dots, u_{k} \in J}{u_{1} < \dots < u_{k}}} { \int \dots \int} dX_{u_{1}} \otimes \dots \otimes dX_{u_{k}} = (X^{I}_{J})_{I = (i_{1}, \cdots, i_{k})}, \forall k \geq 1$. 

Let $S_{k}(X)_{J}$ denote the truncated signature of $X$ of degree $k$, i.e.
\begin{equation}
    S_{k}(X)_J = (1, \mathbf{X}_J^1, \dots, \mathbf{X}_J^k).
\end{equation}
\end{definition}

The signature of the path has geometric interpolation. The first level signature $\mathbf{X}_J^1$ is the increment of the path $X$, i.e $X_{T} - X_{S}$, while the second level signature represents the signed area enclosed by the curve $X$ and the cord connecting the ending and starting point of the path $X$.
 
The signature of $X$ arises naturally as the basis function to represent the solution to linear controlled differential equation based on the Picard's iteration (\cite{RoughPaths}). It plays the role of non-commutative monomials on the path space. In particular, if $X$ is a one dimensional path, the $k^{th}$ level of the signature of $X$ can be computed explicitly by induction as follows that for every $k \in \mathbb{N}$,
\begin{eqnarray}
 \mathbf{X}_J^k = \frac{(X_{T} - X_{S})^{k}}{k!}.
\end{eqnarray}
\subsubsection{Multiplicative Property}
The signature of paths of finite $1-$variation has the multiplicative property, also called Chen's identity. 
\begin{definition}
Let $X: [0, s] \rightarrow E$ and $Y: [s, t] \rightarrow E$ be two continuous paths. Their concatenation is the path denoted by $X * Y: [0, t] \rightarrow E$ defined by
\begin{eqnarray*}
(X*Y)_{u} = \begin{cases} X_{u}, &  u \in [0,s], \\ Y_{u}-Y_{s}+X_{s}, & u \in [s, t]. \end{cases}
\end{eqnarray*}
\end{definition}
\begin{theorem}[Chen's identity]\label{Chen}. Let  $X: [0, s] \rightarrow E$ and $Y: [s, t] \rightarrow E$ be two continuous paths of bounded one-variation. Then
\begin{eqnarray*}
S(X*Y) = S(X) \otimes S(Y).
\end{eqnarray*}
\end{theorem}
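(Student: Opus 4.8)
The plan is to prove the identity level by level in the grading of the tensor algebra. Writing $S(X)\otimes S(Y)$ out, its degree-$n$ component is the finite sum $\sum_{k=0}^{n}\mathbf{X}_{[0,s]}^{k}\otimes\mathbf{Y}_{[s,t]}^{n-k}$, so it suffices to show for every $n\geq 0$ that
\begin{equation*}
\mathbf{(X*Y)}_{[0,t]}^{n}=\sum_{k=0}^{n}\mathbf{X}_{[0,s]}^{k}\otimes\mathbf{Y}_{[s,t]}^{n-k}.
\end{equation*}
The first step is to record the elementary fact that the concatenation only shifts $Y$ by the constant $X_{s}-Y_{s}$, so the associated Stieltjes measure satisfies $d(X*Y)_{u}=dX_{u}$ for $u\in[0,s]$ and $d(X*Y)_{u}=dY_{u}$ for $u\in[s,t]$; the constant shift contributes nothing to a differential. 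Since both paths have bounded $1$-variation, each iterated integral is a genuine (Riemann--Stieltjes, or equivalently Lebesgue--Stieltjes) integral against a finite measure on the ordered simplex, which is what makes the manipulations below legitimate.

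The heart of the argument is a decomposition of the domain of integration. By definition,
\begin{equation*}
\mathbf{(X*Y)}_{[0,t]}^{n}=\int_{0<u_{1}<\cdots<u_{n}<t}d(X*Y)_{u_{1}}\otimes\cdots\otimes d(X*Y)_{u_{n}}.
\end{equation*}
Because the $u_{i}$ are strictly increasing, the junction time $s$ falls in a unique slot: for a unique $k\in\{0,1,\dots,n\}$ we have $u_{1},\dots,u_{k}<s<u_{k+1},\dots,u_{n}$, up to the negligible diagonal where some $u_{i}=s$. Partitioning the simplex according to this index $k$ and substituting $dX$ on the first block of variables and $dY$ on the second, each piece becomes an integral over a product region $\{0<u_{1}<\cdots<u_{k}<s\}\times\{s<u_{k+1}<\cdots<u_{n}<t\}$ whose integrand is itself a tensor product. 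A Fubini argument then factorizes this integral as $\mathbf{X}_{[0,s]}^{k}\otimes\mathbf{Y}_{[s,t]}^{n-k}$, and summing over $k$ gives exactly the desired right-hand side.

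The step that demands the most care is controlling the boundary of the decomposition and justifying the factorization at the level of bounded-variation paths. I would first argue that the sets where $u_{i}=s$ for some $i$ are null with respect to the product of the (atomless part of the) integrators, so they can be discarded; if the paths carry atoms at $s$ one reduces to the continuous case by the continuity of $X*Y$ at the junction. With the interior product regions in hand, the Fubini factorization is the main obstacle only in the sense that it must be applied to an iterated Stieltjes integral: I would verify it by approximating $X$ and $Y$ uniformly by piecewise-linear (hence smooth, bounded-variation) paths along refining partitions, establishing the identity for the smooth approximants where all integrals are ordinary ones, and then passing to the limit using the continuity of the signature map in $1$-variation. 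This reduces the whole statement to the clean smooth-path case while keeping the argument rigorous for the stated class of paths.
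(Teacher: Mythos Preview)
Your argument is correct and is precisely the standard proof of Chen's identity: decompose the ordered simplex $\{0<u_{1}<\cdots<u_{n}<t\}$ according to the index $k$ at which the junction time $s$ sits, use that $d(X*Y)=dX$ on $[0,s]$ and $=dY$ on $[s,t]$, and apply Fubini to factor each piece as $\mathbf{X}^{k}_{[0,s]}\otimes\mathbf{Y}^{n-k}_{[s,t]}$. One small simplification: since $X$ and $Y$ are assumed \emph{continuous} as well as of bounded variation, the associated Stieltjes measures are automatically atomless, so your discussion of atoms at $s$ and the reduction via continuity at the junction is unnecessary (though harmless). Likewise, the piecewise-linear approximation argument is a valid route but not strictly needed here, since for continuous bounded-variation integrators the iterated Riemann--Stieltjes integrals exist outright and Fubini applies directly on the product simplices.

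As for comparison with the paper: the paper does not actually supply a proof of this theorem. Chen's identity is stated in the appendix as a classical result (it is Theorem~2.9 in the reference \cite{RoughPaths}) and is used as background for computing signatures of piecewise linear paths. Your write-up is exactly the argument one finds in that reference, so there is nothing to contrast.
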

Chen's identity asserts that the signature is a homomorphism between the path space and the signature space.
\subsubsection{Calculation of the signature}
In this subsection, we explain how to compute the truncated signature of a piecewise linear path.
Let us start with a $d$-dimensional linear path.
\begin{lemma}
Let $X: [S, T] \rightarrow E$ be a linear path. Then 
\begin{equation}
    S^{n}(X) = \frac{(X_{T} - X_{S})^{\otimes n}}{n!}.
\end{equation}
Equivalently speaking, for any multi-index $I = (i_{1}, \cdots, i_{n})$,
\begin{equation}
    S^{I} = \frac{\prod_{j=1}^{n}(X_{T}^{(i_{j})})}{n!}
\end{equation}
\end{lemma}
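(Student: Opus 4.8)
The plan is to evaluate the iterated integral defining $S^{n}(X)$ directly, using the fact that on a linear path the infinitesimal increment $dX$ is a constant tensor. First I would parametrize the path with constant speed: for $u \in [S,T]$ write $X_u = X_S + \frac{u-S}{T-S}(X_T - X_S)$, so that $X$ is absolutely continuous with constant derivative $\dot{X}_u = \frac{v}{T-S}$, where $v := X_T - X_S \in E$. A linear path has finite $1$-variation, so the iterated integral defining the signature is a genuine Riemann integral against $\dot{X}$, and
\begin{equation*}
S^{n}(X) = \underset{S < u_{1} < \cdots < u_{n} < T}{\int \cdots \int} dX_{u_{1}} \otimes \cdots \otimes dX_{u_{n}} = \underset{S < u_{1} < \cdots < u_{n} < T}{\int \cdots \int} \frac{v}{T-S} \otimes \cdots \otimes \frac{v}{T-S} \, du_{1} \cdots du_{n}.
\end{equation*}

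The key step is to factor out the constant tensor. Since $\dot{X}_u$ does not depend on $u$, each factor contributes the same slot $v$, and by multilinearity of $\otimes$ the tensor $v^{\otimes n}$ can be pulled outside the integral. This reduces the whole computation to a scalar integral,
\begin{equation*}
S^{n}(X) = \frac{v^{\otimes n}}{(T-S)^{n}} \underset{S < u_{1} < \cdots < u_{n} < T}{\int \cdots \int} du_{1} \cdots du_{n},
\end{equation*}
and the remaining integral is precisely the Lebesgue volume of the ordered $n$-simplex $\{S < u_{1} < \cdots < u_{n} < T\}$, which equals $\frac{(T-S)^{n}}{n!}$ (for instance because the $n!$ orderings of the coordinates tile the cube $[S,T]^{n}$ up to a null set). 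Substituting gives $S^{n}(X) = \frac{v^{\otimes n}}{n!} = \frac{(X_{T}-X_{S})^{\otimes n}}{n!}$, as claimed.

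The coordinate identity then follows by reading off the $(i_{1},\dots,i_{n})$ component: writing $v = \sum_{i} v^{(i)} e_{i}$ with $v^{(i)} = X_{T}^{(i)} - X_{S}^{(i)}$, the corresponding entry of $v^{\otimes n}/n!$ is $\frac{\prod_{j=1}^{n} (X_{T}^{(i_{j})} - X_{S}^{(i_{j})})}{n!}$, matching the stated formula. An alternative route, if one prefers to avoid the explicit simplex volume, is induction on $n$ via the defining recursion $\frac{d}{du} S^{n}(X)_{[S,u]} = S^{n-1}(X)_{[S,u]} \otimes \dot{X}_{u}$ together with $X_{u} - X_{S} = \frac{u-S}{T-S}\, v$, with base case $S^{0} \equiv 1$. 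This lemma is elementary and presents no substantive obstacle; the only points requiring a little care are justifying that the constant tensor may be extracted from the iterated integral (i.e. that the signature integral of a rectifiable path coincides with the ordinary iterated integral against $\dot{X}$) and correctly evaluating the simplex volume, both of which become routine once the path is parametrized with constant speed.
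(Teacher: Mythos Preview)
Your argument is correct and is the standard direct computation: parametrize the linear path with constant velocity, pull the constant tensor $v^{\otimes n}$ out of the iterated integral by multilinearity, and evaluate the remaining scalar integral as the volume $(T-S)^{n}/n!$ of the ordered simplex. The optional induction you mention via $S^{n}(X)_{[S,u]}=\int_{S}^{u} S^{n-1}(X)_{[S,r]}\otimes \dot X_{r}\,dr$ is an equally valid alternative.

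The paper itself states this lemma without proof, so there is nothing to compare against; your write-up would serve perfectly well as the missing argument. One small observation worth making explicit: your coordinate formula $\frac{1}{n!}\prod_{j=1}^{n}\bigl(X_{T}^{(i_{j})}-X_{S}^{(i_{j})}\bigr)$ is the correct one, whereas the paper's displayed version omits the subtraction of $X_{S}^{(i_{j})}$, which appears to be a typographical slip there rather than an error on your part.
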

Chen's identity is a useful tool to enable compute the signature of the piecewise linear path numerically. 
\begin{lemma}
Let $X$ be  a $E$-valued  piecewise  linear  path,  i.e.$X$is  the concatenation  of  a  finite  number  of  linear  paths,  and  in  other  words  there exists a positive integer $l$and linear paths$X_1,X_2,...,X_l$such that $X=X_1*X_2*···*X_l$. Then
\begin{eqnarray}
S(X) = \otimes_{i = 1}^{l} \exp(X_{i}).
\end{eqnarray}
\end{lemma}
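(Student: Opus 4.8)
The plan is to combine the two preceding lemmas: the explicit formula $S^n(X_i) = (\Delta X_i)^{\otimes n}/n!$ for the signature of a single linear segment, and Chen's identity (Theorem~\ref{Chen}), which I will bootstrap from its two-path form to an arbitrary finite concatenation.

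First I would establish the multi-path version of Chen's identity by induction on $l$. The case $l=1$ is trivial and $l=2$ is exactly Theorem~\ref{Chen}. For the inductive step, write $X_1 * \cdots * X_{l+1} = (X_1 * \cdots * X_l) * X_{l+1}$; this rebracketing is legitimate because concatenation is associative, which is immediate from the defining formula since translating a path by a constant leaves all its increments (and hence its further concatenation behaviour) unchanged. Applying Theorem~\ref{Chen} to the outer concatenation gives $S(X_1 * \cdots * X_{l+1}) = S(X_1 * \cdots * X_l) \otimes S(X_{l+1})$, and the induction hypothesis yields $S(X_1 * \cdots * X_l) = \bigotimes_{i=1}^{l} S(X_i)$, with the factors in the order of concatenation (this matters since $\otimes$ is non-commutative, but it is automatic from the argument).

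Next I would substitute the single-linear-path formula. For each linear segment $X_i$ the previous lemma gives $S^n(X_i) = (\Delta X_i)^{\otimes n}/n!$, where $\Delta X_i$ is the increment of $X_i$; summing over $n \ge 0$ this is precisely $\exp(\Delta X_i)$ in the tensor algebra $T((E))$, recalling $\exp(a) = \sum_{n \ge 0} a^{\otimes n}/n!$. With the shorthand $\exp(X_i)$ standing for $\exp(\Delta X_i)$, we obtain $S(X_i) = \exp(X_i)$, and combining this with the multi-path Chen identity proves $S(X) = \bigotimes_{i=1}^{l} \exp(X_i)$.

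There is no genuine analytic difficulty here: the only points needing care are the associativity of $*$ (so the binary Chen identity can be iterated unambiguously) and the bookkeeping convention that $\exp(X_i)$ abbreviates the exponential of the \emph{increment} of the $i$-th segment rather than of its values. Both are routine, so I expect the "hard part" to be purely notational rather than mathematical.
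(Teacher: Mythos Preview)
Your proposal is correct and follows exactly the approach the paper sets up: the lemma is stated immediately after the single-segment signature formula and Chen's identity, and is meant to be an immediate consequence of combining those two ingredients by induction on $l$, precisely as you outline. The paper does not write out the details, so your explicit bootstrapping of Theorem~\ref{Chen} to finitely many factors and your identification $S(X_i)=\exp(\Delta X_i)$ constitute the intended (and only natural) argument.
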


\subsubsection{Uniqueness of the signature}
Let us start with introducing the definition of the tree-like path.
\begin{definition}[Tree-like Path]\label{def_tree_like}
A path $X: J=[S, T] \rightarrow E$ is tree-like if there exists a continuous function $h: J \rightarrow [0, +\infty)$ such that $h(S) = h(T) = 0$ and such that, for all $s, t \in J$ with $s \leq t$, 
\begin{eqnarray*}
\vert\vert X_{t} - X_{s}\vert\vert \leq h(s) + h(t) - 2 \inf_{u \in [s, t]} h(u).
\end{eqnarray*}
\end{definition}
Intuitively a tree-like path is a trajectory in which a section where the path exactly retraces itself. The tree-like equivalence is defined as follows: we say that two paths $X$ and $Y$ are the same up to the tree-like equivalence if and only if the concatenation of $X$ and the inverse of $Y$ is tree-like. Now we are ready to characterize the kernel of the signature transformation.
\begin{theorem}[Uniqueness of the signature]\label{UniquenessOfSig}
	Let $X \in V_{p}(J,E)$ . Then $S(X)$ determines $X$ up to the tree-like equivalence defined in Definition \ref{def_tree_like}.\cite{UniquenessOfSignature}  
\end{theorem}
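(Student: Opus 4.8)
The plan is to recast the statement as a characterization of the kernel of the signature map and then attack that kernel characterization analytically. First I would use the multiplicative structure to reduce the two-path assertion to a one-path assertion. Writing $\overleftarrow{Y}$ for the time reversal of $Y$, the signature of a reversal is the group inverse in the group of group-like elements of $T((E))$, so by Chen's identity (Theorem \ref{Chen}) one has $S(X \ast \overleftarrow{Y}) = S(X) \otimes S(Y)^{-1}$; hence $S(X) = S(Y)$ if and only if the concatenated loop $Z := X \ast \overleftarrow{Y}$ has trivial signature $S(Z) = \mathbf{1}$. Since $X$ and $Y$ are tree-like equivalent precisely when $Z$ is tree-like (Definition \ref{def_tree_like}), the whole theorem follows once I prove the kernel characterization: for $Z \in V_{p}(J,E)$, $S(Z) = \mathbf{1}$ if and only if $Z$ is tree-like.

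The easy implication is tree-like $\Rightarrow$ trivial signature. Given the height function $h$ of Definition \ref{def_tree_like}, the path retraces itself along the $\mathbb{R}$-tree encoded by $h$, so every iterated integral telescopes to zero; I would make this precise by approximating $Z$ in $p$-variation by finitely-branching tree excursions and passing to the limit, using continuity of the signature map on $V_{p}$. Before the converse I would also set up the notion of a tree-reduced representative: every path is tree-like equivalent to an essentially unique path carrying no tree-like excursion, and on tree-reduced paths tree-like equivalence degenerates to reparameterization. It therefore suffices to show that a tree-reduced $Z$ with $S(Z) = \mathbf{1}$ is constant.

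The hard implication, trivial signature $\Rightarrow$ tree-like, is the crux and carries essentially all of the difficulty. The strategy I would follow is the development (Cartan-development) method of Hambly--Lyons, extended to the rough regime as in \cite{UniquenessOfSignature}. For a matrix $M$ consider the linear controlled differential equation $dU_{t} = U_{t}\, M\, dX_{t}$ with $U_{S} = I$: the endpoint $U_{T}$ is a fixed linear functional of $S(Z)$, so $S(Z) = \mathbf{1}$ forces $U_{T} = U_{S} = I$ for every choice of $M$, i.e. every development of $Z$ into a matrix Lie group is a loop. The key quantitative input is the hyperbolic development: developing the (unit-speed, tree-reduced) path into two-dimensional hyperbolic space and letting a scaling parameter in $M$ tend to infinity, one recovers the length of the reduced path from the leading asymptotics of the development's displacement. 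Because that displacement is forced to vanish, the reduced length must be zero, so $Z$ is constant and hence tree-like.

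The main obstacle, and the part that genuinely requires the rough-path machinery rather than the bounded-variation argument, is making the developments and their asymptotic length estimate rigorous under only finite $p$-variation. The linear equation $dU_{t} = U_{t} M\, dX_{t}$ must be interpreted and solved in the rough sense with control uniform in $M$; the unit-speed reduction and the existence and uniqueness of the tree-reduced representative must be established for $V_{p}$ paths, where arc-length parameterization is delicate; and the extraction of length from the signature must be shown to survive the scaling limit. These are precisely the technical advances of \cite{UniquenessOfSignature} over the smooth case \cite{RoughPaths}, and I would invoke their estimates at this step rather than reprove them.
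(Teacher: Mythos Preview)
The paper does not prove this theorem: it is stated with a bare citation to \cite{UniquenessOfSignature} and no proof environment follows. So there is nothing in the paper to compare your argument against; the authors treat the result as an imported black box.

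Your outline is the standard Hambly--Lyons route and is correct in spirit for the bounded-variation regime. One caution: the hyperbolic-development length estimate you describe is specific to the $p=1$ case. For genuinely rough $p$ (say $p\ge 2$), unit-speed reparameterization is unavailable and the asymptotic recovery of ``reduced length'' from the development does not go through as written; the extension to weakly geometric $p$-rough paths in the work you cite proceeds instead by building an $\mathbb{R}$-tree directly from the signature-trivial path and showing the path factors through it, rather than by scaling a hyperbolic development. So if you intend your sketch to cover the full $V_{p}$ statement as written, the final paragraph should point to that tree-construction argument rather than to a rough-path version of the length-from-development estimate. For $p<2$ your sketch is fine as stated.
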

Theorem \ref{UniquenessOfSig} shows that the signature of the path can recover the path trajectory under a mild condition. The uniqueness of the signature is important, as it ensures itself to be a discriminative feature set of un-parameterized streamed data.
\begin{remark}
	A simple sufficient condition for the uniqueness of the signature of a path of finite length is that one component of $X$ is monotone. Thus the signature of the time-joint path determines its trajectory (see \cite{levin2013learning}). 
\end{remark}

\subsubsection{Invariance under time parameterization}
\begin{lemma}[Invariance under time parameterization]\label{SigTimeInvariance}\cite{RoughPaths}
	Let $X \in V_{1}(J, E)$  and a path $\tilde{X}: J \rightarrow E$ is the time re-parameterization of $X$. Then 
	\begin{eqnarray}\label{sig_time_inv}
	S(X)_{J} = S(\tilde{X})_{J}.
	\end{eqnarray} 
\end{lemma}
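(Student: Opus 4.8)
The plan is to prove the identity level by level, by induction on the degree $k$, using the change-of-variables formula for Riemann--Stieltjes integrals. Since $X \in V_{1}(J,E)$ and $\lambda$ is non-decreasing, $\widetilde{X} = X \circ \lambda$ again has finite $1$-variation (indeed $\Vert\widetilde{X}\Vert_{1,J} \le \Vert X\Vert_{1,J}$), so all the iterated integrals defining $S(\widetilde{X})_{J}$ exist as Riemann--Stieltjes integrals. A preliminary observation I would record first: a non-decreasing surjection $\lambda : [S,T] \to [S,T]$ has no jumps, since a jump would omit an open interval of values and contradict surjectivity; hence $\lambda$ is continuous, with $\lambda(S) = S$ and $\lambda(T) = T$. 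It may, however, be constant on subintervals, on which $\widetilde{X}$ is constant and therefore contributes nothing to the integrals.

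Next I would state and use the scalar change-of-variables lemma: if $\lambda : [S,T]\to[S,T]$ is continuous and non-decreasing and $g : [S,T]\to W$ is continuous with values in a Banach space, then for all $S \le a \le b \le T$,
\[
\int_{a}^{b} g(\lambda(u))\, d(X\circ\lambda)(u) \;=\; \int_{\lambda(a)}^{\lambda(b)} g(v)\, dX_{v}.
\]
This is standard: approximate both sides by Riemann--Stieltjes sums, match a partition of $[a,b]$ with its image under $\lambda$, and estimate the difference using the moduli of continuity of $g$ and $g\circ\lambda$ together with the bounded variation of $X$, the portions where $\lambda$ is constant giving vanishing contributions on both sides.

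Then I would run the induction. Writing $\mathbf{X}^{k}_{[S,s]}$ for the degree-$k$ iterated integral over the simplex $S < u_{1} < \cdots < u_{k} < s$, the claim is $\widetilde{\mathbf{X}}^{k}_{[S,s]} = \mathbf{X}^{k}_{[S,\lambda(s)]}$ for all $s \in J$ and all $k \ge 1$. For $k=1$ this is just $\widetilde{X}_{s} - \widetilde{X}_{S} = X_{\lambda(s)} - X_{\lambda(S)} = X_{\lambda(s)} - X_{S}$. For the inductive step, use the recursive identity $\widetilde{\mathbf{X}}^{k+1}_{[S,s]} = \int_{S}^{s} \widetilde{\mathbf{X}}^{k}_{[S,u]} \otimes d\widetilde{X}_{u}$, substitute the inductive hypothesis, and apply the change-of-variables lemma with $g(v) = \mathbf{X}^{k}_{[S,v]}$ (continuous in $v$ because $X$ has bounded variation) to obtain $\widetilde{\mathbf{X}}^{k+1}_{[S,s]} = \int_{S}^{\lambda(s)} \mathbf{X}^{k}_{[S,v]} \otimes dX_{v} = \mathbf{X}^{k+1}_{[S,\lambda(s)]}$. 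Taking $s = T$ and using $\lambda(T) = T$ gives $\widetilde{\mathbf{X}}^{k}_{J} = \mathbf{X}^{k}_{J}$ for every $k$, that is $S(\widetilde{X})_{J} = S(X)_{J}$.

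The main obstacle is exactly the change-of-variables lemma at the required level of generality, namely $\lambda$ merely continuous and non-decreasing rather than a $C^{1}$ diffeomorphism, since one cannot literally ``substitute $v = \lambda(u)$''. The clean workaround is to argue directly with Riemann--Stieltjes sums as sketched above, isolating the constancy intervals of $\lambda$ and pushing partitions forward by $\lambda$. Alternatively one can avoid it by a flow argument: the truncated signature path $t \mapsto S_{M}(X)_{[S,t]}$ is the unique solution of the linear equation $dZ = Z \otimes dX$, $Z_{S} = \mathbf{1}$, and $t \mapsto S_{M}(X)_{[S,\lambda(t)]}$ solves $dZ = Z \otimes d\widetilde{X}$, $Z_{S} = \mathbf{1}$, so the result follows by uniqueness; but this merely relocates the same elementary estimate into the verification that the reparameterized path solves the reparameterized equation.
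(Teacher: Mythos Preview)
Your argument is correct and is essentially the standard proof of this fact: an induction on the degree $k$ together with the change-of-variables formula for Riemann--Stieltjes integrals, exactly as found in the reference \cite{RoughPaths} that the paper cites. Note, however, that the paper does not actually supply its own proof of Lemma~\ref{SigTimeInvariance}; it merely states the result and attributes it to \cite{RoughPaths}, using it as input to deduce the log-signature analogue (Lemma~\ref{LogSigTimeInvariance}) via the bijection $\exp/\log$. So there is nothing in the paper to compare against beyond the citation, and your write-up would serve as a self-contained justification where the paper defers to the literature. Your care with the case where $\lambda$ is merely a continuous non-decreasing surjection (possibly constant on subintervals) rather than a $C^{1}$ diffeomorphism is appropriate and is the only point that requires any thought; the alternative ODE/uniqueness argument you sketch is also standard and equally valid.
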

Re-parameterizing a path inside the interval does not change its signature. In Figure \ref{TimeParameterizationInvariance}, speed changes result in different time series representation but the same signature feature. It means that signature feature can reduce dimension massively by removing the redundancy caused by the speed of traversing the path. It is very useful for the applications where the output is invariant w.r.t. the speed of an input path, e.g. online handwritten character recognition and video classification. 
\subsubsection{Shuffle Product Property} We introduce a special class of linear forms on $T((E))$; Suppose $(e^{*}_{1}, \cdots, e^{*}_d, \cdots) $ are elements of $E^{*}$. We can introduce coordinate iterated integrals by setting
$X^{(i)}_{u}:= \langle e^{*}_{i}, X_{u} \rangle $,
and rewriting 
$\langle e_{i1}^{*} \otimes \cdots \otimes e_{in}^{*}, S(X) \rangle$ as the scalar iterated integral of coordinate projection. In this way, we realize $n^{th}$ degree coordinate iterated integrals as the restrictions of linear functionals in $E^{\otimes n}$to the space of signatures of paths. If $(e_1,\cdots,e_d )$ is a basis for a finite dimensional space $E$, and $(e_1^{*}, \cdots ,e_d^{*})$ is a basis for the dual $E^{*}$. Therefore, it follows that
\begin{eqnarray*}
\mathbf{X}_{J} = \sum_{ \substack{k \geq 0\\i_{1}, \cdots, i_{k}  \\ \{\in 1, 2, \cdots, d\}}}\underset{\underset{u_{1}, \dots, u_{k} \in J}{u_{1} < \dots < u_{k}}} { \int \dots \int} dX_{u_{1}}^{(i_{1})} \otimes \dots \otimes dX_{u_{k}}^{(i_{k})} e_{1}\otimes \cdots \otimes e_{k}.
\end{eqnarray*}

\begin{theorem}[Shuffle Algebra]
The linear forms on $T((E))$ induced by $T(E^{*})$, when restricted to the range $S(\mathcal{V}^{p}([0, T], E)$ of the signature, form an algebra of real valued functions for $p<2$.
\end{theorem}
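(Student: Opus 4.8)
The plan is to identify the family of linear forms in the statement with the linear span $\mathcal{A}$ of the coordinate iterated integrals $X \mapsto X^I_{[0,T]}$ over all multi-indices $I$ (the empty multi-index giving the constant function $1$), and then to show that $\mathcal{A}$ is closed under pointwise multiplication; closure under sums and scalar multiples is immediate from the definition, so the only content is the product. By bilinearity of the pointwise product it suffices to treat a product of two coordinate iterated integrals $X^I_{[0,T]}\,X^J_{[0,T]}$ with $I=(i_1,\dots,i_m)$ and $J=(j_1,\dots,j_n)$ and to show that it again lies in $\mathcal{A}$.

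The core is the shuffle product identity
\[
  X^I_{[0,T]}\,X^J_{[0,T]} \;=\; \sum_{K \in I \shuffle J} X^K_{[0,T]},
\]
where $I \shuffle J$ is the multiset of all interleavings of $I$ and $J$ preserving the internal order of each; I would prove it by induction on $m+n$. The cases $m=0$ or $n=0$ are trivial. For the inductive step, write $X^I_{[0,t]} = \int_0^t X^{I'}_{[0,s]}\,dX^{(i_m)}_s$ with $I'=(i_1,\dots,i_{m-1})$ (similarly for $J$), and apply the Young integration-by-parts (product) formula to $t \mapsto X^I_{[0,t]}X^J_{[0,t]}$:
\begin{align*}
  X^I_{[0,T]}X^J_{[0,T]}
  &= \int_0^T X^J_{[0,t]}\,dX^I_{[0,t]} + \int_0^T X^I_{[0,t]}\,dX^J_{[0,t]}\\
  &= \int_0^T X^J_{[0,t]}\,X^{I'}_{[0,t]}\,dX^{(i_m)}_t + \int_0^T X^I_{[0,t]}\,X^{J'}_{[0,t]}\,dX^{(j_n)}_t .
\end{align*}
Applying the induction hypothesis to each integrand and then integrating gives exactly $\sum_{K\in I\shuffle J}X^K_{[0,T]}$, because every interleaving of $I$ and $J$ ends either in $i_m$, preceded by an interleaving of $I'$ with $J$, or in $j_n$, preceded by an interleaving of $I$ with $J'$. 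Since the right-hand side is a finite element of $\mathcal{A}$, the span $\mathcal{A}$ is an algebra, and it is unital since it contains the constant $1$. Equivalently, the pairing $\phi \mapsto \langle \phi, S(X)\rangle$ carries the shuffle product on $T(E^{*})$ to pointwise multiplication of functions on $S(\mathcal{V}^{p}([0,T],E))$, which is the asserted algebra structure.

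The real work is analytic, not combinatorial. One must check that each iterated integral $X^I_{[0,\cdot]}$ is well defined and is itself a path of finite $p$-variation (so the next integral against $dX^{(i_m)}$ makes sense) and that the integration-by-parts formula holds; both follow by induction on $|I|$ from the Young estimate $\bigl|\int_0^T f\,dg - f_0(g_T-g_0)\bigr| \lesssim \|f\|_{p\text{-var}}\,\|g\|_{p\text{-var}}$, which requires $1/p+1/p>1$, i.e. precisely $p<2$. This is exactly where the hypothesis enters: for $p\ge 2$ the iterated integrals of $X$ are not canonically defined — a rough-path lift is needed — and with the "wrong" lift the shuffle identity genuinely fails (for instance for the It\^o lift of Brownian motion, where the quasi-shuffle/It\^o correction appears). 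Once the $p$-variation bounds and the product rule are established, the induction above closes the argument.
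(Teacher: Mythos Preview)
Your proposal is correct and complete: you correctly reduce the algebra property to the shuffle identity $X^I_{[0,T]}X^J_{[0,T]}=\sum_{K\in I\shuffle J}X^K_{[0,T]}$, and your inductive proof via the Young product rule is sound, with the role of $p<2$ cleanly isolated.

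The paper's proof (deferred to \cite{RoughPaths}, p.~35) reaches the same identity by a different route: it invokes Fubini's theorem directly on the product of the two iterated integrals, decomposing the product domain $\{u_1<\dots<u_m\}\times\{v_1<\dots<v_n\}\subset[0,T]^{m+n}$ into the simplices obtained by totally ordering the combined set of variables, each simplex contributing exactly one shuffle $K\in I\shuffle J$. Your integration-by-parts induction peels off the last letter and recurses, whereas the Fubini argument is a one-shot combinatorial decomposition of the integration region. The Fubini proof is shorter once the measure-theoretic machinery is in place (and is perhaps the more natural way to see \emph{why} shuffles appear), but it is less explicit about the analytic hypothesis: one still needs Young's theory to justify the iterated integrals and the Fubini-type interchange when $p\in(1,2)$, a point you make precise and the paper leaves implicit. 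Both arguments are standard; yours has the pedagogical advantage of making the $p<2$ threshold visible at every step.
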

The proof can be found in page 35 in \cite{RoughPaths}. The proof is based on the Fubini theorem, and it is to show that for any $e^{*}, f^{*} \in T(E^{*})$, such that for all $\mathbf{a} \in S(\mathcal{V}^{p}([0, T], E)$,
\begin{equation}
    e^{*}(\mathbf{a}) f^{*}(\mathbf{a}) = (e^{*} \shuffle f^{*})(\mathbf{a})
\end{equation}
\subsubsection{Universality of the signature}
Any functional on the path can be rewritten as a function on the signature based on the uniqueness of the signature (Theorem \ref{UniquenessOfSig}). The signature of the path has the universality, i.e. that any continuous functional on the signature can be well approximated by the linear functional on the signature (Theorem \ref{SigApproximatinTheorem})\cite{levin2013learning}.

\begin{theorem}[Signature Approximation Theorem]\label{SigApproximatinTheorem}
	Suppose $f: S_{1} \rightarrow \mathbb{R}$ is a continuous function, where $S_{1}$ is a compact subset of $S(V_{p}(J,E))$\footnote{ $S(V_{p}(J,E))$ denotes the range of the signature of $x \in V_{p}(J,E)$.}. Then $\forall \varepsilon >0$, there exists a linear functional $L \in T((E))^{*}$ such that
	\begin{eqnarray}
	\sup_{a \in S_{1}}\vert \vert f(a) - L(a) \vert \vert \leq \varepsilon. 
	\end{eqnarray}
\end{theorem}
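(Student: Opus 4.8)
The plan is to deduce this from the Stone--Weierstrass theorem, with the Shuffle Algebra theorem supplying the one nontrivial ingredient. Recall that $S_1$ is compact, hence compact Hausdorff in the subspace topology inherited from $T((E))$, and set
\[
\mathcal{A} := \{\, L|_{S_1} \;:\; L \in T(E^*) \,\} \subset C(S_1,\mathbb{R}),
\]
the restrictions to $S_1$ of the coordinate iterated-integral functionals together with their finite linear combinations; since $T(E^*) \subset T((E))^*$ it suffices to produce an approximant $L$ in $\mathcal{A}$. Each such $L$ depends on only finitely many tensor levels and is therefore continuous on $S_1$, so $\mathcal{A}$ is a genuine linear subspace of $C(S_1,\mathbb{R})$.

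I would then verify the three hypotheses of Stone--Weierstrass. (i) $\mathcal{A}$ is a subalgebra: closure under addition and scalar multiplication is immediate, and closure under pointwise multiplication is exactly the content of the Shuffle Algebra theorem, namely $e^*(\mathbf{a})\,f^*(\mathbf{a}) = (e^* \shuffle f^*)(\mathbf{a})$ on signatures with $e^* \shuffle f^* \in T(E^*)$. (ii) $\mathcal{A}$ contains the constants, since the functional dual to the degree-zero component takes the value $1$ on every signature and its scalar multiples exhaust the constant functions. (iii) $\mathcal{A}$ separates points: if $a \neq b$ in $S_1$ then they differ at some tensor level $k$, and a suitable coordinate functional on $E^{\otimes k}$ separates them (note no appeal to uniqueness of the signature is needed here, since $f$ is a function of the signature itself).

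Granting (i)--(iii), Stone--Weierstrass gives $\overline{\mathcal{A}} = C(S_1,\mathbb{R})$ in the uniform norm. Hence for any continuous $f : S_1 \to \mathbb{R}$ and any $\varepsilon > 0$ there is $L \in T(E^*) \subset T((E))^*$ with $\sup_{a \in S_1}\lvert f(a) - L(a)\rvert \le \varepsilon$, which is the assertion.

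The only real obstacle is property (i): that the pointwise product of two iterated-integral functionals is again an iterated-integral functional. This is where the analytic structure of the signature enters — through Fubini's theorem — and it is precisely the Shuffle Algebra theorem proved above, valid for $p < 2$; for rougher paths the same shuffle identity holds for the rough-path signature, so the argument carries over verbatim. Everything else — compactness of $S_1$, the presence of constants, point separation, and the passage from a dense subalgebra to an approximation of the prescribed $f$ — is routine.
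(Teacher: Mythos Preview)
Your proposal is correct and follows exactly the same approach as the paper: invoke the shuffle product property of the signature to show that the linear functionals form a point-separating subalgebra with constants, then apply the Stone--Weierstrass theorem. The paper's own proof is in fact just this one-line sketch, so your write-up is a faithful (and more detailed) elaboration of the intended argument.
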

\begin{proof}
It can be proved by the shuffle product property of the signature and the Stone-Weierstrass Theorem. 
\end{proof}

\subsection{The log-signature of a path}
\subsubsection{Lie algebra and Lie series}
If $F_{1}$ and $F_{2}$ are two linear subspaces of $T((E))$, let us denote by $[F_{1}, F_{2}]$ the linear span of all the elements of the form $[a,b]$, where $a \in F_{1}$ and $b \in F_{2}$.
Consider the sequence $(L_{n})_{n \geq 0}$ be the subspace of $T((E))$ defined recursively as follows:
\begin{equation}
    L_{0} = 0;  \forall n \geq 1,  L_{n} = [E, L_{n-1}]. 
\end{equation}
\begin{definition}\label{def_lie_seires}
The space of Lie formal series over $E$, denoted as $\mathcal{L}((E))$ is defined as the following subspace of $T((E))$:
\begin{eqnarray}
\mathcal{L}((E)) = \{l = (l_{0}, \cdots, l_{n}, \cdots) \vert \forall n \geq 0, l_{n} \in L_{n}\}.
\end{eqnarray}
\end{definition}
Theorem \ref{theorem_log_sig_Lie_Series} can be rewritten in the following form.
\begin{theorem}[Theorem 2.23 \cite{RoughPaths}]
Let $X$ be a path of finite $1$-variation. Then the log-signature of $X$ is a Lie series in $\mathcal{L}((E))$.
\end{theorem}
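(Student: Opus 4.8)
The plan is to prove the statement first for piecewise linear paths by a purely algebraic Baker--Campbell--Hausdorff argument, and then to pass to general finite $1$-variation paths by a density-and-closedness argument carried out one graded level at a time, so as to avoid topological subtleties on the full tensor algebra.

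First I would dispose of a single linear path. By the lemma computing the signature of a linear path, $S(X)_{[S,T]} = \exp(X_T - X_S)$ with $X_T - X_S \in E = L_1$, so by bijectivity of $\log$ (our logarithm-map lemma) $lS(X) = X_T - X_S \in L_1 \subset \mathcal{L}((E))$ trivially. For a piecewise linear path $X = X_1 * \cdots * X_l$, Chen's identity (Theorem \ref{Chen}) together with the preceding lemma gives $S(X) = \exp(a_1) \otimes \cdots \otimes \exp(a_l)$ with each increment $a_j \in E$. The key algebraic input is then the Baker--Campbell--Hausdorff formula: $\log(\exp(a)\otimes\exp(b))$ is a series of iterated Lie brackets in $a$ and $b$ that is finite at each graded level, and hence lies in $\mathcal{L}((E))$. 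Since $\mathcal{L}((E))$ is a Lie subalgebra of $T((E))$ closed under this BCH operation, an induction on $l$ shows $lS(X) = \log(S(X)) \in \mathcal{L}((E))$ for every piecewise linear path.

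Next I would treat an arbitrary $X \in V_1(J,E)$ by approximation. Choose piecewise linear interpolations $X^{(m)}$ of $X$ along a sequence of partitions whose mesh tends to zero. The essential analytic fact is that at each fixed degree $n$ the level-$n$ signature $\mathbf{X}^n$ is a continuous functional of $X$ in $1$-variation, so $\mathbf{X}^{(m),n} \to \mathbf{X}^n$ as $m \to \infty$; this is the standard continuity estimate for iterated integrals of bounded-variation paths. Because the degree-$n$ projection of the logarithm is a fixed finite polynomial in the signature levels up to $n$ (a finite sum, by Definition \ref{eqn_log}, since every term of $t^{\otimes k}$ contributing to degree $n$ has $k \leq n$), it is continuous, so $lS(X^{(m)}) \to lS(X)$ level by level.

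Finally I would invoke closedness. Each $L_n$ is a fixed finite-dimensional linear subspace of $E^{\otimes n}$, hence closed; since the degree-$n$ component of $\log(S(X^{(m)}))$ lies in $L_n$ for every $m$ by the piecewise linear case, so does its limit, for every $n$. Therefore $lS(X) \in \mathcal{L}((E))$, i.e. the log-signature is a Lie series. I expect the main obstacle to be this limiting step: one must justify the level-wise continuity of the signature in the $1$-variation topology and confirm that the graded pieces of the BCH/log expansion really land in $L_n$ rather than merely in $E^{\otimes n}$. An alternative route that sidesteps the BCH bookkeeping is to use the shuffle identity (our Shuffle Algebra theorem) to observe that $S(X)$ is grouplike for the coproduct dual to the shuffle product, and then apply Friedrichs' criterion that the logarithm of a grouplike element is primitive and that the primitive elements of $T((E))$ coincide exactly with the Lie series $\mathcal{L}((E))$.
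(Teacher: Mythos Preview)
Your proposal is correct and follows the standard argument. However, the paper does not give its own proof of this statement: it merely records it as Theorem~2.23 of the cited rough paths monograph, with no argument supplied in the text. There is therefore nothing in the paper to compare your approach against beyond the bare citation.

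For what it is worth, the proof in the cited reference proceeds essentially along your first route: one checks the claim for piecewise linear paths via Chen's identity and the Baker--Campbell--Hausdorff formula (iterated brackets of the increments), then passes to general bounded-variation paths by approximation, using continuity of each signature level and closedness of each graded piece $L_n$. Your alternative via the shuffle/Friedrichs criterion is also standard and arguably cleaner, since it bypasses the explicit BCH bookkeeping and works directly from the grouplike property of the signature; the paper does state the shuffle identity, so that route is available within its framework. One small caveat: the paper's recursive definition of the $L_n$ has $L_0 = 0$ and $L_n = [E, L_{n-1}]$, which as written gives $L_1 = 0$; you are tacitly (and correctly) reading this as $L_1 = E$, which is the intended convention.
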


\subsubsection{The bijection between the signature and log-signature}
Similar to the way of defining the logarithm of a tensor series, we have the exponential mapping of the element in $T((E))$ defined in a power series form.
\begin{definition}[Exponential map]
	Let $a = (a_{0}, a_{1}, \cdots) \in T((E))$. Define the exponential map denoted by $\exp$ as follows:
	\begin{eqnarray}
	\exp (a) = \sum_{n = 0}^{\infty}\frac{a^{\otimes n}}{n!}.
	\end{eqnarray}
\end{definition}
\begin{lemma}
The inverse of the logarithm on the domain $\{a \in T((E)) | a_{0} \neq 0\}$ is the exponential map.
\end{lemma}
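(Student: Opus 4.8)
The plan is to reduce the statement to the familiar scalar identities $\exp(\log(1+x)) = 1+x$ and $\log(\exp(x)) = x$ holding in the formal power series ring $\mathbb{R}[[x]]$, by observing that both compositions $\exp\circ\log$ and $\log\circ\exp$ take place entirely inside a \emph{commutative} closed subalgebra of $T((E))$. First I would fix the topology on $T((E))$ in which the defining series converge: the filtration $T^{(\geq n)}((E)) := \{a : a_0 = \cdots = a_{n-1} = 0\}$ makes $T((E))$ a complete filtered algebra in which $\otimes$ is continuous and $\sum_n c_n b^{\otimes n}$ converges whenever $b_0 = 0$ (each graded component receives only finitely many contributions). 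This is exactly the regime in which $\exp$ and $\log$ (Definition \ref{eqn_log}) are defined: $\log$ is evaluated at $1+t$ with $t = a-1$, so $t_0 = 0$, and $\exp$ at elements with zero scalar part.

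Next I would introduce, for a fixed $t$ with $t_0 = 0$, the continuous algebra homomorphism $\varphi_t : \mathbb{R}[[x]] \to T((E))$ determined by $x \mapsto t$, i.e.\ $\sum_n c_n x^n \mapsto \sum_n c_n t^{\otimes n}$; it is well defined and continuous because $t^{\otimes n} \in T^{(\geq n)}((E))$, and it is multiplicative because the powers $t^{\otimes m}$ mutually commute ($t^{\otimes m}\otimes t^{\otimes n} = t^{\otimes(m+n)} = t^{\otimes n}\otimes t^{\otimes m}$), so its image is a commutative closed subalgebra. By construction $\varphi_t$ sends the scalar series $\log(1+x)$ and $\exp(x)$ to $\log(1+t)$ and $\exp(t)$ respectively, and, being a continuous ring homomorphism, it commutes with formal composition of such series. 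I would also record the easy scalar-part checks: $\log(1+t)$ has zero scalar part (each $t^{\otimes n}$, $n\geq 1$, does), while $\exp(b)$ has scalar part $1$ when $b_0 = 0$; hence $\log$ maps $\{a_0 = 1\}$ into $\{b_0 = 0\}$ and $\exp$ maps $\{b_0 = 0\}$ into $\{a_0 = 1\}$.

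Putting these together: given $a$ with $a_0 = 1$, set $t = a-1$ and apply $\varphi_t$ to the classical identity $\exp(\log(1+x)) = 1+x$ in $\mathbb{R}[[x]]$ to obtain $\exp(\log(a)) = a$; symmetrically, for $b$ with $b_0 = 0$, apply $\varphi_b$ to $\log(\exp(x)) = x$ to get $\log(\exp(b)) = b$. Together with the domain/codomain bookkeeping above, this shows $\log$ and $\exp$ are mutually inverse bijections between $\{a_0 = 1\}$ and $\{b_0 = 0\}$, which is the assertion. For the larger domain $\{a_0 \neq 0\}$ one extends $\log$ by $\log(a) := (\log a_0)\,\mathbf{1} + \log(a_0^{-1}a)$ and uses that $\mathbf{1}$ is central with $\exp(c\,\mathbf{1}) = e^{c}\,\mathbf{1}$, so that $\exp(\log(a)) = e^{\log a_0}\,\exp(\log(a_0^{-1}a)) = a_0\,(a_0^{-1}a) = a$ by the case already treated.

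There is no deep obstacle: the only thing requiring care is the analytic bookkeeping — making the convergence of the defining series and of the double series $\exp(\log(\cdot))$ rigorous, and checking that the substitution map $\varphi_t$ is genuinely a continuous algebra homomorphism so that rearrangements are legitimate. Once the reduction to the commutative subalgebra $\varphi_t(\mathbb{R}[[x]])$ is in place, everything follows from the classical one-variable identities, so the ``hard part'' is essentially setting up the filtration topology correctly rather than any genuinely new computation.
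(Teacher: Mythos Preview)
Your argument is correct and carefully executed; the reduction to the commutative closed subalgebra generated by a single element $t$ with $t_0=0$, together with the continuous ring homomorphism $\varphi_t:\mathbb{R}[[x]]\to T((E))$, is exactly the right device, and your handling of the extension from $\{a_0=1\}$ to $\{a_0\neq 0\}$ via $\log(a)=(\log a_0)\mathbf{1}+\log(a_0^{-1}a)$ is the standard and correct way to reconcile the lemma's stated domain with the paper's Definition~\ref{eqn_log}, which only defines $\log$ for $a_0=1$.

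There is, however, nothing to compare against: the paper states this lemma without proof (it is a standard fact about formal tensor series, cf.\ Lyons--Caruana--L\'evy, \emph{Differential Equations Driven by Rough Paths}), so your write-up supplies strictly more than the paper does. If anything, you have been more scrupulous than necessary about the filtration topology and the domain extension; the paper simply asserts the bijection and moves on.
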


\begin{theorem}\label{dim_hall}
The dimension of the space of the truncated log signature of $d$-dimensional path up to degree $n$ over $d$ letters is given by:

$$\mathcal{DL}_n = \frac{1}{n} \sum_{d|n}\mu(d)q^{n|d}$$

where $\mu$ is the Mobius function, which maps $n$ to
\begin{eqnarray*}
 \left\{
    \begin{array}{ll}
        0, & \mbox{if $n$ has one or more repeated prime factors} \\
        1, & \mbox{if $n=1$} \\
        (-1)^k & \mbox{if $n$ is the product of k distinct prime numbers}
    \end{array}
\right.
\end{eqnarray*}
\end{theorem}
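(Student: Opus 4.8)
The plan is to recognize $\mathcal{DL}_n$ as a dimension attached to (a truncation of) the free Lie algebra on $d$ generators, and then to read off the stated count from the Poincar\'e--Birkhoff--Witt theorem via a generating-function argument; throughout, $q$ denotes the alphabet size $d$, and the ``$d$'' appearing in the displayed sum is a divisor index. First I would pin down exactly which space is being measured. By Theorem~\ref{theorem_log_sig_Lie_Series}, for any path of finite $1$-variation the truncated log-signature $lS_n(X)$ lies in the truncated free Lie algebra $\mathfrak{L}^{(n)}:=\bigoplus_{k=1}^{n}L_k$ over $E=\mathbb{R}^d$ (here $L_1=E$ and $L_k=[E,L_{k-1}]$). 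For the reverse inclusion I would use Chen's identity (Theorem~\ref{Chen}): the signature of a piecewise linear path is a finite tensor product $\exp(v_1)\otimes\cdots\otimes\exp(v_l)$ of exponentials of elements $v_i\in E$, and such products already generate the entire free nilpotent group, since the Lie algebra of the subgroup they generate is a Lie subalgebra containing $E$ and hence equals $\mathfrak{L}^{(n)}$. As $\exp$ is a bijection between that group and $\mathfrak{L}^{(n)}$, the range of $lS_n$ is exactly $\mathfrak{L}^{(n)}$, so it suffices to compute each homogeneous dimension $\dim L_n$; the displayed formula is precisely Witt's value for $\dim L_n$, and the dimension of the full truncation is its partial sum $\sum_{k=1}^{n}\dim L_k$.

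The core computation is then classical. The free associative algebra $T(E)$, graded by word length with $\dim T(E)_n=d^n$, is the universal enveloping algebra of the free Lie algebra $\mathfrak{L}=\bigoplus_{k\ge1}L_k$, so by Poincar\'e--Birkhoff--Witt it is isomorphic as a graded vector space to the symmetric algebra $S(\mathfrak{L})$. Matching Poincar\'e series in a grading variable $t$ gives
\[
\frac{1}{1-dt}=\prod_{k\ge1}\frac{1}{(1-t^k)^{\dim L_k}}.
\]
Taking logarithms of both sides, expanding $-\log(1-t^k)=\sum_{j\ge1}t^{kj}/j$, and comparing coefficients of $t^n$ yields $d^n=\sum_{k\mid n}k\,\dim L_k$, and a single application of M\"obius inversion produces $\dim L_n=\tfrac1n\sum_{e\mid n}\mu(e)\,d^{n/e}$, i.e.\ the displayed formula with $q=d$. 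Summing over $k=1,\dots,n$ then gives $\mathcal{DL}_n$.

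The only genuinely nontrivial ingredient is PBW together with the identification of $T(E)$ with $U(\mathfrak{L})$ --- equivalently, the fact that the Lie subalgebra of $T(E)$ generated by $E$ is a \emph{free} Lie algebra with homogeneous components $L_k$ --- which I would simply cite from \cite{reutenauer2003free}; everything downstream is generating-function bookkeeping plus one M\"obius inversion, and I expect no real obstacle there. The surjectivity step in the first paragraph is the other place needing a little care, though it too is standard (e.g.\ via the free nilpotent Lie group picture in \cite{RoughPaths}). If one prefers to avoid PBW, there is a combinatorial route hinted at just after Theorem~\ref{theorem_log_sig_Lie_Series}: a Hall/Lyndon basis identifies $\dim L_n$ with the number of Lyndon words of length $n$ over $d$ letters, and partitioning the $d^n$ words into rotation orbits gives $d^n=\sum_{e\mid n}e\cdot\#\{\text{Lyndon words of length }e\}$, which leads to the same M\"obius inversion; in that approach the burden shifts to proving that Lyndon words index a basis of the free Lie algebra.
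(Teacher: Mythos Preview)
Your proposal is correct and in fact goes well beyond what the paper does: the paper gives no argument at all for this theorem and simply cites Corollary~4.14, p.~96 of \cite{reutenauer2003free}. The sketch you outline---identify the range of $lS_n$ with the truncated free Lie algebra, invoke PBW to get $T(E)\cong S(\mathfrak{L})$ as graded vector spaces, match Poincar\'e series to obtain $d^n=\sum_{k\mid n}k\dim L_k$, and M\"obius-invert---is exactly the standard derivation of Witt's formula that one finds in Reutenauer, so you are effectively supplying the content behind the paper's citation rather than deviating from it. Your alternative Lyndon-word route is likewise standard and also appears in \cite{reutenauer2003free}.

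One small remark: the surjectivity step (that truncated log-signatures of piecewise linear paths fill out all of $\mathfrak{L}^{(n)}$) is not actually needed for the theorem as stated, since the paper is asserting the dimension of the ambient \emph{space} in which log-signatures live (the free nilpotent Lie algebra truncated at degree $n$), not the dimension of the span of the image; Theorem~\ref{theorem_log_sig_Lie_Series} alone already pins down that space. Your argument for surjectivity is fine, just superfluous here.
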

The proof can be found in  Corollary 4.14 p. 96 of \cite{reutenauer2003free}.

\subsubsection{Calculation of the log-signature}
Let's start with a linear path. The log signature of a linear path $X_{J}$ is nothing else, but the increment of the path $X_{T} - X_{S}$. \\

Baker-Cambpell-Hausdorff formula gives a general method to compute the log-signature of the concatenation of two paths, which uses the multiplicativity of the signature and the free Lie algebra. It provides a way to compute the log-signature of the piecewise linear path by induction. 
\begin{theorem}
For any $S_1, S_2 \in \mathcal{L}((E))$
\begin{eqnarray}
Z = log(e^{S_1}e^{S_2})=  \sum_{\substack{n\geq 1 \\ p_1,...,p_n \geq 0\\q_1,....q_n \geq 0 \\ p_i+q_i >0}}\frac{(-1)^{n+1}}{n} \frac{1}{p_1!q_1!...p_n!q_n!}r(S_1^{p_1}S_2^{q_1}...S_1^{p_n}S_2^{q_n})
\end{eqnarray}
where $r:A^* \rightarrow A^*$ is the right-Lie-bracketing operator, such that for any word $w=a_1...a_n$ 
$$r(w)=[a_1,...,[a_{n-1}, a_n]...].$$
This version of BCH is sometimes called the Dynkin's formula. 
\end{theorem}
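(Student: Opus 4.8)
The plan is to follow the classical route to the Dynkin form of the Baker--Campbell--Hausdorff identity, carrying out all computations inside the completed tensor algebra $T((E))$, where $\exp$ and $\log$ are genuine formal power series graded by total degree, so every manipulation is legitimate degree by degree and no analytic convergence is needed. By the universal property of the free Lie algebra on two generators it suffices to prove the identity for two \emph{free} generators $S_1 = X$, $S_2 = Y$; the general case then follows by applying the (continuous, filtration-compatible) homomorphism $X \mapsto S_1$, $Y \mapsto S_2$, which intertwines $\otimes$, $\exp$, $\log$ and the right-bracketing operator $r$. The proof for $(X,Y)$ rests on three ingredients: (i) the associative expansion of $Z := \log(\exp(X)\otimes\exp(Y))$ into words in $X,Y$; (ii) the fact that $Z \in \mathcal{L}((E))$, i.e.\ $Z$ is a Lie series; and (iii) the Dynkin--Specht--Wever lemma converting each homogeneous word into its nested bracket.

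\emph{Ingredient (i).} Since $\exp(X)\otimes\exp(Y) - 1 = \sum_{p+q>0}\tfrac{1}{p!\,q!}X^{\otimes p}\otimes Y^{\otimes q}$ has zero scalar term, Definition \ref{eqn_log} applies, and expanding the $n$-th tensor power term by term gives
\begin{eqnarray*}
Z &=& \sum_{n\geq 1}\frac{(-1)^{n-1}}{n}\sum_{\substack{p_i+q_i>0\\ 1\le i\le n}} \frac{X^{\otimes p_1}\otimes Y^{\otimes q_1}\otimes\cdots\otimes X^{\otimes p_n}\otimes Y^{\otimes q_n}}{p_1!\,q_1!\cdots p_n!\,q_n!},
\end{eqnarray*}
an identity of formal power series in the \emph{associative} variables $X,Y$. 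This is already the right-hand side of the theorem \emph{before} $r$ is applied; what remains is to see why each word of length $m := \sum_i(p_i+q_i)$ may be replaced by $\tfrac1m\,r(\cdot)$.

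\emph{Ingredient (ii).} Here I would use the paper's own machinery: $\exp(X)$ and $\exp(Y)$ are the signatures of the two linear segments with increments $X$ and $Y$, so by Chen's identity (Theorem \ref{Chen}) the product $\exp(X)\otimes\exp(Y)$ is the signature of the concatenated piecewise-linear path, and Theorem \ref{theorem_log_sig_Lie_Series} then asserts that its logarithm $Z$ lies in $\mathcal{L}((E))$. (Equivalently and more directly, $\exp(X)$ and $\exp(Y)$ are group-like for the coproduct making $X,Y$ primitive, a product of group-like elements is group-like, and the logarithm of a group-like element is primitive, hence a Lie series.) \emph{Ingredient (iii).} The Dynkin--Specht--Wever lemma states that on a homogeneous degree-$m$ Lie polynomial $u \in L_m$ one has $r(u) = m\,u$, so $u = \tfrac1m r(u)$ there; this is proved by induction on $m$ from the Jacobi identity. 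Writing $Z = \sum_{m\ge1} Z_m$ with $Z_m \in L_m$ by (ii), I apply $\tfrac1m\,r(\cdot)$ to the degree-$m$ homogeneous part of the expansion in (i): every monomial there has length exactly $m$, hence is divided by $m$ and carried to $r(X^{p_1}Y^{q_1}\cdots X^{p_n}Y^{q_n})$; summing over $m$ and over $n$ reproduces the stated formula (with each word of length $m=\sum_i(p_i+q_i)$ additionally divided by $m$, the normalization coming from the Dynkin idempotent), and since every summand is now an iterated bracket the right-hand side visibly lies in $\mathcal{L}((E))$, consistently with Theorem \ref{theorem_log_sig_Lie_Series}.

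The main obstacle is ingredient (ii): establishing that $Z$ is genuinely a Lie series rather than merely an associative power series. Step (i) is a routine re-indexing of the exponential and logarithm series, and step (iii) is a finite combinatorial computation with nested brackets; but (ii) is the substantive content --- it is exactly the qualitative Baker--Campbell--Hausdorff theorem --- and carries the weight of the argument, whether one proves it via the signature / Chen's-identity route above or via the group-like/primitive dichotomy in the completed tensor Hopf algebra. All the remaining steps are safe because every series in sight is handled degree by degree in $T((E))$, where the grading makes all the formal sums, the logarithm, and the substitution homomorphism converge.
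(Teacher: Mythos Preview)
Your proposal is correct and is in fact far more detailed than the paper's own proof, which consists solely of the citation ``See remark of appendix 3.5.4 p.~81 in \cite{reutenauer2003free}.'' The three-step route you outline --- (i) the associative expansion of $\log(\exp X\otimes\exp Y)$, (ii) the qualitative BCH theorem that this logarithm is a Lie series, and (iii) the Dynkin--Specht--Wever projection $u\mapsto\tfrac1m r(u)$ on homogeneous Lie polynomials of degree $m$ --- is exactly the standard argument one finds in Reutenauer, so in substance you are reproducing the cited proof rather than diverging from it. Your use of Chen's identity and Theorem~\ref{theorem_log_sig_Lie_Series} for step (ii) is a nice touch that ties the argument back to the paper's own framework; the Hopf-algebraic alternative you mention (group-like $\Rightarrow$ primitive logarithm) is the route Reutenauer actually takes.

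One remark worth making explicit: you correctly observe in passing that the Dynkin idempotent contributes an extra factor $1/m$ with $m=\sum_i(p_i+q_i)$, and indeed the standard Dynkin formula carries this factor. The statement as printed in the paper omits it, so strictly speaking you have proved the correct formula rather than the one displayed; this is a typo in the theorem statement, not a flaw in your argument.
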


\begin{proof}
See remark of appendix 3.5.4 p. 81 in \cite{reutenauer2003free}.
\end{proof}
\subsubsection{Uniqueness of the log-signature}
Like the signature, the log-signature has the uniqueness stated in the following theorem.
\begin{theorem}[Uniqueness of the log-signature]\label{UniquenessOflogSig}
	Let $X \in V_{p}(J,E)$ . Then $lS(X)$ determines $X$ up to the tree-like equivalence defined in Definition \ref{def_tree_like}. 
\end{theorem}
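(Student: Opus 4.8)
The plan is to reduce this statement directly to the already-established uniqueness of the full signature, Theorem~\ref{UniquenessOfSig}, by exploiting the bijective correspondence between the signature and the log-signature that was recorded earlier.

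First I would observe that for any $X \in V_{p}(J,E)$ the signature $S(X)$ is a group-like element of $T((E))$ and, in particular, its degree-zero component equals $1$. Hence $S(X)$ lies in the domain $\{a \in T((E)) : a_{0} = 1\}$ on which the logarithm map of Definition~\ref{eqn_log} is bijective, with inverse the exponential map. Consequently $lS(X) := \log S(X)$ and $S(X) = \exp(lS(X))$ carry exactly the same information, i.e. the assignment $S(X) \mapsto lS(X)$ is injective on signatures.

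Next I would conclude as follows. Suppose $X, Y \in V_{p}(J,E)$ satisfy $lS(X) = lS(Y)$. Applying $\exp$ to both sides, and using that $\exp$ is the two-sided inverse of $\log$ on $\{a_{0} = 1\}$, we obtain $S(X) = \exp(lS(X)) = \exp(lS(Y)) = S(Y)$. Theorem~\ref{UniquenessOfSig} then yields that $S(X) = S(Y)$ forces $X$ and $Y$ to coincide up to the tree-like equivalence of Definition~\ref{def_tree_like}. Thus $lS(X)$ determines $X$ up to tree-like equivalence, which is precisely the claim.

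There is essentially no hard step here; the only point deserving a word of care is checking that the signature always lands in the domain where $\log$ and $\exp$ are mutually inverse — equivalently, that the tensor series defining $\log S(X)$ makes sense coefficientwise. This is immediate because each finite truncation $S_{k}(X)$ is a well-defined element of the truncated tensor algebra with leading coefficient $1$, and the logarithm (resp.\ exponential) acts level-by-level, so only finitely many terms contribute to any fixed degree. Once that remark is in place, the theorem follows at once from Theorem~\ref{UniquenessOfSig}.
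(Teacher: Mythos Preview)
Your proposal is correct and is exactly the argument the paper has in mind: the paper does not spell out a proof for this theorem, but the surrounding text (the lemma on bijectivity of $\log$ on $\{a_0=1\}$ and the remark that log-signature properties follow from the signature via this bijection) makes clear that the intended proof is precisely the reduction to Theorem~\ref{UniquenessOfSig} via $S(X)=\exp(lS(X))$ that you carry out.
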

Theorem \ref{UniquenessOflogSig} shows that the signature of the path can recover the path trajectory under a mild condition. 
\begin{lemma}\label{Lemma_mono_logsig}
A simple sufficient condition for the uniqueness of the log-signature of a path of finite length is that one component of $X$ is monotone. 
\end{lemma}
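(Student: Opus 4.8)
The plan is to deduce this from the already-established uniqueness of the signature, using that the log-signature and the signature encode identical information. The crucial point is that the logarithm map is a bijection on $\{a \in T((E)) : a_0 = 1\}$ with inverse $\exp$ (the bijectivity lemma above), and that the signature $S(X)_J$ always has scalar term $1$. Hence $lS(X)_J = \log(S(X)_J)$ determines $S(X)_J = \exp(lS(X)_J)$, and conversely. So the assertion reduces to the statement that \emph{if one component of $X$ is monotone then $S(X)_J$ determines $X$}, which is precisely the sufficient condition recorded in the remark following Theorem \ref{UniquenessOfSig} and proved in \cite{levin2013learning}.

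Concretely I would argue as follows. First, suppose $X, \tilde{X} \in V_1(J, E)$ are two paths of finite length, one component of $X$ being monotone, with $lS(X)_J = lS(\tilde{X})_J$. Applying $\exp$ to both sides yields $S(X)_J = S(\tilde{X})_J$. Second, by Theorem \ref{UniquenessOfSig} this forces $X$ and $\tilde{X}$ to agree up to tree-like equivalence, i.e. the concatenation of $X$ with the reversal of $\tilde{X}$ is tree-like in the sense of Definition \ref{def_tree_like}. Third, I would invoke the monotonicity hypothesis to eliminate any nontrivial tree-like discrepancy: a tree-like path necessarily returns to its initial value, so every coordinate of a tree-like loop has matching endpoints and, more strongly, retraces itself; a strictly monotone coordinate is incompatible with such retracing unless it is constant. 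Thus the tree-reduced representative of $X$ keeps the monotone coordinate and coincides with $X$ up to reparameterization, and the same applies to $\tilde{X}$, so $\tilde{X}$ is a reparameterization of $X$. Since $lS$ is itself invariant under reparameterization (Lemma \ref{LogSigTimeInvariance}), this is the sharpest uniqueness available and is exactly the content of the lemma.

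The only genuinely delicate step is the third one, the interaction between tree-like equivalence and the monotone coordinate. One must be careful to use "monotone" in the strict sense (a constant coordinate carries no discriminating power), and to phrase the retracing argument for the concatenation $X * (\text{reversal of } \tilde{X})$ rather than for $X$ and $\tilde{X}$ in isolation: if the shared monotone coordinate of $X$ is increasing, the corresponding coordinate of the reversed $\tilde{X}$ is decreasing with matching endpoint values, and one checks directly from the defining inequality of Definition \ref{def_tree_like} that the resulting loop cannot be dominated by a height function unless $\tilde{X}$ is merely a reparameterization of $X$. Everything else — the passage between $S(X)$ and $lS(X)$, and the cited uniqueness of the signature — is immediate from results already stated in the excerpt.
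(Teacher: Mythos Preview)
Your argument is correct. The paper does not actually supply a proof of Lemma~\ref{Lemma_mono_logsig}; it is simply asserted, immediately after Theorem~\ref{UniquenessOflogSig}, as the log-signature analogue of the remark following Theorem~\ref{UniquenessOfSig}. Your reduction via the bijection $\log/\exp$ between $S(X)$ and $lS(X)$, together with that remark (which cites \cite{levin2013learning} for the signature version), is exactly the intended one-line justification.

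Your third paragraph, where you unpack why a monotone coordinate rules out nontrivial tree-like equivalence, goes further than the paper does: the paper is content to quote the signature result from \cite{levin2013learning} as a black box. What you sketch there is essentially the standard proof of that black box, and it is sound (with the caveat you already flag, that ``monotone'' should be read as strictly monotone). So your proposal is not merely adequate but more complete than what the paper offers.
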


\subsection{Comparison of the Signature and Log-signature}
Both the signature and log-signature take the functional view on the discrete time series data, which allows a unified way to treat time series of variable length and missing data. For example, we chose one pen-digit data of length 53 and simulate 100 samples of modified pen trajectories by dropping at most 16 points from it, to mimic the missing data of variable length case (See one sample in Figure \ref{missing_data9}). Figure \ref{sig_logsig_comparison_missing_data} shows that the mean absolute relative error of the signature and log-signature of the missing data is no more than $6\%$. Besides the log-signature feature is more robust to missing data and of lower dimension compared with the signature feature.

\begin{figure}[!ht]
	\centering
	\includegraphics[width= 0.7\textwidth]{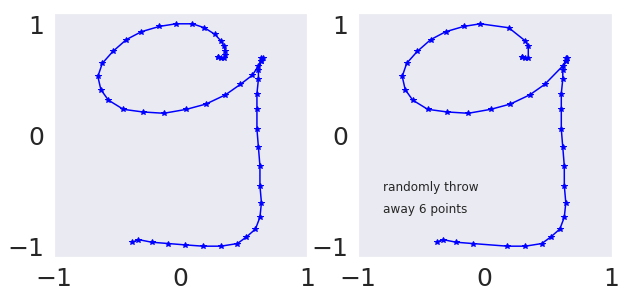}
	\caption{(Left) The chosen pen trajectory of digit 9. (Right) The simulated path by randomly dropping at most 16 points of the pen trajectory on the left.
	}\label{missing_data9}
\end{figure}
\begin{figure}[!ht]
	\centering
	\includegraphics[width= 0.7\textwidth]{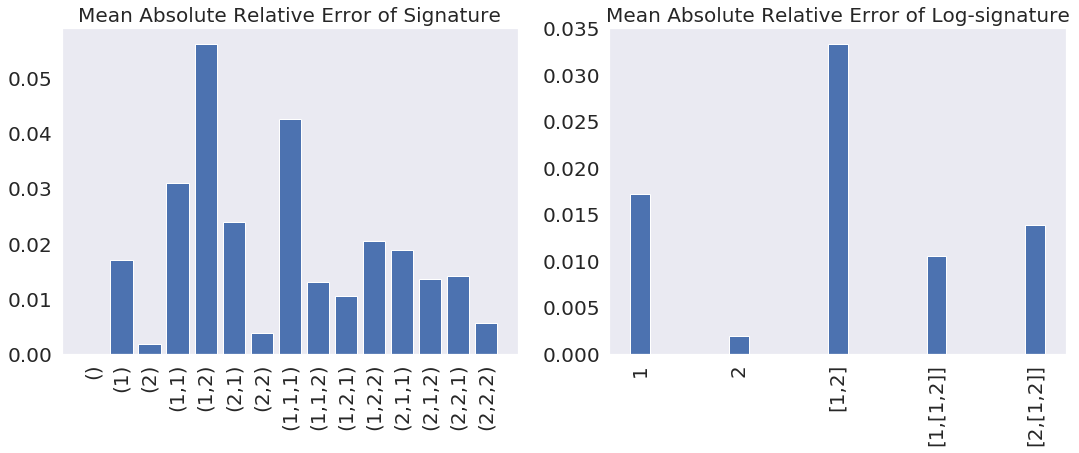}
	\caption{Signature and Log-Signature Comparison for the missing data case.
	}\label{sig_logsig_comparison_missing_data}
\end{figure}
\begin{figure}[!ht]
	\centering
	\includegraphics[width= 0.7\textwidth]{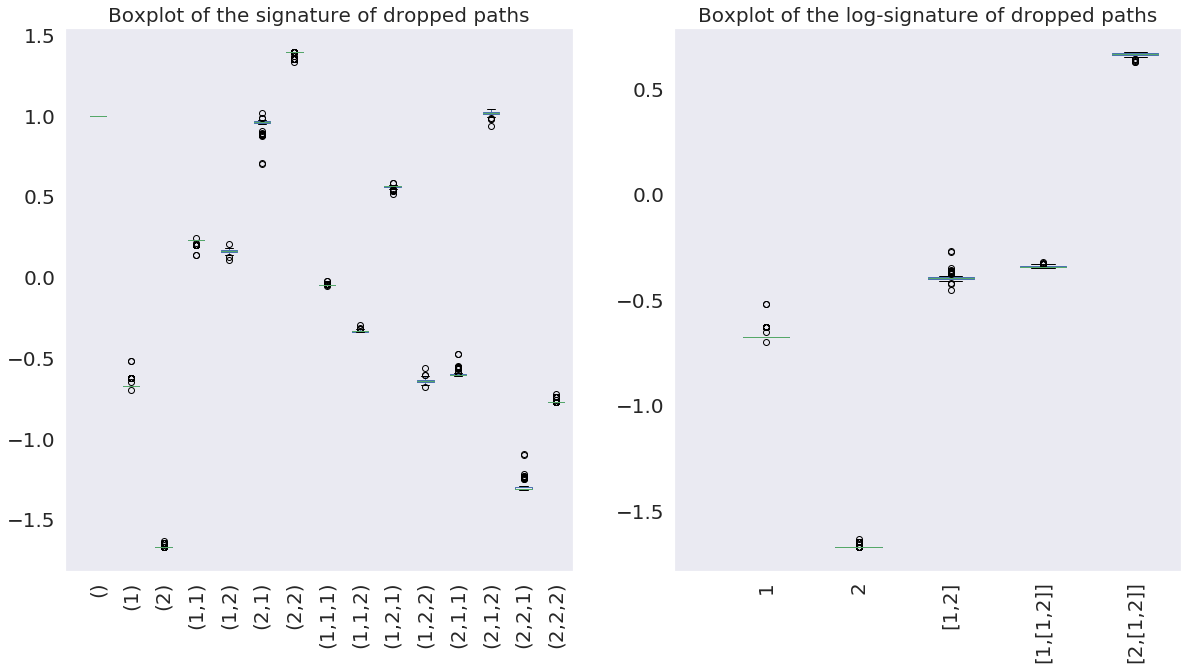}
	\caption{Signature and Log-Signature Comparison for the missing data case.
	}\label{boxplot_Sig_Logsig}
\end{figure}

\subsection{Rough Path and the Extension Theory}
Let $C_{0, p}(\Delta_{T}, T^{\lfloor p \rfloor}(E))$ be the space of all continuous functions from the simplex $\Delta_{T}:=\{(s, t)\vert 0 \leq s \leq t \leq T\}$ into the truncated tensor algebra $T^{\lfloor p \rfloor}(E)$. We define the $p$-variation metric on this linear space as follows: for $X, Y \in C_{0, p}(\Delta_{T}, T^{\lfloor p \rfloor}(E))$, set
\begin{eqnarray}
d_{p}(X, Y) = \max_{1\leq i \leq \lfloor p \rfloor} \left( \sup_{\mathcal{D} \subset [0, T]} \sum_{\mathcal{D}} \vert\vert X_{t_{l-1}, t_{l}}^{i} -  Y_{t_{l-1}, t_{l}}^{i}\vert\vert^{\frac{p}{i}}\right)^{\frac{1}{p}}. 
\end{eqnarray}

\begin{definition}
Let $p \geq 1$ be a real number and $n \geq 1$ be an integer. Let $\omega: [0, T] \rightarrow [0, \infty]$ be a control. Let $X: \Delta_{T} \rightarrow T^{(n)}$ be a multiplicative functional. We say that $X$ has finite $p$-variation on $\Delta_{T}$ controlled by $\omega$ if
\begin{eqnarray}
\vert\vert X_{s,t}^{i} \vert\vert \leq \frac{\omega(s, t)^{\frac{i}{p}}}{\beta (\frac{i}{p})!}, \forall (s, t) \in \Delta_{T}.
\end{eqnarray}
In general, we say that $X$ has finite $p$-variation if there exists a control such that the conditions above are satisfied. 
\end{definition}
The concept of the rough path theory is a generalization of the signature of a path of finite 1-variation. 
\begin{definition}[Rough path]
Let $E$ be a Banach space. Let $p \geq 1$ be a real number. A $p$-rough path in $E$ is a multiplicative functional of degree $\lfloor p \rfloor$ in $E$ with finite $p$-variation. The space of $p$-rough paths is denoted by $\Omega_{p}(V)$.
\end{definition}

\begin{definition}[Geometric rough path]
A geometric $p$-rough path is a $p$-rough path that can be expressed as a limit of 1-rough path in the $p$-variation distance defined above. The space of geometric $p$-rough path in E is denoted by $G\Omega_{p}(E)$.
\end{definition}

\begin{theorem}[Extension theorem]
Let $X$ and $Y$ be two multiplicative functional in $T^{(n)}(V)$ of finite $p$-variation, $n \geq \lfloor p \rfloor$ controlled by $\omega$. Suppose that for some $\varepsilon \in (0, 1)$,
\begin{eqnarray}\label{control1}
\vert\vert  X_{s, t}^{i} - Y_{s, t}^{i}\vert\vert \leq \varepsilon \frac{\omega(s, t)^{\frac{i}{p}}}{\beta \left( \frac{i}{p}\right)!},
\end{eqnarray}
for $i = 1, \cdots, n$ and for all $ (s, t) \in \Delta_{T}$. Then provided $\beta$ is chosen such that
\begin{eqnarray}
\beta \geq 2p^{2} \left(1+ \sum_{r = 3}^{\infty} \left( \frac{2}{r-2}\right)^{\frac{\lfloor p \rfloor }{p}}\right).
\end{eqnarray}
\end{theorem}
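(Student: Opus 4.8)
The statement as displayed is the classical extension (``almost-multiplicative $\Rightarrow$ multiplicative'') theorem of Lyons; the intended conclusion is that, under \eqref{control1}, each of $X$ and $Y$ admits, for every $m\ge n$, a unique extension to a multiplicative functional in $T^{(m)}(V)$ of finite $p$-variation controlled by $\omega$, and that the estimate \eqref{control1} persists at every degree $i=1,\dots,m$. The plan is to prove this by induction on the degree, adjoining one tensor level at a time starting from $m=n$, the base case being exactly the hypothesis.

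For the inductive step, suppose $X$ (and, run in parallel, $Y$) is defined, multiplicative, and obeys $\|X^i_{s,t}\|\le \omega(s,t)^{i/p}/(\beta\,(i/p)!)$ up to degree $m\ge\lfloor p\rfloor$. Given a partition $D=(s=t_0<\dots<t_r=t)$ of $[s,t]$, I would form $X^{D}_{s,t}=X_{t_0,t_1}\otimes\cdots\otimes X_{t_{r-1},t_r}\in T^{(m+1)}(V)$, each factor padded with $0$ in degree $m+1$. By multiplicativity up to degree $m$, the degrees $0,\dots,m$ of $X^{D}_{s,t}$ reproduce $X_{s,t}$ and do not depend on $D$; only the degree-$(m+1)$ component $\pi_{m+1}(X^{D}_{s,t})$ moves, and the candidate extension is $X^{m+1}_{s,t}:=\lim_{\mathrm{mesh}(D)\to 0}\pi_{m+1}(X^{D}_{s,t})$.

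The core estimate is the effect of deleting one point: if $D'=D\setminus\{t_l\}$ then $X^{D}_{s,t}-X^{D'}_{s,t}=A\otimes(X_{t_{l-1},t_l}\otimes X_{t_l,t_{l+1}}-X_{t_{l-1},t_{l+1}})\otimes B$ with $A,B$ of unit degree-$0$ part, and multiplicativity up to degree $m$ forces the bracket into degree $m+1$, equal to $\sum_{i=1}^{m}X^{i}_{t_{l-1},t_l}\otimes X^{m+1-i}_{t_l,t_{l+1}}$, whence
\begin{eqnarray*}
\|\pi_{m+1}(X^{D}_{s,t})-\pi_{m+1}(X^{D'}_{s,t})\|\le \sum_{i=1}^{m}\frac{\omega(t_{l-1},t_l)^{i/p}}{\beta\,(i/p)!}\cdot\frac{\omega(t_l,t_{l+1})^{(m+1-i)/p}}{\beta\,((m+1-i)/p)!}.
\end{eqnarray*}
Using superadditivity $\omega(t_{l-1},t_l)+\omega(t_l,t_{l+1})\le\omega(t_{l-1},t_{l+1})$ together with the neo-classical inequality, the right-hand side is at most $\dfrac{p^{2}}{\beta^{2}}\cdot\dfrac{\omega(t_{l-1},t_{l+1})^{(m+1)/p}}{((m+1)/p)!}$. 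A parity-splitting argument gives $\sum_l\omega(t_{l-1},t_{l+1})\le 2\omega(s,t)$ over the interior points, so by pigeonhole any partition into $r\ge2$ subintervals has an interior point whose removal costs at most $\bigl(\tfrac{2}{r-1}\bigr)^{(m+1)/p}$ in the normalized estimate. Deleting such points successively down to $\{s,t\}$, telescoping, and using $\pi_{m+1}(X^{\{s,t\}}_{s,t})=0$, I obtain $\|\pi_{m+1}(X^{D}_{s,t})\|\le \dfrac{p^{2}}{\beta^{2}}\cdot\dfrac{\omega(s,t)^{(m+1)/p}}{((m+1)/p)!}\sum_{r\ge2}\bigl(\tfrac{2}{r-1}\bigr)^{(m+1)/p}$. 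Since $m+1\ge\lfloor p\rfloor+1>p$, the exponent exceeds $1$, the $r$-series converges and is bounded uniformly in $m$, and the displayed lower bound on $\beta$ is precisely what turns $p^{2}/\beta^{2}$ times that series into $1/\beta$; hence, after passing to the limit, $\|X^{m+1}_{s,t}\|\le\omega(s,t)^{(m+1)/p}/(\beta((m+1)/p)!)$. The same telescoping, restricted to the newly introduced short intervals (whose controls tend to $0$), shows $\pi_{m+1}(X^{D}_{s,t})$ is Cauchy as $\mathrm{mesh}(D)\to0$, so the limit exists; inserting a fixed intermediate point $u$ into the refining partitions and passing to the limit yields multiplicativity in degree $m+1$. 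Uniqueness follows because two degree-$(m+1)$ extensions of finite $p$-variation differ by an \emph{additive} function bounded by a constant times $\omega(s,t)^{(m+1)/p}$ with exponent $>1$, which must vanish identically (sum over a partition and let the mesh go to $0$).

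Finally, to propagate \eqref{control1} I would rerun the construction for $X^{D}_{s,t}-Y^{D}_{s,t}$: in the one-point-removal step the degree-$(m+1)$ difference becomes $\sum_{i}\bigl[(X^{i}-Y^{i})\otimes X^{m+1-i}+Y^{i}\otimes(X^{m+1-i}-Y^{m+1-i})\bigr]$ over the two short intervals, and by the inductive hypothesis (all degrees $\le m$) each ``difference'' factor carries a factor $\varepsilon$ while its companion obeys the plain $\omega$-bound, so every term gains exactly one $\varepsilon$; summing the same geometric series with the same $\beta$ gives $\|X^{m+1}_{s,t}-Y^{m+1}_{s,t}\|\le\varepsilon\,\omega(s,t)^{(m+1)/p}/(\beta((m+1)/p)!)$, and the limit delivers the claim. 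The main obstacle is the neo-classical inequality invoked above: one must bound $\sum_{i=1}^{m}\frac{x^{i/p}y^{(m+1-i)/p}}{(i/p)!((m+1-i)/p)!}$ by a \emph{level-independent} constant (namely $p^{2}$) times $\frac{(x+y)^{(m+1)/p}}{((m+1)/p)!}$. It is exactly this uniformity in $m$, together with the uniform convergence of $\sum_{r}(2/(r-1))^{(m+1)/p}$ for exponents $\ge(\lfloor p\rfloor+1)/p$, that lets a single constant $\beta$ close the induction at every degree instead of degrading as $m\to\infty$.
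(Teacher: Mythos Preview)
The paper does not actually supply a proof of this theorem: it is quoted in the appendix as background from rough path theory (indeed the statement as printed is truncated, with no conclusion after the condition on $\beta$). Your reading of the intended conclusion---unique extension of $X$ and $Y$ to every higher degree with the same factorial control, and persistence of the $\varepsilon$-bound---is the correct one, and the argument you outline (induction on degree; define the next level as a limit over partitions of the degree-$(m{+}1)$ projection of $X_{t_0,t_1}\otimes\cdots\otimes X_{t_{r-1},t_r}$; control one-point removals via the neo-classical inequality; pigeonhole on $\sum_l\omega(t_{l-1},t_{l+1})\le 2\omega(s,t)$ to pick a good point; telescope down to the trivial partition; uniqueness from additivity plus super-linear decay; and finally rerun the whole estimate for $X-Y$ gaining one $\varepsilon$ per term) is exactly Lyons' original proof as presented in \cite{lyons1998differential} and in the textbook \cite{RoughPaths} the paper cites. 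There is nothing to compare against in the paper itself; your proposal is the standard and correct route.

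One small remark on bookkeeping: in your telescoping bound you write the series as $\sum_{r\ge2}(2/(r-1))^{(m+1)/p}$, which is degree-dependent, and then observe it is uniformly bounded in $m$ by its value at the smallest admissible exponent $(\lfloor p\rfloor+1)/p>1$. That is the right way to get a single $\beta$ working at every level. The paper's displayed lower bound on $\beta$ uses the exponent $\lfloor p\rfloor/p$, which would make the series diverge whenever $p\notin\mathbb{Z}$; this is almost certainly a transcription slip for $(\lfloor p\rfloor+1)/p$, and your version is the correct one.
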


\section{Proof of Global Approximation Theorem}\label{Appendix_Proof_Theorem41}
In this section, we prove that the global approximation theorem of the high order Taylor expansion of the solution to the controlled differential equations.
\begin{theorem}[Global Approximation Theorem]\label{GlobalApproxThm2} Let $\hat{Y}_{u_{N}}^{\mathcal{D}, M}$ be defined as previously. For any $\varepsilon >0$, when $\Delta \mathcal{D} \leq \left(\frac{\varepsilon}{\tilde{C}}\right)^{p/(\lfloor \gamma \rfloor +1 -p)}$ and $M \geq \lfloor \gamma \rfloor$, then $\hat{Y}_{u_{N}}^{\mathcal{D}, M}$ satisfies that
\begin{eqnarray}
\vert\vert Y_{T} - \hat{Y}_{u_{N}}^{\mathcal{D}, M}\vert\vert \leq \varepsilon,
\end{eqnarray}
where $S(X)\in K$, $K$ is a compact set $S(V_{p}(J, E))$ and $\tilde{C}$ is a constant depending $p, \gamma$, the norm of $f$ and the radius of $K$ defined in Equation \eqref{C_Def}. 
\end{theorem}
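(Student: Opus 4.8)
The plan is to adapt the standard two-stage argument for rough-path numerical schemes: (i) a \emph{uniform local error estimate} for the one-step Taylor map $g^{f}_{\mathcal{D},M}$ of \eqref{RecursiveStructureY}, and (ii) \emph{propagation} of these local errors to the terminal time by a discrete Gronwall estimate, using that $f\in\mathcal{C}_{b}^{\infty}$ makes the true solution flow of \eqref{controlEquation} globally Lipschitz.

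First I would fix the geometry uniformly over $K$. Since $K$ is compact in $S(V_{p}(J,E))$ with bounded radius, one can choose a single control $\omega:\Delta_{J}\to[0,\infty)$ dominating the $p$-variation of every $X$ with $S(X)\in K$ (so that $\|X^{i}_{s,t}\|\le\omega(s,t)^{i/p}/\beta(i/p)!$ for $i\le\lfloor p\rfloor$), with $\omega(S,T)$ bounded in terms of the radius of $K$ and, by equicontinuity of the associated controls on the compact set $K$, with $\omega(u_{k},u_{k+1})\le C_{K}\,\Delta\mathcal{D}$ for every sub-interval of $\mathcal{D}$. Writing $\Psi_{s,t}$ for the true solution flow from $s$ to $t$ and $\Phi_{s,t}(y)=y+\sum_{j=1}^{M}f^{\circ j}(y)X^{j}_{s,t}$ for the Taylor step, the rough-path Taylor estimate of Theorem~\ref{GlobalApproxThm} — together with $M\ge\lfloor\gamma\rfloor$ and $f\in\mathcal{C}_{b}^{\infty}$, so that the first neglected term is of degree $>\lfloor\gamma\rfloor$ — gives
\begin{equation*}
\sup_{y}\bigl\|\Psi_{s,t}(y)-\Phi_{s,t}(y)\bigr\|\le C_{1}\,\omega(s,t)^{(\lfloor\gamma\rfloor+1)/p},\qquad C_{1}=C_{1}(p,\gamma,f,K),
\end{equation*}
uniformly over $X$ with $S(X)\in K$. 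Put $\theta:=(\lfloor\gamma\rfloor+1)/p$; note $\theta>1$ since $\gamma>p$.

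For the propagation step, boundedness of all derivatives of $f$ makes $\Psi_{u_{k},u_{k+1}}$ Lipschitz with constant $L_{k}\le\exp(C_{2}\,\omega(u_{k},u_{k+1}))$. Setting $E_{k}:=\|Y_{u_{k}}-\hat{Y}^{\mathcal{D},M}_{u_{k}}\|$ and using $\hat{Y}^{\mathcal{D},M}_{u_{k+1}}=\Phi_{u_{k},u_{k+1}}(\hat{Y}^{\mathcal{D},M}_{u_{k}})$, the triangle inequality (inserting $\Psi_{u_{k},u_{k+1}}(\hat{Y}^{\mathcal{D},M}_{u_{k}})$) yields $E_{k+1}\le L_{k}E_{k}+C_{1}\,\omega(u_{k},u_{k+1})^{\theta}$ with $E_{0}=0$. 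Iterating, and bounding $\prod_{j}L_{j}\le\exp(C_{2}\sum_{j}\omega(u_{j},u_{j+1}))\le\exp(C_{2}\,\omega(S,T))$ by superadditivity of $\omega$, gives $E_{N}\le C_{3}\sum_{k}\omega(u_{k},u_{k+1})^{\theta}$. Since $\theta>1$, superadditivity once more yields
\begin{equation*}
\sum_{k=0}^{N-1}\omega(u_{k},u_{k+1})^{\theta}\le\Bigl(\max_{k}\omega(u_{k},u_{k+1})\Bigr)^{\theta-1}\sum_{k=0}^{N-1}\omega(u_{k},u_{k+1})\le (C_{K}\,\Delta\mathcal{D})^{\theta-1}\,\omega(S,T).
\end{equation*}
Absorbing all constants into $\tilde{C}=\tilde{C}(p,\gamma,f,K)$ as in \eqref{C_Def} gives $\|Y_{T}-\hat{Y}^{\mathcal{D},M}_{u_{N}}\|\le\tilde{C}\,(\Delta\mathcal{D})^{(\lfloor\gamma\rfloor+1-p)/p}$, so $\Delta\mathcal{D}\le(\varepsilon/\tilde{C})^{p/(\lfloor\gamma\rfloor+1-p)}$ suffices.

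I expect the main obstacle to be the \emph{uniformity} in the local estimate — that $C_{1}$ (and $C_{2}$) depend neither on the particular $X$ with $S(X)\in K$ nor on the base point $y$. This is exactly where compactness of $K$ in the $p$-variation metric, rather than mere boundedness, is genuinely needed: it provides a single dominating control with the equicontinuity required for $\omega(u_{k},u_{k+1})\le C_{K}\,\Delta\mathcal{D}$, and it lets one track the dependence of the rough-path Taylor remainder on $\|f\|_{\mathcal{C}_{b}^{\infty}}$ and on $\omega$ uniformly. Once $C_{1},C_{2}$ are secured, the Gronwall propagation and the passage to a rate in $\Delta\mathcal{D}$ are routine.
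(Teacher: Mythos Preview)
Your argument is correct and reaches the same terminal estimate, but it takes a longer route than the paper. The paper invokes Theorem~\ref{GlobalApproxThm} (Friz--Victoir) directly as an \emph{already-global} error bound $\|Y_{T}-\hat{Y}_{u_{N}}^{\mathcal{D},M}\|\le C\,|f|_{\circ\gamma}^{\lfloor\gamma\rfloor+1}\sum_{k}\|X\|_{p\text{-var};[u_{k-1},u_{k}]}^{\lfloor\gamma\rfloor+1}$, so your entire step~(ii) --- the Lipschitz constant for the flow $\Psi$ and the discrete Gronwall propagation of $E_{k}$ --- is already packaged inside that citation; given that the theorem is available, you could shortcut by quoting it once and then only bounding the resulting sum via superadditivity, exactly as you do at the end. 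What your decomposition buys is a transparent account of \emph{why} the global bound has this summable form (useful if one later wants to replace the Taylor step by a neural one-step map); what the paper's approach buys is brevity. One caveat that applies to both arguments: the inequality $\omega(u_{k},u_{k+1})\le C_{K}\,\Delta\mathcal{D}$ does not follow from equicontinuity of the controls over a compact $K$ alone --- equicontinuity gives a uniform modulus, not a linear rate in $|t-s|$. To recover the explicit exponent $p/(\lfloor\gamma\rfloor+1-p)$ in $\Delta\mathcal{D}$ one needs a $1/p$-H\"older parametrisation of the paths in $K$ (or, equivalently, re-parametrisation by control so that $\omega(s,t)\asymp|t-s|$); the paper leaves this same point implicit when it passes from the sum over sub-intervals to the power of $\Delta\mathcal{D}$, so this is a looseness in the statement rather than a defect in your strategy.
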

Before proceeding to the proof of the above theorem, we need the following auxiliary lemma and classical results on the numerical approximation of the SDEs (Theorem \ref{GlobalApproxThm}). 

\begin{lemma}Let $K$ be a compact set of $G\Omega_{p}(J, E)$ for some $p \geq 1$. $J'$ is a compact sub-time interval of $J$. Then the mapping $F: K \rightarrow T^{\lfloor p \rfloor}(E)$, i.e. 
$x \mapsto \log(x_{J'})$ is continuous. The image of $K$ under the function $F$ is compact.
\end{lemma}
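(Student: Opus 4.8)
The plan is to factor $F$ as the composition of the endpoint evaluation of the rough path with the truncated tensor logarithm, and then to invoke the elementary fact that a continuous image of a compact set is compact. Write $J' = [a,b]$ with $a \le b$ in $J$. For $x \in G\Omega_p(J,E) \subset C_{0,p}(\Delta_T, T^{\lfloor p\rfloor}(E))$ let $\mathrm{ev}_{J'}(x) := x_{a,b} \in T^{\lfloor p\rfloor}(E)$ denote the value of the multiplicative functional at the pair $(a,b)$ (the level-$\lfloor p\rfloor$ truncated signature over $[a,b]$). Then $F = \log \circ\, \mathrm{ev}_{J'}$, where $\log$ is the truncated logarithm $\log(1+t) = \sum_{n=1}^{\lfloor p\rfloor}\frac{(-1)^{n-1}}{n}\, t^{\otimes n}$ on $\{ y \in T^{\lfloor p\rfloor}(E) : y_0 = 1\}$, the series terminating because the tensor algebra is truncated at level $\lfloor p\rfloor$. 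It therefore suffices to check continuity of each factor.

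For $\mathrm{ev}_{J'}$, fix $x,\tilde x \in G\Omega_p(J,E)$ and choose any partition $\mathcal{D}$ of $[0,T]$ in which $a$ and $b$ are consecutive. For each $1 \le i \le \lfloor p\rfloor$, the term $\|x^i_{a,b} - \tilde x^i_{a,b}\|^{p/i}$ appears in the nonnegative sum defining $d_p$, so
\[
\| x^i_{a,b} - \tilde x^i_{a,b} \|^{p/i} \;\le\; \sup_{\mathcal{D}\subset[0,T]} \sum_{\mathcal{D}} \| x^i_{t_{l-1},t_l} - \tilde x^i_{t_{l-1},t_l} \|^{p/i} \;\le\; d_p(x,\tilde x)^{p},
\]
hence $\| x^i_{a,b} - \tilde x^i_{a,b} \| \le d_p(x,\tilde x)^{i}$. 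Since $i \ge 1$, every tensor level of $\mathrm{ev}_{J'}(x) - \mathrm{ev}_{J'}(\tilde x)$ tends to $0$ as $d_p(x,\tilde x)\to 0$, so $\mathrm{ev}_{J'}$ is continuous (indeed locally Hölder) from $(G\Omega_p(J,E),d_p)$ into $T^{\lfloor p\rfloor}(E)$. The truncated $\log$ is a fixed polynomial map in the finitely many tensor coordinates of its argument, and the scalar level of $\mathrm{ev}_{J'}(x)$ equals $1$ for every rough path, so $\log$ is well defined and continuous on the range of $\mathrm{ev}_{J'}$. Composing, $F$ is continuous on $G\Omega_p(J,E)$, in particular on $K$; and since $K$ is compact, $F(K)$ is a compact subset of $T^{\lfloor p\rfloor}(E)$.

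The only step needing care — the main obstacle — is the first factor: one must be precise about what "$x_{J'}$" means for a rough path (it is the endpoint value of the multiplicative functional over $[a,b]$, a group-like element of the truncated tensor algebra) and must extract the pointwise continuity estimate cleanly from the somewhat opaque definition of $d_p$. The single-interval-partition trick above does exactly this. Everything else — polynomiality and hence continuity of the truncated $\log$, and compactness being preserved by continuous maps — is routine.
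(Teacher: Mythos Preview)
Your proof is correct. Note, however, that the paper states this lemma as an auxiliary fact without supplying a proof, so there is no ``paper's own proof'' to compare against. Your argument---factoring $F$ as the endpoint evaluation $\mathrm{ev}_{J'}$ followed by the truncated logarithm, extracting the pointwise estimate $\|x^i_{a,b}-\tilde x^i_{a,b}\|\le d_p(x,\tilde x)^{i}$ from the $p$-variation metric via a partition containing $a,b$ as consecutive points, observing that the truncated $\log$ is polynomial in the tensor coordinates, and then invoking preservation of compactness under continuous maps---is exactly the natural route and fills the gap the paper leaves.
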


\begin{theorem}\label{GlobalApproxThm}\cite{friz2010multidimensional}
Assume that $X = (1, X^{1}, \dots, X^{\lfloor p \rfloor})$ is a $p$-geometric rough path\footnote{A geometric $p$-rough path is the limit of the sequence of the signature of paths of finite $1$-variation in the $p$-variation distance. A discrete time series of finite length is an example of geometric $p$-rough path $\forall p\geq 1$.}. Let $f$ be a $Lip(\gamma)$ vector field where $\gamma>p$. Then there exists $C:= C(p,\gamma)$ such that
\begin{eqnarray}\label{error_eqn}
\vert\vert Y_{T} - \hat{Y}_{u_{N}}^{\mathcal{D}, M}\vert\vert \leq C \sum_{k=1}^N \vert f\vert_{\circ \gamma}^{\lfloor \gamma \rfloor +1} \vert\vert X \vert\vert_{p-var;[t_{k-1},t_k]}^{\lfloor \gamma \rfloor +1}.
\end{eqnarray}
\end{theorem}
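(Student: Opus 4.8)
The plan is to derive \eqref{error_eqn} from a \emph{local} one-step error estimate combined with the Lipschitz stability of the exact solution flow, pasted together by a telescoping sum. Write $Y$ for the exact solution of \eqref{controlEquation} and, for each point $u_k$ of the partition $\mathcal D = (u_k)_{k=0}^N$, let $\phi_{u_k, T}$ denote the exact It\^o--Lyons solution map sending a starting value at time $u_k$ to the solution value at $T$ along the \emph{same} driving rough path $X$; these maps satisfy the flow identity $\phi_{u_k,T} = \phi_{u_{k+1},T}\circ\phi_{u_k,u_{k+1}}$. Introduce the hybrid endpoints $Z^{(k)} := \phi_{u_k, T}(\hat Y_{u_k}^{\mathcal D, M})$, i.e. the exact solution launched from the \emph{numerical} value at step $k$. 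Since $\hat Y_{u_0}^{\mathcal D,M} = y_0 = Y_{u_0}$ we have $Z^{(0)} = Y_T$, and since $\phi_{u_N,T}=\mathrm{id}$ we have $Z^{(N)} = \hat Y_{u_N}^{\mathcal D, M}$. Hence
\begin{eqnarray*}
Y_T - \hat Y_{u_N}^{\mathcal D, M} = \sum_{k=0}^{N-1}\left( Z^{(k)} - Z^{(k+1)}\right),
\end{eqnarray*}
which reduces the global error to controlling each summand.

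First I would establish the local error estimate. Fix a subinterval $[u_k, u_{k+1}]$ and compare, starting from the common value $\hat Y_{u_k}^{\mathcal D, M}$, the exact one-step flow $\phi_{u_k, u_{k+1}}(\hat Y_{u_k}^{\mathcal D, M})$ against the single scheme step $\hat Y_{u_{k+1}}^{\mathcal D, M} = \hat Y_{u_k}^{\mathcal D, M}+\sum_{j=1}^M f^{\circ j}(\hat Y_{u_k}^{\mathcal D, M}) X_{u_k, u_{k+1}}^{j}$. Because $M \geq \lfloor\gamma\rfloor$, the scheme retains every term of the stochastic Taylor (log-ODE) expansion up to the critical degree, so the difference is exactly the truncated Taylor remainder. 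The Davie/Euler estimate for $Lip(\gamma)$ vector fields driven by a geometric $p$-rough path with $\gamma>p$ then yields a constant $C_0 = C_0(p,\gamma)$ with
\begin{eqnarray*}
\vert\vert \phi_{u_k, u_{k+1}}(\hat Y_{u_k}^{\mathcal D, M}) - \hat Y_{u_{k+1}}^{\mathcal D, M}\vert\vert \leq C_0\, \vert f\vert_{\circ\gamma}^{\lfloor\gamma\rfloor+1}\, \vert\vert X\vert\vert_{p-var;[u_k, u_{k+1}]}^{\lfloor\gamma\rfloor+1}.
\end{eqnarray*}
The hypothesis $\gamma > p$ is precisely what forces the exponent $\lfloor\gamma\rfloor+1 > p$, which makes the right-hand side summable against the super-additive control $\omega(s,t) = \vert\vert X\vert\vert_{p-var;[s,t]}^{p}$ and guarantees convergence as $\Delta\mathcal D \to 0$.

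Next I would propagate this local error to time $T$. By the flow identity, $Z^{(k)} = \phi_{u_{k+1},T}\bigl(\phi_{u_k, u_{k+1}}(\hat Y_{u_k}^{\mathcal D,M})\bigr)$ while $Z^{(k+1)} = \phi_{u_{k+1},T}(\hat Y_{u_{k+1}}^{\mathcal D,M})$, so $Z^{(k)}$ and $Z^{(k+1)}$ are the exact flow $\phi_{u_{k+1},T}$ applied to two starting values whose difference is exactly the local error just bounded. Invoking the Lipschitz continuity of the It\^o--Lyons map in its initial condition — uniform over the compact set $K \subset S(V_p(J,E))$ and the finite horizon $[0,T]$, with a constant $L = L(p,\gamma,f,K)$ — gives $\vert\vert Z^{(k)} - Z^{(k+1)}\vert\vert \leq L\, C_0\, \vert f\vert_{\circ\gamma}^{\lfloor\gamma\rfloor+1}\vert\vert X\vert\vert_{p-var;[u_k,u_{k+1}]}^{\lfloor\gamma\rfloor+1}$. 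Substituting into the telescoping identity, reindexing the subintervals, and setting $C := L C_0$ delivers \eqref{error_eqn}.

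The main obstacle is the Lipschitz-stability step: one must show that the exact solution map depends on its initial condition in a Lipschitz way with a constant that is \emph{uniform} over the compact family $K$ of driving signatures and does not degrade as the partition is refined, so that the accumulated error is genuinely a single sum of local terms rather than an exponentially amplified one. This rests on the quantitative continuity estimates for rough differential equations (the universal limit theorem together with the a priori bounds furnished by the Extension Theorem), and care is needed to keep every constant depending only on $p$, $\gamma$, $\vert f\vert_{\circ\gamma}$ and the radius of $K$, exactly as claimed.
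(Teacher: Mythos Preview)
The paper does not prove this theorem: it is quoted verbatim as a result from \cite{friz2010multidimensional} and used as a black box in the proof of Theorem~\ref{GlobalApproxThm2}. So there is no ``paper's own proof'' to compare against. Your outline is, in fact, the standard Davie-type argument that underlies the cited result: a local Taylor remainder bound of order $\lfloor\gamma\rfloor+1$ in the $p$-variation control, propagated to the terminal time by the Lipschitz stability of the It\^o--Lyons flow, and telescoped. That is the right architecture.

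One point to tighten. You end with $C = L\,C_0$ where $L = L(p,\gamma,f,K)$ depends on the compact set $K$ of driving signatures, but the theorem as stated asserts $C = C(p,\gamma)$ with the $f$-dependence isolated in the factor $\vert f\vert_{\circ\gamma}^{\lfloor\gamma\rfloor+1}$ and no mention of $K$ at all (the compact set $K$ only enters later, in Theorem~\ref{GlobalApproxThm2}). In the Friz--Victoir treatment the flow Lipschitz constant depends on $\vert f\vert_{Lip(\gamma)}$ and on $\Vert X\Vert_{p\text{-}var;[0,T]}$ (through a Gr\"onwall/Davie-type exponential), not on an ambient compact $K$; the quoted statement is somewhat terse about where that dependence sits. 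Your closing paragraph already flags exactly this issue, which is the genuine technical crux, so the proposal is sound as a sketch --- just be careful not to import $K$ into a statement that does not contain it.
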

Now we are ready to prove Theorem \ref{GlobalApproxThm2}.
\begin{proof}
According to Theorem \ref{GlobalApproxThm}, there exists $C:= C(p,\gamma)$ such that
\begin{eqnarray}\label{error_eqn}
\vert\vert Y_{T} - \hat{Y}_{u_{N}}^{\mathcal{D}, M}\vert\vert \leq C \sum_{k=1}^N \vert f\vert_{\circ \gamma}^{\lfloor \gamma \rfloor +1} \vert\vert X \vert\vert_{p-var;[t_{k-1},t_k]}^{\lfloor \gamma \rfloor +1}.
\end{eqnarray}
It implies that the estimation error is of order $\Delta^{ \frac{\lfloor \gamma \rfloor +1}{p}}$.
As $S(X) \in K$, then it exists a constant $C_{1}>0$ s.t.
\begin{eqnarray}
\sup_{S(X) \in K}\vert \vert X\vert\vert_{p-var, J} \leq C_{1}.
\end{eqnarray}
Equation \eqref{error_eqn} implies that 
\begin{eqnarray}
\vert\vert Y_{T} - \hat{Y}_{u_{N}}^{\mathcal{D}, M}\vert\vert \leq \tilde{C} \Delta \mathcal{D}^{\frac{\lfloor \gamma \rfloor +1}{p} -1}. 
\end{eqnarray}
where $C:= C(p, \gamma)$ is given in Theorem \ref{GlobalApproxThm} and 
\begin{eqnarray}\label{C_Def}
\tilde{C} = C \max_{k =1}^{N} \left (\vert \vert f \vert \vert_{\circ \gamma}^{\vert \gamma \vert + 1}\right) C_{1}^{p}.
\end{eqnarray}
\end{proof}
\section{Proof of Universality of the Logsig-RNN model}\label{appendix_Lemma}
In this section, we prove that the universality of the Logsig-RNN model (Theorem \ref{UniversalityLogsigRNN} of our paper). Firstly, we introduce the auxiliary lemmas and then complete the proof of Theorem \ref{UniversalityLogsigRNN}.
\subsection{Auxiliary Lemmas }
In the following, we use the uniform norm of a function $\tilde{f}: K \rightarrow \mathbb{R}^{d}$, i.e.
\begin{eqnarray*}
\vert\vert \tilde{f} \vert\vert_{\infty, K}:=\sup_{x \in K}  \vert \tilde{f}(x) \vert.
\end{eqnarray*}
The following lemma on the universality of shallow neural network was proved by Funahshi (1989).
\begin{lemma}\label{lemma1}
Let $\sigma(x)$ be a sigmoid function (i.e. a non-constant, increasing, and bounded continuous function on $\mathbb{R}$). Let $K$ be any compact subset of $\mathbb{R}^{n}$, and $f: K \rightarrow \mathbb{R}^{e}$ be a continuous function mapping. Then for an arbitrary $\varepsilon>0$, there exist an integer $N>0$, an $m \times N$ matrix $A$ and an $N$ dimensional vector $\theta$ such that 
\begin{eqnarray*}
\max_{x \in K} \vert f(x) - A \sigma(Bx + \theta)\vert < \epsilon, 
\end{eqnarray*}
holds where $\sigma: \mathbb{R}^{N} \mapsto \mathbb{R}^{N}$ is a sigmoid mapping defined by
\begin{eqnarray*}
\sigma('(u_{1}, \cdots, u_{N})) = '(\sigma(u_{1}), \cdots, \sigma(u_{N})).
\end{eqnarray*}
\end{lemma}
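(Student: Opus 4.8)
The statement is the classical single--hidden--layer universal approximation theorem, and the plan is to prove it by the Hahn--Banach/discriminatory--measure argument (essentially Cybenko's scheme) rather than by Funahashi's integral representation. First I would reduce to the scalar target $e=1$: since the $e$ coordinates of $f$ are each continuous on the compact set $K\subset\mathbb{R}^{n}$, it suffices to approximate each $f_{i}:K\to\mathbb{R}$ by a finite sum $\sum_{j} a_{j}\,\sigma(b_{j}\cdot x+\theta_{j})$ and then stack the hidden units of the $e$ approximants into one common weight matrix $B$ (whose rows are the $b_{j}$) and bias vector $\theta$, with a block--structured read--out matrix $A$, to recover the required form $A\,\sigma(Bx+\theta)$. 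Thus the heart of the matter is to show that the linear span
\[
\mathcal{S}=\mathrm{span}\{\,x\mapsto \sigma(b\cdot x+\theta)\ :\ b\in\mathbb{R}^{n},\ \theta\in\mathbb{R}\,\}
\]
is dense in $C(K)$ under the uniform norm $\vert\vert\cdot\vert\vert_{\infty,K}$. I would argue by contradiction: if $\overline{\mathcal{S}}\neq C(K)$, then Hahn--Banach produces a nonzero bounded linear functional on $C(K)$ annihilating $\overline{\mathcal{S}}$, and by the Riesz representation theorem this functional is integration against a nonzero finite signed regular Borel measure $\mu$ on $K$, so that $\int_{K} \sigma(b\cdot x+\theta)\,d\mu(x)=0$ for every $b,\theta$.

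The main obstacle, and the only genuinely analytic step, is to deduce $\mu=0$ from this family of vanishing integrals, i.e. to prove that the given $\sigma$ is \emph{discriminatory}. Because $\sigma$ is continuous, increasing and bounded but non-constant, its limits at $\pm\infty$ exist and are distinct, so after an affine normalization $\sigma$ may be assumed sigmoidal ($\sigma\to 1$ at $+\infty$ and $\sigma\to 0$ at $-\infty$); since constant functions already belong to $\mathcal{S}$ (take $b=0$), working with the normalized $\sigma$ does not affect the density question. For fixed $b,\theta,\phi$ I would examine $\sigma(\lambda(b\cdot x+\theta)+\phi)$ as $\lambda\to\infty$: the integrand converges pointwise and boundedly to the function equal to $1$ on the open half--space $\{b\cdot x+\theta>0\}$, to $0$ on $\{b\cdot x+\theta<0\}$, and to $\sigma(\phi)$ on the hyperplane $\{b\cdot x+\theta=0\}$. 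Dominated convergence then yields $\mu(\{b\cdot x+\theta>0\})+\sigma(\phi)\,\mu(\{b\cdot x+\theta=0\})=0$ for all $\phi$; varying $\phi$ so that $\sigma(\phi)$ takes two distinct values forces every open half--space and every hyperplane to have $\mu$--measure zero.

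Finally I would convert this ``zero on all half--spaces'' information into the vanishing of the Fourier transform of $\mu$. Fixing $b$ and defining the bounded functional $h\mapsto \int_{K} h(b\cdot x)\,d\mu(x)$ on bounded measurable $h$, the half--space/hyperplane conclusion shows that it annihilates indicators of half--lines, hence indicators of intervals, hence, by linearity and bounded convergence, all bounded measurable functions; applying it to $h=\cos$ and $h=\sin$ gives $\int_{K} e^{\,i\,b\cdot x}\,d\mu(x)=0$ for every $b\in\mathbb{R}^{n}$. Since the Fourier transform of $\mu$ vanishes identically, $\mu=0$, contradicting $\mu\neq 0$. Hence $\overline{\mathcal{S}}=C(K)$, every component $f_{i}$ is uniformly approximable to accuracy $\varepsilon$, and reassembling the components in the matrix form $A\,\sigma(Bx+\theta)$ completes the argument. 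I expect the discriminatory step --- specifically the passage from vanishing on half--spaces to the vanishing Fourier transform --- to be the crux, while the reduction to the scalar case, the Hahn--Banach/Riesz setup, and the bookkeeping of the matrix dimensions are routine.
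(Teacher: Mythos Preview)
Your argument is correct and complete: the reduction to the scalar target, the Hahn--Banach/Riesz step producing a signed measure, the discriminatory argument via the $\lambda\to\infty$ limit, and the passage from half--space annihilation to the vanishing Fourier transform are all sound and assembled in the right order.

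However, the paper does not actually prove this lemma. It is stated as an auxiliary result and simply attributed to Funahashi (1989) with no argument given; in the paper it functions as a black box feeding into the proof of Theorem~\ref{UniversalityLogsigRNN}. So your proposal supplies a full proof where the paper offers only a citation. It is worth noting that you have chosen Cybenko's discriminatory--measure route rather than Funahashi's own method, which proceeds via an integral representation of continuous functions using the sigmoid and then discretizes that integral. Cybenko's approach, as you present it, is arguably cleaner for the stated hypotheses (bounded, increasing, non-constant, continuous $\sigma$) because it needs only the existence of two distinct limiting values at $\pm\infty$, whereas Funahashi's original argument uses slightly more structure. Either route is adequate here, and since the paper treats the lemma as imported background, your self-contained proof goes beyond what the paper requires.
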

\begin{lemma}\label{lemma2}
Let $K$ be any compact subset of $\mathbb{R}^{d}$. Let $f$ and $\tilde{f}$ be two continuously differentiable functions on $\mathbb{R}^{d+e}$. Then it follows that
\begin{eqnarray*}
\vert\vert G_{f} - G_{\tilde{f}} \vert\vert_{\infty, K} < C\vert\vert f - \tilde{f}\vert\vert_{\infty, K}  , 
\end{eqnarray*}
where $C$ is a constant depending on the $\triangledown f$ and $N$, i.e.
\begin{eqnarray}\label{eqn_C}
C = \begin{cases}
\frac{C_{1}^{N} - 1}{C_{1} - 1},& \text{if }C_{1} \neq 1; \\
N, & \text{if }C_{1}= 1.
\end{cases}
\end{eqnarray}
\begin{eqnarray*}
C_{1}:= \sup_{(x_{1}, \cdots, x_{N}) \in K}\max_{k = 1}^{N} \vert\vert \triangledown_{o} f(x_{k}, o_{k})\vert\vert.  
\end{eqnarray*}
\end{lemma}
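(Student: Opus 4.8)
The plan is to identify $G_{f}$ with the $N$-fold iterated map built from the single-step update $f$, and then to establish the Lipschitz bound by a standard error-propagation (telescoping) argument in the spirit of a discrete Gr\"onwall estimate. Concretely, fix an input sequence $(x_{1}, \cdots, x_{N}) \in K$ together with a common initial state $o_{0} = \tilde{o}_{0}$, and define the two trajectories recursively by $o_{k} = f(x_{k}, o_{k-1})$ and $\tilde{o}_{k} = \tilde{f}(x_{k}, \tilde{o}_{k-1})$, so that $G_{f}(x_{1}, \cdots, x_{N}) = o_{N}$ and $G_{\tilde{f}}(x_{1}, \cdots, x_{N}) = \tilde{o}_{N}$. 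Writing $e_{k} := \vert o_{k} - \tilde{o}_{k}\vert$ and $\delta := \vert\vert f - \tilde{f}\vert\vert_{\infty, K}$, the statement reduces to controlling $e_{N}$ in terms of $\delta$.

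First I would decompose the per-step error with the triangle inequality by inserting the cross term $f(x_{k}, \tilde{o}_{k-1})$:
\[
e_{k} \leq \vert f(x_{k}, o_{k-1}) - f(x_{k}, \tilde{o}_{k-1})\vert + \vert f(x_{k}, \tilde{o}_{k-1}) - \tilde{f}(x_{k}, \tilde{o}_{k-1})\vert.
\]
The second term is bounded directly by $\delta$. For the first term I would invoke the mean value theorem in the $o$-variable: since $f$ is continuously differentiable, there is a point $\xi_{k}$ on the segment joining $o_{k-1}$ and $\tilde{o}_{k-1}$ with $f(x_{k}, o_{k-1}) - f(x_{k}, \tilde{o}_{k-1}) = \triangledown_{o} f(x_{k}, \xi_{k})(o_{k-1} - \tilde{o}_{k-1})$, so this term is at most $C_{1} e_{k-1}$ with $C_{1}$ the supremum of $\vert\vert \triangledown_{o} f\vert\vert$ from the statement. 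This yields the linear recurrence $e_{k} \leq C_{1} e_{k-1} + \delta$ with $e_{0} = 0$.

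The remaining step is purely algebraic: unrolling the recurrence gives $e_{N} \leq \delta \sum_{j = 0}^{N-1} C_{1}^{j}$, which equals $\delta (C_{1}^{N} - 1)/(C_{1} - 1)$ when $C_{1} \neq 1$ and $\delta N$ when $C_{1} = 1$; this is precisely the constant $C$ displayed in \eqref{eqn_C}. Since the per-sequence bound is uniform in the chosen input, taking the supremum over $(x_{1}, \cdots, x_{N}) \in K$ yields $\vert\vert G_{f} - G_{\tilde{f}}\vert\vert_{\infty, K} \leq C \delta$, as claimed.

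The main obstacle is the careful handling of the supremum defining $C_{1}$: the mean value theorem produces intermediate points $\xi_{k}$ lying on the segment between the two trajectories, whereas $C_{1}$ is defined as a supremum of $\vert\vert \triangledown_{o} f\vert\vert$ evaluated along the nominal trajectory $(o_{k})$ generated by $f$. The clean way to close this gap is to argue, using the continuity of $\triangledown_{o} f$ and the compactness of $K$, that both trajectories and the connecting segments stay inside a compact region on which $\vert\vert \triangledown_{o} f\vert\vert$ is controlled by $C_{1}$ (possibly after enlarging the domain of the supremum to a neighbourhood), thereby legitimising the uniform Lipschitz-in-$o$ estimate that drives the recurrence. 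Once this is secured, the rest of the argument is routine.
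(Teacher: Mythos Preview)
Your proposal is correct and follows essentially the same approach as the paper: both insert the cross term $f(x_{k},\tilde{o}_{k-1})$, bound one piece by $\|f-\tilde f\|_{\infty}$ and the other via the derivative in the state variable, and then unroll the resulting linear recurrence $e_{k}\le C_{1}e_{k-1}+\delta$ to obtain the geometric sum. The technical caveat you raise about where $C_{1}$ is evaluated is legitimate, and the paper's proof glosses over exactly the same point; your suggested fix via compactness and continuity of $\nabla_{o}f$ is the natural way to make both arguments rigorous.
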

\begin{proof}
As $f$ and $\tilde{f}$ are continuous functions and $K$ is compact, then the image of $G_{f}$ and $G_{\tilde{f}}$ for any $(x_{1}, \cdots, x_{N}) \in K$ are compact. Let $(h_{i})_{i = 1}^{N}$ and $(\tilde{h}_{i})_{i = 1}^{N}$ denote $G_{f}$ and $G_{\tilde{f}}$ evaluated at $(x_{1}, x_{2}, \cdots, x_{N})$ respectively. Then we have 
\begin{eqnarray*}
h_{i+1} = f(x_{i+1},  h_{i}) \text{ and } \tilde{h}_{i+1}  = \tilde{f}(x_{i+1}, \tilde{h}_{i}). 
\end{eqnarray*}
Then it follows that
\begin{eqnarray*}
&&\vert\vert  h_{i+1}- \tilde{h}_{i+1}   \vert\vert = \vert\vert f(x_{i+1}, h_{i}) - \tilde{f}(x_{i+1}, \tilde{h}_{i}) \vert\vert\\
&\leq& \vert\vert f(x_{i+1}, h_{i}) - f(x_{i+1}, \tilde{h}_{i}) \vert\vert  +\\&& \vert\vert f(x_{i+1}, \tilde{h}_{i} ) - \tilde{f}(x_{i+1}, \tilde{h}_{i}) \vert\vert\\
&\leq& \vert\vert f(x_{i+1}, h_{i}) - \tilde{f}(x_{i+1}, \tilde{h}_{i}) \vert\vert +\\&& \sup_{x\in K }\vert\vert Df(x, h)\vert\vert \vert\vert h_{i} - \tilde{h}_{i}\vert\vert, 
\end{eqnarray*}
which shows the recursive relation of $\vert\vert  h_{i}- \tilde{h}_{i} \vert\vert$.\\
It is easy to check that if $a_{i+1} \leq C_{0}+ C_{1} a_{i}$ with $a_{0} = 0$, it implies that 
\begin{eqnarray*}
a_{i} \leq \begin{cases} \frac{C_{1}^{i} - 1}{C_{1} -1}C_{0}, \text{ if } C_{1} \neq 1\\
iC_{0}, \text{ if }C_{1} = 1.
\end{cases}
\end{eqnarray*}
Therefore using the above inequality when $a_{i} = \vert\vert h_{i} - \tilde{h}_{i}\vert\vert$, $C_{0} = \max_{x \in K} \vert\vert f-\tilde{f}\vert\vert$, it follows that
\begin{eqnarray*}
 \vert\vert h_{i} - \tilde{h}_{i}\vert\vert \leq C \vert\vert f-\tilde{f}\vert\vert_{\infty}, 
\end{eqnarray*}
and so does 
\begin{eqnarray*}
 \vert\vert G_{f} - G_{\tilde{f}}\vert\vert_{\infty, K} \leq C \vert\vert f-\tilde{f}\vert\vert_{\infty, K}, 
\end{eqnarray*}
where $C$ is defined by Equation \eqref{eqn_C}.
\end{proof}
\subsection{Proof of Theorem \ref{UniversalityLogsigRNN}}
First of all, let us give the intuition of the proof of Theorem \ref{UniversalityLogsigRNN}. There is a remarkable similarity between the recursive structure of an RNN $\mathcal{R}_{\sigma}$ and the one defined by the numerical Taylor approximation to solutions of SDEs $\hat{Y}_{u_{N}}$(Figure \ref{SDE_sol_vs_RNN}). It is noted that the numerical approximation of the solution $\hat{Y}_{u_{k+1}}^{\mathcal{D}, M}$ represented in Equation \eqref{RecursiveStructureY} depends on $\hat{Y}_{u_{k}}^{\mathcal{D}, M}$ and the log-signature $l_{k}$. 

\begin{figure}[t]
	\centering
	\includegraphics[width= 0.5\textwidth]{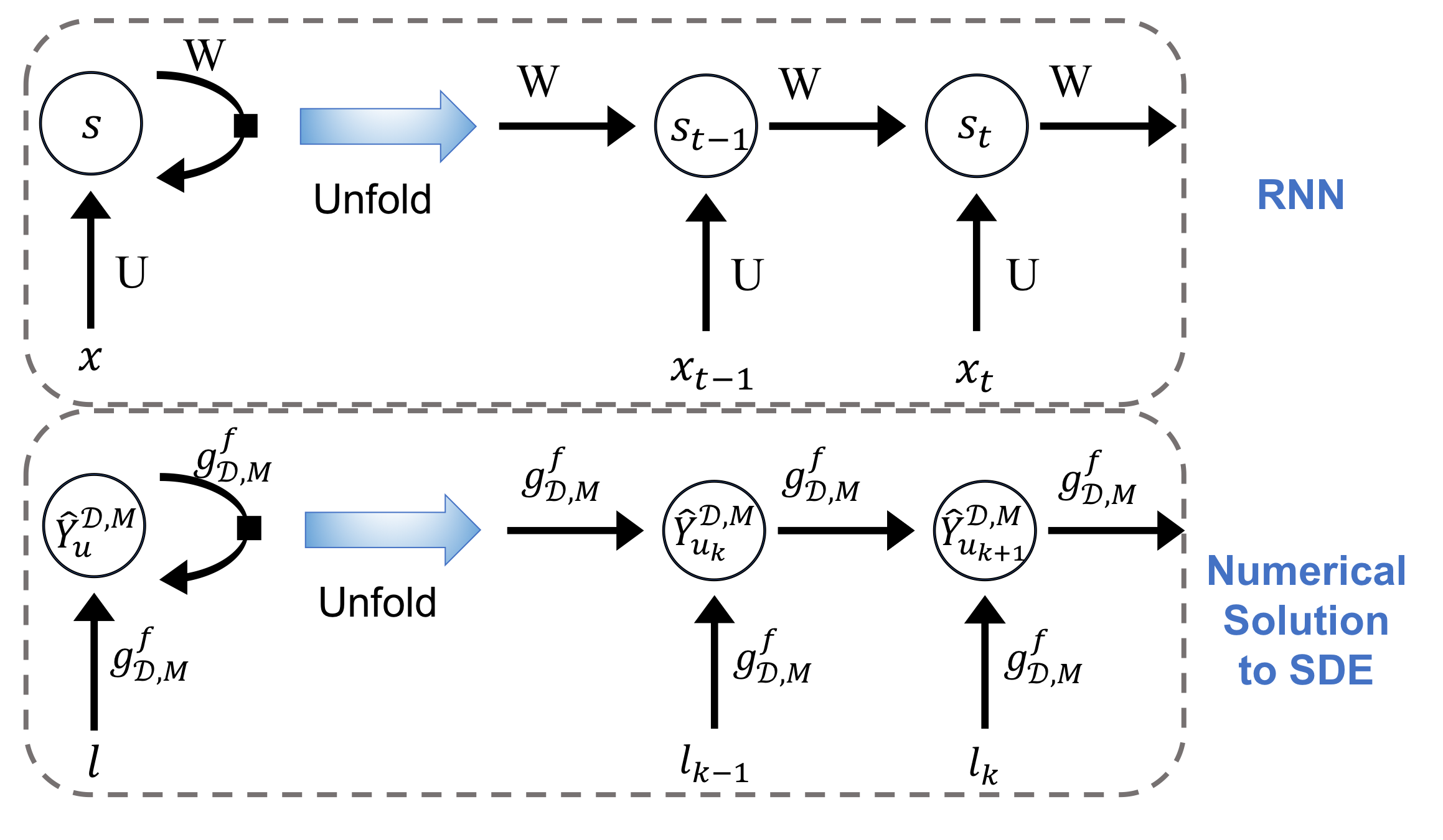}
	\caption{The shared recursive structure of numerical approximation of the solution $\hat{Y}_{u_{N}}$ and the RNN $\mathcal{R}_{\sigma}$.
	}\label{SDE_sol_vs_RNN}
\end{figure}

we use $G$ to denote the common recursive structure between those two. Specifically, for any given function $\tilde{f}: \mathbb{R}^{d+e} \rightarrow \mathbb{R}^{e}$, define $G_{\tilde{f}}:=G_{\tilde{f}, o_{1}, N}: \mathbb{R}^{N \times d} \rightarrow \mathbb{R}^{N \times e}$ as follows:
\begin{eqnarray*}
(x_{1}, \cdots, x_{N}) \mapsto (o_{1}, \cdots, o_{N}),
\end{eqnarray*}
where $o_{t+1} = \tilde{f}(x_{t+1}, o_{t}), \forall t \in \{1, \cdots, N-1\}$. As $o_{1}$ is set to be the same and $N$ is fixed in Theorem \ref{UniversalityLogsigRNN} , we skip the subscript $o_{1}$ and $N$ in the notation $G_{\tilde{f}}$.

On the one hand, when $\tilde{f}(x, s) := A\sigma(Ux +Ws)$, where $A$ is a matrix of dimension $d \times e$, $x \in \mathbb{R}^{d}$ and $s \in \mathbb{R}^{e}$, then $G_{\tilde{f}}$ is the RNN equipped with the activation function $\sigma$, denoted by $\mathcal{R}(\vert \Theta)$; on the other hand, the numerical solution to SDE $\hat{Y}_{u_{N}}$ is $G_{g_{\mathcal{D}, M}^{f}}$. Therefore the error $E_{2}$ is the norm of the difference between $G_{A\sigma(Ux +W)}$ and $G_{g_{\mathcal{D}, M}^{f}}$, i.e. 
\begin{eqnarray}\label{dif_G_f}
\hat{Y}_{u_{N}} - \mathcal{R}_{\sigma}((l_{k})_{k = 1}^{N} =\left( G_{g_{\mathcal{D}, M}^{f}} - G_{A\sigma(Ux +W)}\right) ((l_{k})_{k = 1}^{N})
\end{eqnarray}

\begin{proof}
By the triangle inequality, it holds that
\begin{eqnarray}
&&\vert\vert Y_{T} - \mathcal{R}_{\sigma}((l_{k})_{k = 1}^{N} \vert \Theta) \vert\vert \leq  \nonumber\\
&&\underset{E_{1}}{\underbrace{\vert\vert Y_{T} - \hat{Y}_{u_{N}} \vert\vert}}+ \underset{E_{2} }{\underbrace{\vert\vert \hat{Y}_{u_{N}} - \mathcal{R}_{\sigma}((l_{k})_{k = 1}^{N} \vert \Theta)\vert\vert}}.
\end{eqnarray}
By Global Approximation Theorem (Theorem \ref{GlobalApproxThm2}), $E_{1}$ can be arbitrarily small by setting $\Delta \mathcal{D}$ sufficiently small and truncation degree of the log-signature $M$ sufficiently large. 

The universality of the Logsig-RNN model is reduced to control the error $E_{2}$, which is the difference between $G_{A\sigma(Ux +W)}$ and $G_{g_{\mathcal{D}, M}^{f}}$ (Equation \ref{dif_G_f}). 
Lemma \ref{lemma1} ensures that the shallow neural network can approximate any continuous function uniformly well while Lemma \ref{lemma2} demonstrates the continuity of the map $\tilde{f} \mapsto G_{\tilde{f}}$. Combining both lemmas we are able to show that $E_{2}$ can be arbitrarily small provided that $\Delta \mathcal{D}$ sufficiently small and degree $M$ sufficiently large. 
\end{proof}

\section{Numerical Examples}
In this section, we add one more empirical data experiment on the UCI pen-digit recognition data. Moreover, we provide the implementation details of our method for the NTURGB+D 120 action data and Chalearn 2013 gesture data.
\subsection{UCI Pen-Digit Data}
In this subsection, we apply the Logsig-RNN algorithm on the UCI sequential pen-digit data\footnote{\url{https://archive.ics.uci.edu/ml/datasets/Pen-Based+Recognition+of+Handwritten+Digits}}. In Table \ref{tab:pendigit_dropout_result}, the Logsig-RNN with $M = 4$ and $N = 4$ achieves the accuracy $97.88\%$ in the testing data compared with $95.80\%$ of RNN$_{0}$. In addition, the training time of the Logsig-RNN takes $30\%$ of RNN$_{\mathcal{D}}$ and $3\%$ of the training time of RNN$_0$.
\begin{figure}[!ht]
\centering
\includegraphics[width=0.5\linewidth]{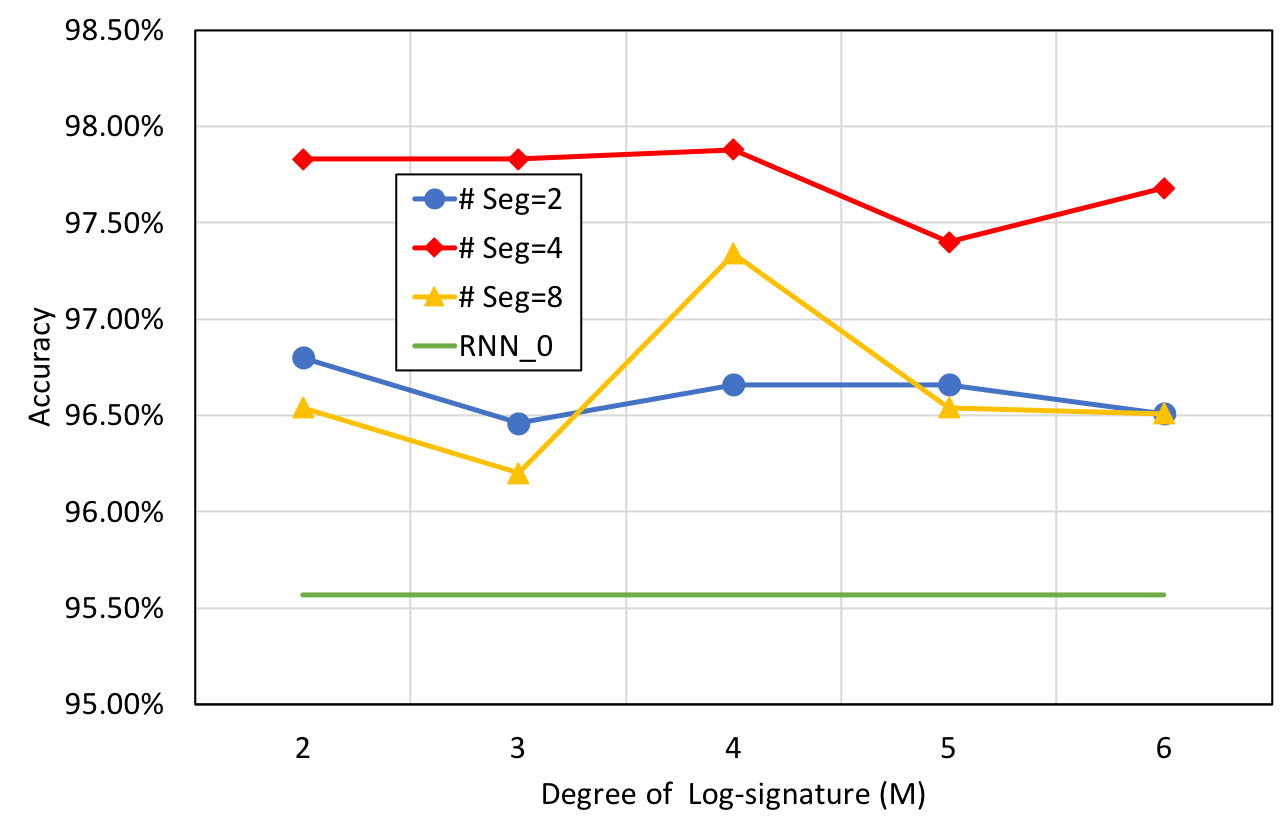}
\caption{The accuracy comparison of Logsig-RNN in the testing set.}
\end{figure}

\textbf{Robustness to missing data and change of sampling frequency}  To mimic the missing data case, we randomly throw a certain portion of points for each sample, and evaluated the trained models of Logsig-RNN and RNN$_{0}$ to the new testing data. Table \ref{tab:pendigit_dropout_result} shows that our proposed method outperforms the other methods significantly for the missing data case. Figure \ref{fig:pendigit_downsampling_result} shows the robustness of trained Logsig-RNN model for the down-sampled test data. Here the test data is down-sampled to that of length $8$ provided in UCI data. For $M=4$, the accuracy of testing data is above $80\%$, which is about $4$ times of that of the baseline methods.

\begin{table}[!htbp]
  \centering
  \scalebox{0.8}{
    \begin{tabular}{|c|c|c|c|c|c|c|}
    \hline
    \diagbox[width=5em]{r}{M} & 2     & 3     & 4  &5&6   & RNN$_0$ \\
    \hline
    0\%   & \cellcolor[rgb]{ .439,  .678,  .278}97.83\% & \cellcolor[rgb]{ .439,  .678,  .278}97.83\% & \cellcolor[rgb]{ .439,  .678,  .278}97.88\% & \cellcolor[rgb]{ .573,  .816,  .314}97.40\% & \cellcolor[rgb]{ .573,  .816,  .314}97.68\% & \cellcolor[rgb]{ 1,  .745,  .051}95.80\% \\
    \hline
    10\%  & \cellcolor[rgb]{ .573,  .816,  .314}97.63\% & \cellcolor[rgb]{ .439,  .678,  .278}97.77\% & \cellcolor[rgb]{ .776,  .878,  .706}97.14\% & \cellcolor[rgb]{ 1,  .851,  .4}96.88\% & \cellcolor[rgb]{ .573,  .816,  .314}97.60\% & \cellcolor[rgb]{ 1,  .514,  .384}40.91\% \\
    \hline
    20\%  & \cellcolor[rgb]{ 1,  .851,  .4}96.74\% & \cellcolor[rgb]{ .776,  .878,  .706}97.06\% & \cellcolor[rgb]{ 1,  .851,  .4}96.68\% & \cellcolor[rgb]{ 1,  .745,  .051}95.85\% & \cellcolor[rgb]{ .776,  .878,  .706}97.06\% & \cellcolor[rgb]{ 1,  .412,  .306}37.28\% \\
    \hline
    30\%  & \cellcolor[rgb]{ 1,  .745,  .051}95.99\% & \cellcolor[rgb]{ 1,  .745,  .051}95.40\% & \cellcolor[rgb]{ 1,  .745,  .051}95.17\% & \cellcolor[rgb]{ 1,  .745,  .051}95.03\% & \cellcolor[rgb]{ 1,  .745,  .051}95.77\% & \cellcolor[rgb]{ 1,  0,  0}32.65\% \\
    \hline
    \end{tabular}}%
    \caption{The accuracy of the modified testing set using different missing data rate ($r$). Here $N=4$.}
  \label{tab:pendigit_dropout_result}%
\end{table}%

\begin{figure}[!ht]
\centering
\includegraphics[width=0.5\linewidth]{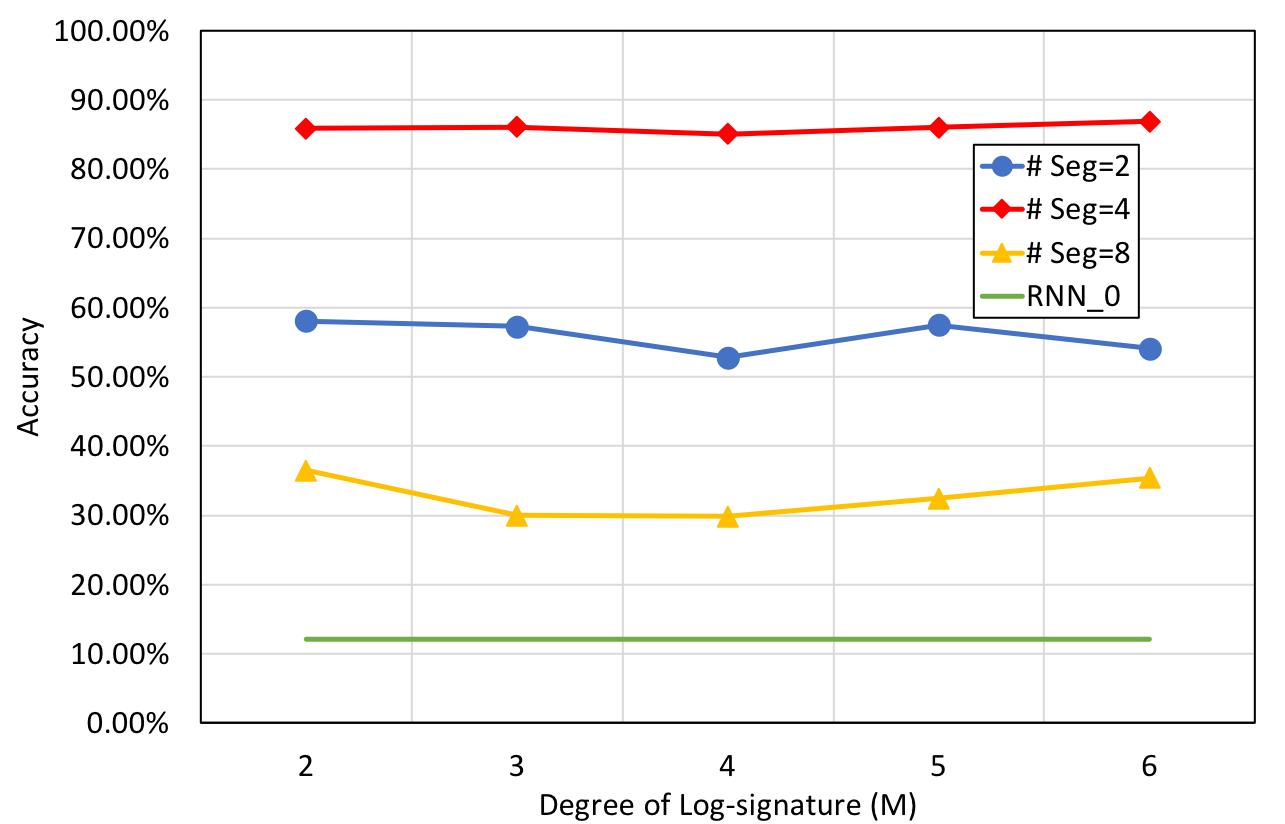}
   \caption{Validation of the trained models on the down-sampled dataset. The accuracy of RNN$_{\mathcal{D}}$ is below $13.5\%$}
\label{fig:pendigit_downsampling_result}
\end{figure}

\subsection{Action Recognition: NTU RGB+D 120 Data}
In this subsection, we provide the network architecture and implementation details of the LP-Logsig-RNN model for the NTU RGB+D 120 dataset. In Tabel~\ref{actionarche}, it displays the path transform layers is mainly composed with two Conv2D layers in \cite{zhu2016co} followed by one Conv1D layer. Before the LSTM layer, the starting frame of each segment of Conv1D output is incorporated into the output of Logsig layer. DropOut layer is applied after the LSTM layer with dropout rate 0.8.
The number of hidden neurons of LSTM layer is 256.
\subsection{Gesture Recognition: Chalearn 2013}
We provide the implementation details of the PT-Logsig-RNN model depicted in Figure~\ref{LP_Logsig_RNN} for the Chalearn 2013 data. The skeletons are pre-processed by first subtracting the central joint, which is the average position of all joints in one sample. Then we normalize the data and sample all clips to 39 frames by linear interpolation and uniform sampling. The path transform layers are composed of two Conv2D layers followed by a Conv1D layer, a Time-incorporated layer and an Accumulative Layer followed by the log-signature transformation with $d'=30$. We add DropOut layer to both of the embedding layer and the LSTM layer to avoid over-fitting, where the two dropout rate are $0.3$ and $0.5$ resp. The number of hidden neurons of LSTM layer is $128$. To make fair comparison, a DropOut layer is added to the benchmark RNN$_0$.
Three methods of data augmentation are used in the experiments. The first one is rotating coordinates along $x,y,z$ axis in range of $[-\pi/36,\pi/36]$, $[-\pi/18,\pi/18]$ and $[-\pi/36,\pi/36]$ respectively. The second one is randomly shifting the frame temporally in range of $[-5,5]$. The last one is adding Gaussian noise with standard deviation $0.001$ to joints coordinates.

\begin{table}[t]
\centering
\scalebox{0.75}{
\begin{tabular}{l|l|l}
\hline
\textbf{Layer} & \textbf{Output shape} & \textbf{Discription}\\
\hline
Input & $(72, 25, 6)$ & \\
Conv2D & $(72, 25, 32)$ & Kernel size=$1\times 1\times 32$ \\
Conv2D & $(68, 25, 16)$ & Kernel size=$5\times 1\times 16$\\
Conv1D & $(68, 40)$ & Kernel size=$1\times 400\times 40$\\
Logsig & $(32, 820)$ & $M=2, N=4$\\
Add Starting Points & $(32,860)$ & Starting points of Conv1D output\\
LSTM & $(32,256)$ & Return sequential output \\
Output & $120$ &\\
\hline
\end{tabular}}
\caption{Architecture of the action recognition model}
\label{actionarche}
\end{table}

\section{The Universality of Controlled Differential Equations}
\begin{definition}
Let $Y_{t}$ be the solution to the following equation
\begin{eqnarray}\label{difEquation}
dY_{t} = V(Y_{t}) dX_{t}, Y_{0} = y_{0},
\end{eqnarray}
where $X: [0, T] \rightarrow = E$ of finite $1$-variation, and $V \in \mathcal{C}_{b}^{\infty}$ and $Y: [0, T] \rightarrow W:= \mathbb{R}^{o} $.\\
Let $I_{V}$ denote the solution map, i.e. for any $X$ is in the admissible set,
\begin{eqnarray*}
I_{V}: (X, y_{0}) \mapsto Y.
\end{eqnarray*}
\end{definition}
\begin{theorem}\label{ControlEqnThm}
The linear functional on the solution map on the path space defined in Equation (\ref{difEquation}) restricted to the linear vector fields forms an algebra with componentwise multiplication. 
\end{theorem}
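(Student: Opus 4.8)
The plan is to show that the set
\[
\mathcal{A}:=\big\{\, X\mapsto L\big(I_{V}(X,y_{0})\big)\ :\ W=\mathbb{R}^{o}\ \text{varying},\ V\ \text{linear},\ y_{0}\in W,\ L\in W^{*}\,\big\}
\]
of real-valued functions on the path space is a commutative unital algebra under pointwise addition and multiplication (here $L(I_{V}(X,y_{0}))$ may be read as $L(Y_{T})$, or $L(Y_{t})$ for any fixed $t$, or $L$ applied to the whole solution path — the argument is insensitive to this). A \emph{linear vector field} is $V(y)\,dx=\big(\sum_{i=1}^{d}A_{i}y\big)\,dx^{(i)}$ with $A_{i}\in L(W,W)$, so that by the classical Stieltjes existence/uniqueness theory for equations driven by paths of finite $1$-variation the solution map is well defined, continuous in $X$, and linear in $y_{0}$. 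The core of the proof is that linearity of the vector field is preserved under the two state-space operations one needs: direct sum (for linear combinations) and tensor product (for products).

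First I would handle constants and linear combinations. Taking $W=\mathbb{R}$, $V\equiv 0$, $y_{0}=c$ gives $I_{V}(X,c)\equiv c$, so $\mathcal{A}\supseteq\mathbb{R}$. Given $Y^{(j)}=I_{V_{j}}(X,y_{0}^{j})$ in $W_{j}$ for $j=1,2$, form $W_{1}\oplus W_{2}$ with the block-diagonal vector field $\widetilde V(y_{1},y_{2})=(V_{1}(y_{1}),V_{2}(y_{2}))$, which is again linear, initial value $(y_{0}^{1},y_{0}^{2})$, and functional $(y_{1},y_{2})\mapsto \alpha L_{1}(y_{1})+\beta L_{2}(y_{2})$; its solution is $(Y^{(1)},Y^{(2)})$, so $\alpha L_{1}(Y^{(1)}_{T})+\beta L_{2}(Y^{(2)}_{T})\in\mathcal{A}$. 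Hence $\mathcal{A}$ is a vector space containing $\mathbb{R}$.

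Next, closure under multiplication — the heart of the statement. With $Y^{(1)},Y^{(2)}$ as above (vector fields $\sum_iA_iy\,dx^{(i)}$ and $\sum_iB_iy\,dx^{(i)}$), set $P_{t}:=Y^{(1)}_{t}\otimes Y^{(2)}_{t}\in W_{1}\otimes W_{2}$. Since $X$ has finite $1$-variation, the Riemann–Stieltjes product rule gives
\[
dP_{t}=\big(dY^{(1)}_{t}\big)\otimes Y^{(2)}_{t}+Y^{(1)}_{t}\otimes\big(dY^{(2)}_{t}\big)=\Big(\textstyle\sum_{i}\big(A_{i}\otimes\mathrm{Id}+\mathrm{Id}\otimes B_{i}\big)P_{t}\Big)\,dX^{(i)}_{t},
\]
which is once more a \emph{linear} CDE on $W_{1}\otimes W_{2}$ with initial value $y_{0}^{1}\otimes y_{0}^{2}$. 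Therefore $L_{1}(Y^{(1)}_{T})\,L_{2}(Y^{(2)}_{T})=(L_{1}\otimes L_{2})(P_{T})\in\mathcal{A}$, and together with the previous step $\mathcal{A}$ is an algebra; commutativity, associativity and the unit are inherited from $\mathbb{R}$. Fixing bases identifies $L_{1}\otimes L_{2}$ with a coordinate functional, so the multiplication is literally componentwise in the tensor-product state space. As an independent check one can route through the signature: the tensor-algebra path $\mathbb{X}_{t}:=S(X)_{[0,t]}$ solves $d\mathbb{X}_{t}=\mathbb{X}_{t}\otimes dX_{t}$, so by variation of parameters every coordinate of every linear-CDE solution is a linear functional on $S(X)$ and conversely; thus $\mathcal{A}$ equals the algebra of linear functionals on the signature, whose closure under products is exactly the Shuffle Algebra theorem quoted above (valid since $p=1<2$), with the shuffle product of words realizing the multiplication.

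The only genuinely non-formal point — hence the expected main obstacle — is justifying $d(Y^{(1)}_{t}\otimes Y^{(2)}_{t})=dY^{(1)}_{t}\otimes Y^{(2)}_{t}+Y^{(1)}_{t}\otimes dY^{(2)}_{t}$ and checking the induced vector field on $W_{1}\otimes W_{2}$ stays linear (so that it is admissible and the class is closed). For $X$ of finite $1$-variation this is standard integration-by-parts, but if one wants the statement for genuinely rough driving signals the product rule must be replaced by the rough-path Itô/Leibniz formula for the joint system $(Y^{(1)},Y^{(2)})$, where the higher-level (area) components of $X$ enter the bookkeeping. A secondary, purely notational, care point is tracking which coordinate of $P_{T}$ corresponds to a prescribed product $L_{1}(Y^{(1)}_{T})L_{2}(Y^{(2)}_{T})$ so that "componentwise multiplication" holds in a fixed basis on the nose.
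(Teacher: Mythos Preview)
Your proposal is correct and follows essentially the same route as the paper: the key step in both is to show that if $Y^{(1)}$ and $Y^{(2)}$ solve linear controlled equations driven by $X$, then the product rule yields a linear controlled equation for $Y^{(1)}\otimes Y^{(2)}$ on the tensor-product state space, so that products of linear functionals on solutions are again of the same form. Your write-up is in fact more complete than the paper's, which treats only the multiplicative closure; you additionally spell out the unit, the direct-sum construction for linear combinations, and the explicit form $\sum_i(A_i\otimes\mathrm{Id}+\mathrm{Id}\otimes B_i)$ of the induced vector field, and you give the independent signature/shuffle-product check.
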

\begin{proof}
Suppose that $Y$ and $\tilde{Y}$ be the solution to the following equation driven by $X$, i.e.
Let $Y_{t}$ be the solution to the following equation
\begin{eqnarray}\label{difEquation}
&&dY_{t} = L(Y_{t}) dX_{t}, Y_{0} = y_{0}\nonumber\\
&&d\tilde{Y}_{t} = \tilde{L}(\tilde{Y}_{t}) dX_{t}, \tilde{Y}_{0} = \tilde{y}_{0},
\end{eqnarray}
where $L$ and $\tilde{L}$ are two linear vector fields.

For any two basis $w_{i}^{*}$ and $w_{j}^{*}$ of $W^{*}$, 
\begin{eqnarray}
\langle w_{i}^{*}, Y_{t}\rangle \langle w_{j}^{*}, \tilde{Y}_{t}\rangle=Y^{(i)}_{t}\tilde{Y}_{t}^{(j)}
\end{eqnarray}
and thus
\begin{eqnarray}
&& d (Y^{(i)}_{t}\tilde{Y}_{t}^{(j)}) = Y^{(i)}_{t}d\tilde{Y}_{t}^{(j)} + \tilde{Y}_{t}^{(j)}dY^{(i)}_{t}\\
& = & \langle w_{j}^{*},  Y^{(i)}_{t} \tilde{L}(\tilde{Y}_{t})dX_{t}\rangle + \langle w_{i}^{*},  \tilde{L}^{(j)}_{t} L(Y_{t})dX_{t}\rangle\nonumber,
\end{eqnarray}
which can be rewritten as follows: there exist $l_{1}, l_{2} \in L(W, L(E, W))$, such that
\begin{eqnarray*}
d (Y^{(i)}_{t}\tilde{Y}_{t}^{(j)}) =\langle w_{j}^{*}, l_{1}( \tilde{Y}_{t}\otimes Y_{t}) dX_{t}\rangle + \langle w_{i}^{*}, l_{2}( \tilde{Y}_{t}\otimes Y_{t}) dX_{t}\rangle. 
\end{eqnarray*}
Therefore,  $\langle w_{i}^{*}, Y_{t}\rangle \langle w_{j}^{*}, \tilde{Y}_{t}\rangle$ can be rewritten as a linear functional on $Z_{t} = Y_{t} \otimes \tilde{Y}_{t}$, which is the solution to a linear controlled differential equation driven by $X$, i.e.
\begin{eqnarray*}
dZ_{t} = L_{Y, \tilde{Y}}(Z_{t})dX_{t},
\end{eqnarray*}
where $L_{Y, \tilde{Y}}: W \otimes W \rightarrow L(E, W \otimes W)$.
\end{proof}
Let $\mathcal{C}_{1}(V_{1}(J, E), W)$ denote the space of continuous functionals on $V_{1}(J, E)$ taking values in $W$, which are invariant w.r.t time parameterization. 
\begin{theorem}[Universality of the linear controlled differential equations]
The linear functionals on the solution map on the path space defined in Equation (\ref{difEquation}) are dense in the space $\mathcal{C}_{1}(V_{1}(J, E), W)$.
\end{theorem}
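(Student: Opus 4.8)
The plan is to run a Stone--Weierstrass argument on the algebra generated by linear functionals of solutions of linear controlled differential equations, using the preceding theorem (Theorem~\ref{ControlEqnThm}) for closure under products and the uniqueness of the signature for separation of points. First I would make two reductions. Since $W = \mathbb{R}^{o}$, a target functional $F \in \mathcal{C}_{1}(V_{1}(J,E),W)$ is approximated coordinatewise, so it suffices to treat the scalar case; and since $V_{1}(J,E)$ is not compact, I would interpret density as uniform approximation on an arbitrary fixed compact set $K \subseteq V_{1}(J,E)$, noting that a continuous, time-reparameterization-invariant functional of a bounded-variation path factors through the tree-like (equivalently signature) equivalence, so that separating signatures is enough. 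Let $\mathcal{A}$ be the set of maps $X \mapsto \langle w^{*}, Y_{T}\rangle$ where $Y$ solves $dY_{t} = L(Y_{t})\,dX_{t}$, $Y_{0}=y_{0}$, for some linear vector field $L$, some $y_{0}$, and some linear $w^{*}$; each such map is reparameterization invariant (Riemann--Stieltjes integrals are reparameterization invariant), so $\mathcal{A}$ sits inside the target space.

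Next I would check the Stone--Weierstrass hypotheses for $\mathcal{A}|_{K}$. Linearity of $\mathcal{A}$ is immediate (take block-diagonal vector fields on direct sums), and constants lie in $\mathcal{A}$: with $L \equiv 0$ one gets $Y_{T}=y_{0}$, so the functional is $\langle w^{*}, y_{0}\rangle$. Closure under pointwise multiplication is exactly Theorem~\ref{ControlEqnThm}: if $Y,\tilde{Y}$ solve linear CDEs driven by $X$, then $Z = Y\otimes\tilde{Y}$ solves a linear CDE driven by $X$, and $\langle w_{i}^{*},Y_{t}\rangle\langle w_{j}^{*},\tilde{Y}_{t}\rangle$ is a linear functional of $Z_{t}$. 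For separation, I would use the Chen--Picard series: taking the truncated tensor algebra $W = T^{(n)}(E)$ with the linear vector field $a \mapsto (\xi \mapsto a\otimes\xi)$ and $y_{0}=1$, the solution at time $T$ is exactly the truncated signature $S_{n}(X)_{J}$, so every coordinate iterated integral $\mathbf{X}^{I}_{J}$ lies in $\mathcal{A}$. Hence if two elements of $K$ are not tree-like equivalent, then by uniqueness of the signature (Theorem~\ref{UniquenessOfSig} and its finite-length version) their signatures differ, so some $\mathbf{X}^{I}_{J}\in\mathcal{A}$ separates them. Stone--Weierstrass then gives that $\mathcal{A}$ is dense in the continuous functions on the compact quotient of $K$, and combined with the reductions above this proves the theorem.

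I expect the main obstacle to be not the algebra but the analytic/topological setup: one must fix a topology on $V_{1}(J,E)$ that makes $\mathcal{C}_{1}(V_{1}(J,E),W)$ a space of honestly continuous reparameterization-invariant functionals, verify that such functionals indeed depend only on the tree-like equivalence class (equivalently, on the signature) so that point separation suffices, and handle non-compactness via a compact exhaustion. A secondary technical step is justifying the Chen--Picard representation of a linear CDE solution as a convergent linear image of $S(X)$ under the $\mathcal{C}_{b}^{\infty}$ and finite $1$-variation assumptions, so that truncating the series identifies $S_{n}(X)_{J}$ as an element of $\mathcal{A}$ rather than merely a limit of such elements. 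Once these are in place, the Stone--Weierstrass core of the argument is routine.
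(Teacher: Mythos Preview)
Your proposal is correct and follows the same strategy as the paper: apply Stone--Weierstrass, using Theorem~\ref{ControlEqnThm} for closure under products and the trivial vector field for constants. In fact your write-up is more complete than the paper's, which invokes Stone--Weierstrass after checking only constants and the algebra property; you explicitly supply the point-separation step (via the signature as the solution of a linear CDE together with Theorem~\ref{UniquenessOfSig}) and address the compactness and reparameterization-invariance issues, all of which the paper leaves implicit.
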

\begin{proof}
Let $L$ be a trivial vector field and $y_{0} = 1$. Then $I_{V}\equiv 1$. By Theorem \ref{ControlEqnThm}, the linear functions on the solution map form an algebra under the multiplication. By the Stone-Weierstrass Theorem, the proof is complete.   
\end{proof}
The set of the solution map driven by potential non-linear vector fields include all the solution driven by linear vector fields. Therefore we establish the universality of linear functionals on solution to controlled differential equations provided vector fields under the regularity condition. 
\end{appendices}
\bibliographystyle{ieee}
\bibliography{myref}{}

\end{document}